\newif\ifshownotes
\newif\ifarxiv
\theoremstyle{plain}
\newtheorem{theorem}{Theorem}[section]
\newtheorem{proposition}[theorem]{Proposition}
\newtheorem{lemma}[theorem]{Lemma}
\theoremstyle{definition}
\theoremstyle{remark}
\newtheorem{remark}[theorem]{Remark}
\icmltitlerunning{Fast rates for noisy interpolation requires
rethinking the effect of inductive bias}
\begin{document}

\twocolumn[
\icmltitle{Fast Rates for Noisy Interpolation Require Rethinking the Effects of Inductive Bias}

\icmlsetsymbol{equal}{*}

\begin{icmlauthorlist}
\icmlauthor{Konstantin Donhauser}{dinfk,aicenter}
\icmlauthor{Nicol\`o Ruggeri}{dinfk,mpi}
\icmlauthor{Stefan Stojanovic}{eth}
\icmlauthor{Fanny Yang}{dinfk}
\end{icmlauthorlist}

\icmlaffiliation{eth}{ETH Zurich}
\icmlaffiliation{dinfk}{ETH Zurich, Department of Computer Science}
\icmlaffiliation{aicenter}{ETH AI Center}
\icmlaffiliation{mpi}{Max-Planck-Institute for Intelligent Systems, T\"ubingen, Germany}

\icmlcorrespondingauthor{Konstantin Donhauser}{konstantin.donhauser@ai.ethz.ch}

\icmlkeywords{Machine Learning, ICML}

\vskip 0.3in]

% this must go after the closing bracket ] following \twocolumn[ ...

% This command actually creates the footnote in the first column
% listing the affiliations and the copyright notice.
% The command takes one argument, which is text to display at the start of the footnote.
% The \icmlEqualContribution command is standard text for equal contribution.
% Remove it (just {}) if you do not need this facility.

\printAffiliationsAndNotice{}  % leave blank if no need to mention equal contribution
%\printAffiliationsAndNotice{\icmlEqualContribution} % otherwise use the standard text.

\begin{abstract}
Good generalization performance on high-dimensional data crucially hinges on a simple structure of the ground truth and a corresponding strong inductive bias of the estimator. Even though this intuition is valid for regularized models, in this paper we caution against a strong inductive bias for interpolation in the presence of noise: While a stronger inductive bias encourages a simpler structure that is more aligned  with the ground truth, it also increases the detrimental effect of noise. Specifically, for both linear regression and classification with a sparse ground truth, we prove that minimum $\ell_p$-norm and maximum $\ell_p$-margin interpolators achieve fast polynomial rates close to order $1/n$ for $p > 1$ compared to a logarithmic rate for $p = 1$. Finally, we provide preliminary experimental evidence that this trade-off may also play a crucial role in understanding non-linear interpolating models used in practice. 
\end{abstract}

\section{Introduction}
Despite being extremely overparameterized, large complex models such as deep convolutional neural networks generalize surprisingly well, even when interpolating noisy training data. 
If the noise in the training data is low,  a natural explanation could be that these models are biased towards having a certain structural simplicity. %assumptions on 
%or this observation is that these models have a strong \emph{inductive bias} towards an inherently ``simple'' structure that matches the ground truth.
% Mathematically speaking, structural simplicity is often captured by a small norm. 
%Such structure a machine learning model 
%is often characterized by a norm.
%which captures the underlying structure of the model. 
%For example, for regularized empirical risk minimizers, this is simply the norm of the corresponding regularization penalty. 
% Moreover, 
%norms characterize the implicit bias of optimization algorithms on neural network architectures, 
For example, in deep learning theory, a long line of work studies the \emph{implicit bias} of standard optimization algorithms towards solutions with a small structured norm, 
see e.g. \cite{telgarski_gradient_descent, Lyu2020Gradient, soudry18, chizat2020implicit,arora2019exact,jacot2018neural}.
If the optimal prediction model also has a small corresponding norm, then, intuitively, the implicit bias effectively reduces the search space
to a "good" subset that includes a good model.

Even though this intuition is valid in the low-noise regime, it is unclear why the generalization error might stay low when the (structured) models are forced to interpolate non-negligible noise in the data.
Further, it is hard to mathematically characterize structural simplicity for complex prediction models, and theoretical analysis is difficult.
%Even though this intuition is valid in the low-noise regime, it is  unclear why the generalization error might stay low when the (structured) models are forced to interpolate noise in the data.
%these models generalize well even when forced to interpolate a considerable amount of noise in the data.

Interestingly, some of the fundamental phenomena revolving around the generalization behavior of interpolating complex overparameterized models even occur for high-dimensional linear models (see e.g., \cite{hastie2019surprises,muthukumar_2020,bartlett_2020,deng21}  and references therein). Although the latter are significantly simpler to analyze, 
the literature has yet to %paint a complete picture 
provide a comprehensive theoretical understanding for interpolating models as it exists for example for regularized estimators. This paper aims to take an important step in that direction.
%existing theoretical explanations  \fy{paint an incomplete picture} despite recent advancement. 
%With this paper, we aim to enhance our intuitive understanding for interpolating models in the hope that the intuition can aid the complex overparameterized models. 
%this line of research in the hope to develop new intuition for interpolating linear models that may also transfer to 
%gain intuition for linear models  also transfers to 
%complex overparameterized models. 

As mentioned above, high-dimensional parametric models with structural simplicity may correspond to parameter vectors with a small particular norm.
%be modeled by a norm that captures a certain structure.
For linear models, the common structural simplicity assumption is sparsity, induced by the $\ell_0$/$\ell_1$-norms -- the concrete example we focus on in this paper.
%we focus on sparse linear regression as a concrete example.
%sparsity inducing $\ell_0$/$\ell_1$ norms
%or an RKHS norm for structured kernels such as convolutional/pooled kernels \cite{arora2019exact}.
We say that estimators with small $\ell_0$/$\ell_1$-norm 
%the corresponding model 
have a \emph{strong inductive bias} towards ``simple'', in this case sparse, solutions. 
%We contrast such \emph{structured} norms \fy{hae, what does structured norm mean? do we actually use this term?}
In contrast, the more frequently studied rotationally invariant $\ell_2$-norm
%or RKHS norm of rotationally invariant kernels. 
%These rotationally invariant norms 
uniformly shrinks the estimator in all directions, thereby inducing only weak to no inductive bias towards sparse solutions.

% Although estimators with a strong inductive bias can yield fast minimax optimal rates when the dimension $d$ 
% grows with the sample size $n$,
% their good performance crucially relies on sacrificing data fit via regularization  (see e.g. $\ell_1$-norm regularized estimators for regression \cite{tibshirani_1996,vandegeer_2008} and classification \cite{Zhang2014EfficientAF}.
%kernel ridge regression \cite{misiakiewicz2021learning} and convex neural networks \cite{bach2017breaking}).
%Instead, state-of-the-art overparameterized models are often structured interpolators and generalize well even though trained to perfectly fit noisy data. 
%Aiming to gain a better intuition on this phenomenon, in this paper we discuss how the inductive bias affects the generalization 
%capabilities of interpolating models % performance
%in the presence of noise.

For noiseless interpolation, it is well-known that the min-$\ell_1$-norm interpolator (aka basis pursuit) yields exact recovery for sparse ground truths \cite{chen_1998}. Moreover, when the measurements are noisy,
its regularized variant, the LASSO \cite{tibshirani_1996}, achieves minimax optimal rates of order $\frac{s\log(d)}{n}$
\cite{vandegeer_2008} (where $s$ is the $\ell_0$-norm of the ground truth).
However, when forcing structured models to interpolate the noisy samples, it is unclear why the generalization error might stay low. In fact, the min-$\ell_1$-norm interpolator achieves rates of order $\frac{1}{\log(d/n)}$ \cite{wang2021tight,muthukumar_2020}, suggesting that models with strong inductive biases suffer heavily from noise.

% It is  unclear why the generalization error might stay low when the (structured) models are forced to interpolate noise in the data.However, when forcing the model to  interpolate the noisy samples, the min-$\ell_1$-norm interpolator achieves rates of order $\frac{1}{\log(d/n)}$ \cite{wang2021tight,muthukumar_2020}. This reasons that models with strong inductive biases suffer heavily from noise.

On the other hand, a long line of work establishes how 
min-$\ell_2$-norm interpolators with a weak inductive bias  benefit from noise resilience in high dimensions
(see e.g., \cite{hastie2019surprises,bartlett_2020,muthukumar_2020,muthukumar2021classification} and references therein). 
However, these uniform shrinkage estimators do not encode structural assumptions on the ground truth --- thus
fail to learn the signal  for
inherently high-dimensional covariates such as isotropic Gaussians (see e.g., \cite{muthukumar_2020}). 

The apparent trade-off between structural simplicity and noise resilience in high dimensions raises a natural question:
\begin{center}
\emph{Can min-norm interpolators in noisy high dimensional settings achieve fast or even close to minimax optimal rates using a moderate inductive bias?}
\end{center}

To the best of our knowledge, in this paper, we first provide a positive answer for sparse linear models. Specifically, we bound the (directional) estimation error of min-$\ell_p$-norm interpolators (for regression) and max-$\ell_p$-margin interpolators (for classification).
For isotropic Gaussian features with dimension $d \asymp n^\beta$ for $\beta>1$ and in the presence of observation noise in the data, 
\begin{itemize}
    \item we provide upper and lower bounds  for
    min-$\ell_p$-norm interpolators with ${p\in(1,2)}$. For large enough $d,n$, the estimation error decays at polynomial rates close to order $1/n$ compared to logarithmic or constant rates for $p=1$ or $2$ (Section \ref{sec:reg}). \vspace{-0.05in}
    \item we further provide upper bounds for the %risk%\footnotemark[1] 
     max-$\ell_p$-margin interpolators (or equivalently, hard $\ell_p$-margin SVM) with ${p\in(1,2)}$. Surprisingly, for large enough $d,n$, they even match minimax optimal rates  of order $1/n$ up to logarithmic factors in the regime $\beta \geq 2$  (Section~\ref{sec:cla}). 
\end{itemize}

We confirm the better generalization properties for the choice $p\in (1,2)$ compared to $p=1$ and $p=2$ on synthetic and real-world data in Section~\ref{sec:discussion}.
Our results on linear models suggest that when interpolating noisy data, a moderate inductive bias yields the optimal performance. Additional experiments with convolutional neural tangent kernels in Section~\ref{sec:trade-off} provide  preliminary evidence that this intuition may also extend to non-linear models, prompting an exciting line of future work. In particular, we hypothesize that this trade-off between
structural simplicity and noise resilience may be an important ingredient for understanding the good generalization capabilities of overparameterized interpolating models used in practice.

%\subfile{sections/setting}
%\subfile{sections/main_results}

\section{Minimum-norm Interpolators for Regression}
\label{sec:reg}
In this section we derive non-asymptotic bounds for sparse linear regression for the estimation error of min-$\ell_p$-norm interpolators with $p\in(1,2)$. We describe the setting in Section~\ref{subsec:regsetting} and present the main theorem followed by a discussion in Section~\ref{subsec:regmain}.

% Introduction: we study sparse linear regression. for sparse lin reg regularized /noiseless clearly l1 is minimax optimal, l2 has no chance even in noiseless case. however recent work shows a slow rate (lower bound). In this section we show we can do significantly better than l1 in Theorem bla. In fact, we give tight bounds for lp interpolation (in some regimes). Can even achieve minimax optimal rate  The section is structured as follows:
% Introduce Setting, present main theorem and discuss. 

\subsection{Setting}
\label{subsec:regsetting}
We study a standard linear regression model where we observe $n$ pairs of standard normal distributed features $x_i \overset{\iid}{\sim} \NNN(0,I_d)$ and observations $y_i = \langle \xui, \wstar \rangle + \xi_i$ with Gaussian noise $\xi_i \overset{\iid}{\sim} \NNN(0,\sigma^2)$.  For simplicity,  
we consider the 1-sparse ground truth $\wgt = (1,0,\cdots,0)$ and discuss in Section~\ref{rm:p} how to generalize this assumption. 
Given the data set $\{\left(\xui, \yui\right)\}_{i=1}^n$, the goal is to find an estimator $\what$ that has small estimation error
\begin{equation}
    \label{eq:prediction_error_reg}
 \RiskR(\what) := \norm{\what -\wgt}_2^2 = \EE_{x\sim \NNN(0,I_d)} \langle x, \what - \wgt \rangle^2    \, ,
\end{equation}
which also corresponds to the irreducible prediction error. We specifically study \emph{min-$\ell_p$-norm interpolators} with $p\in (1,2)$, given by
      \begin{equation}\label{eq:minlpnomrinterpol}
    \what = \argmin_{w} \norm{w}_p \subjto \forall i:~\langle \xui, w \rangle = y_i, 
\end{equation}
where $\|w\|_p := \left(\sum_i w_i^p\right)^{1/p}$. 
%For any $p \geq 1$, Equation~\eqref{eq:minlpnomrinterpol} is a strictly convex optimization problem and thus has a unique solution. 

\subsection{Main Result}
\label{subsec:regmain}
We now state our main result for regression that provides a  non-asymptotic upper bound for the estimation error of min-$\ell_p$-norm interpolators.\footnote{
We use $\lesssim,\gtrsim$ and $\asymp$ to hide universal constants, without any hidden dependence on $d$, $n$ or $p$. Further, $\tilde{O}(\cdot),\tilde{\Theta}(.)$ hide logarithmic factors in $d,n$ or $p$ and $a \lor b = \max(a,b)$.
}
\begin{theorem}
\label{thm:regressionlpmain}
Let the data distribution be as described in Section~\ref{subsec:regsetting} and assume that $\sigma \asymp  1 $. Further, let $q$ be such that $\frac{1}{p} + \frac{1}{q} =1$.  Then there exist universal constants $\kappa_1,\cdots,\kappa_7>0$ such that for any $n\geq \kappa_1$ and any ${p \in\left(1+\frac{\kappa_2}{\log\log d},2\right)}$ and ${n \log^{\kappa_3} n \lesssim d \lesssim n^{q/2}\log^{-\kappa_4 q}n}$, the estimation error of the min-$\ell_p$-norm interpolator~\eqref{eq:minlpnomrinterpol} satisfies
% There exist universal constants $ c, c_n, c_l, c_u > 0 $ such that, if $ n\geq c_n $ and $ c_l n \leq d \leq \exp \left( c_u n^{1/5} \right) $, then
\ifarxiv
\begin{align}
\label{eq:thmregboundhighnoiseuppermain}
      &\frac{\sigma^{4-2p}q^pd^{2p-2}}{ n^p} 
    \lor \frac{\sigma^2n}{d} \lesssim
    \RiskR\left( \what \right) \lesssim  \frac{\sigma^{4-2p}q^p d^{2p-2}}{n^p} 
    \lor \frac{\sigma^2n  \exp(\kappa_5 q)}{q d} \, ,
\end{align}
\else
\begin{align}
\label{eq:thmregboundhighnoiseuppermain}
      &\RiskR\left( \what \right) \gtrsim \frac{\sigma^{4-2p}q^pd^{2p-2}}{ n^p} 
    \lor \frac{\sigma^2n}{d} ~~\mathrm{and}\\
    &\RiskR\left( \what \right) \lesssim  \frac{\sigma^{4-2p}q^p d^{2p-2}}{n^p} 
    \lor \frac{\sigma^2n  \exp(\kappa_5 q)}{q d} \, , \nonumber
\end{align}
\fi
with probability at least $ 1 - \kappa_6 d^{-\kappa_7}$ over the draws of the data set. 
\end{theorem}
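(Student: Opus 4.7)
My plan is to apply the Convex Gaussian Min-Max Theorem (CGMT) to reduce the high-dimensional min-$\ell_p$ interpolation problem to a scalar auxiliary optimization, solve it via sharp Gaussian moment estimates, and transfer the result back to the primal. Writing $\Delta = \what - \wstar$, the interpolation constraint reads $X\Delta = \xi$ and the estimator solves $\min_\Delta \|\wstar + \Delta\|_p$ subject to $X\Delta = \xi$; by Lagrangian duality this equals $\min_\Delta \max_u\; \|\wstar + \Delta\|_p + u^\top X\Delta - u^\top \xi$. After localizing $(\Delta, u)$ to compact convex sets -- justified by a priori bounds obtained from simple feasible candidates -- CGMT replaces the bilinear form $u^\top X \Delta$ by the Gaussian surrogate $\|\Delta\|_2\, g^\top u + \|u\|_2\, h^\top \Delta$, with independent $g \sim \NNN(0,I_n)$ and $h \sim \NNN(0,I_d)$. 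Maximizing over the direction of $u$ and then over $\|u\|_2 \geq 0$ reduces the auxiliary problem to
\begin{equation*}
\min_\Delta\; \|\wstar + \Delta\|_p \quad \text{s.t.} \quad \bigl\| \|\Delta\|_2\, g - \xi \bigr\|_2 \leq -h^\top \Delta .
\end{equation*}

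Next, decompose $\Delta = \delta e_1 + \Delta_\perp$ with $\Delta_\perp$ supported on coordinates $2,\ldots,d$, and $h = (h_1, h_\perp)$ analogously. For fixed $(\delta, \rho := \|\Delta_\perp\|_2)$, the optimal direction of $\Delta_\perp$ minimizes $\|\Delta_\perp\|_p$ for a prescribed value of $h_\perp^\top \Delta_\perp$; Hölder's inequality together with the extremal characterization of the $\ell_p$ ball expresses this minimum through the dual norm $\|h_\perp\|_q$, reducing the problem to an explicit two-dimensional optimization in $(\delta, \rho)$. Substituting the concentration estimates $\|g\|_2^2 \asymp n$, $\|\xi\|_2^2 \asymp \sigma^2 n$ and the sharp moment bound $\|h_\perp\|_q \asymp d^{1/q} q^{1/2}$ -- valid in the hypothesized regime $q \lesssim \log d$, equivalent to $p > 1 + \kappa_2/\log\log d$ -- and solving, yields two regimes: a ``structured'' regime in which $\Delta_\perp$ spreads nearly uniformly across the $d-1$ null coordinates, contributing the rate $\sigma^{4-2p} q^p d^{2p-2}/n^p$, and an isotropic regime mimicking the min-$\ell_2$ interpolator, contributing $\sigma^2 n/d$; the maximum of the two dominates.

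\textbf{Transfer and main obstacle.} Invoking CGMT in both directions transfers these estimates back to the primal, yielding both the upper bound in~\eqref{eq:thmregboundhighnoiseuppermain} and the matching lower bound (after exhibiting an explicit extremizer realizing the auxiliary optimum). The main technical obstacle lies in the scalar reduction combined with the moment estimates: the factor $q^p$ traces back to $\EE[|Z|^q]^{1/q} \asymp q^{1/2}$ for $Z \sim \NNN(0,1)$, and controlling it sharply across $p \in (1+\kappa_2/\log\log d,\, 2)$ is delicate because, as $p \downarrow 1$, these moments blow up and $\|h_\perp\|_q$ degenerates toward $\|h_\perp\|_\infty$. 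A related subtlety is the joint constraint on $(d,n,p)$: the upper bound $d \lesssim n^{q/2}\log^{-\kappa_4 q} n$ is precisely what keeps the structured regime non-trivial, and the slack factor $\exp(\kappa_5 q)$ in the upper bound reflects the residual gap between two-sided Gaussian width estimates at the $\ell_p$-ball boundary.
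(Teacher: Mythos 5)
Your plan gets several ingredients right --- the CGMT, the decomposition $\Delta=\delta e_1 + \Delta_\perp$, Hölder duality against the $\ell_q$-norm of a Gaussian vector, and the moment estimate $\mathbb{E}\|h_\perp\|_q \asymp \sqrt{q}\,d^{1/q}$ that produces the $q^p$ factor --- and these all appear in the paper. But there is a genuine gap in the transfer step. The CGMT controls the optimal \emph{value} of the primal program; here that value is $\min_{X\Delta=\xi}\|\wstar+\Delta\|_p$. It does not, on its own, control the \emph{location} of the minimizer, and in particular it does not say that the primal minimizer realizes the same $(\delta,\rho)$ as whichever $(\delta,\rho)$ attains the auxiliary optimum. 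Your phrase ``after exhibiting an explicit extremizer realizing the auxiliary optimum'' is exactly the step that is missing: exhibiting an auxiliary extremizer gives a certificate for the auxiliary value, not for where $\what$ sits.

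The paper closes this gap with a two-stage ``localize then uniformly converge'' argument involving two \emph{separate} Gaussian-comparison applications with \emph{different objectives}. Stage one (Proposition~\ref{prop:phidnhighreg}) applies the CGMT to $\Phi_N := \min_{X(w-\wstar)=\xi}\|w\|_p^p$, yielding a high-probability bound $\|\what\|_p^p\le M$. This is essentially what your single CGMT application gives you. Stage two (Proposition~\ref{prop:highnoiselpreg}) applies the (non-convex) GMT to
\begin{equation*}
\Phi_{\pm} := \max\!/\!\min_{\substack{\|w\|_p^p\le M\\ X(w-\wstar)=\xi}} \|w-\wstar\|_2^2,
\end{equation*}
whose objective is the squared estimation error itself, not the $\ell_p$-norm. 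Because $\what$ is feasible for both $\Phi_+$ and $\Phi_-$, the risk is trapped between them, and only after this second comparison can one conclude anything about $\|\what-\wstar\|_2^2$. Note also that stage two is not convex--concave (it is a maximization of a convex quantity), so the stronger two-sided CGMT does not apply there; only the one-sided GMT does. Your outline produces stage one but omits stage two entirely, so the claimed transfer to the estimation error is unjustified. Separately, the term $\gtrsim\sigma^2 n/d$ in the lower bound is not derived from the (C)GMT machinery at all: it is a universal lower bound valid for \emph{all} interpolators (cited from Muthukumar et al.), and your plan does not account for it.

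Finally, you identify ``the scalar reduction combined with the moment estimates'' as the main obstacle; in the paper this is real but secondary. The delicate part is squeezing matching upper and lower bounds out of the constraint set $\Gamma$ in the uniform-convergence stage (completing the square around $\nu_0$ in Equations~\eqref{eq:lptightcond1reg}--\eqref{eq:lptightcond2reg}) while keeping all error terms subdominant, which requires Taylor-expanding $M^{2/p}$ to second order and threading the $\epsilon$-concentration of $\|H\|_q$ through every step. The $\exp(\kappa_5 q)$ slack you attribute to ``Gaussian width at the $\ell_p$-ball boundary'' actually tracks the ratio $\tilde\mu_q n/\dmoment{q}^2$ (the $\ell_2$-norm of the $\ell_q$-subgradient relative to the dual norm itself), which controls how far the auxiliary minimizer spreads mass into the null coordinates.
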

% \fy{so then if we never use $O$ anymore, we can indeed use $O$ to hide log factors? throughout (and ditch $\tilde{O})$? - I see in classification theres an $O$ left, but then we can just write a constant?}
% t\fy{also wonder if we should do: Note that these results require $d$ suff. large, not just the relation between $d,n$ maybe also thing about noise = 0}
The proof of the theorem can be found in Appendix~\ref{apx:regproof}.  We now discuss the implications of the theorem and  refer to Section~\ref{rm:p} for a discussion on the assumptions in the theorem. Throughout the discussion we consider the regime $d \asymp n^{\beta}$ with $\beta>1$.

\paragraph{Close to  minimax optimal rates.}
Theorem~\ref{thm:regressionlpmain} implies that for any fixed $\beta>1$, the estimation error $\Risk(\what) = \tilde{O}(\alpha)$ of the min-$\ell_p$-norm interpolator with ${p \in \left(1 + \frac{\kappa_2}{\log\log d}, \frac{ 2\beta}{2 \beta - 1}\right)}$ vanishes at a polynomial rate ($\alpha<0$). To illustrate these rates, Figure~\ref{fig:rates_reg} plots the exponent of the rate $\alpha$ as a function of $\beta$ for different values of $p$, assuming that $d,n$ are sufficiently large. We compare the rates against the minimax optimal rate for sparse regression  of order $\frac{\sigma^2 \log d}{n}$ (dotted horizontal line at $\alpha = -1$) \cite{Raskutti2011MinimaxRO}.
We can clearly see in Figure~\ref{fig:rates_reg} that for $\beta>2$ and small values of $p$, the rates of the error are close to the minimax optimal rate. In fact, when choosing $\beta =2$ and $p = 1 + \frac{ \kappa_4}{\log\log d}$, Theorem~\ref{thm:regressionlpmain} shows that, for $n,d$ sufficiently large, the rate of the error equals the minimax optimal rate up to logarithmic factors.

\paragraph{Faster rates than for $p=1$}
For comparison, we also indicate the rates of the min-$\ell_2$- and min-$\ell_1$-norm interpolators by the dashed horizontal line at $\alpha=0$, which are of constant and  $\frac{\sigma^2}{(\beta-1) \log n}$ order, respectively, \cite{wang2021tight,muthukumar_2020}. Clearly, we can see that the minimum-$\ell_p$-norm interpolator with $p\in(1,2)$ achieves faster rates than with $p=1$ and $2$. 
We emphasize that Figure~\ref{fig:rates_reg} depicts the exponent of the rate at which the error vanishes up to logarithmic factors for fixed values of $p$ as $d,n \to \infty$. For fixed $n,d$, our non-asymptotic bounds only hold for $p \in (1+\frac{\kappa}{\log\log d}, 2)$. We refer to Section~\ref{rm:p} for a discussion of this limitation and motivate future work on tight bounds for the full interval $p\in[1,2]$  in Section~\ref{subsec:optimalp}.

\begin{figure}[t!]
    \centering
        %   \begin{subfigure}[b]{0.45\linewidth}
         \centering
         \ifarxiv
    \includegraphics[width=0.4\linewidth]{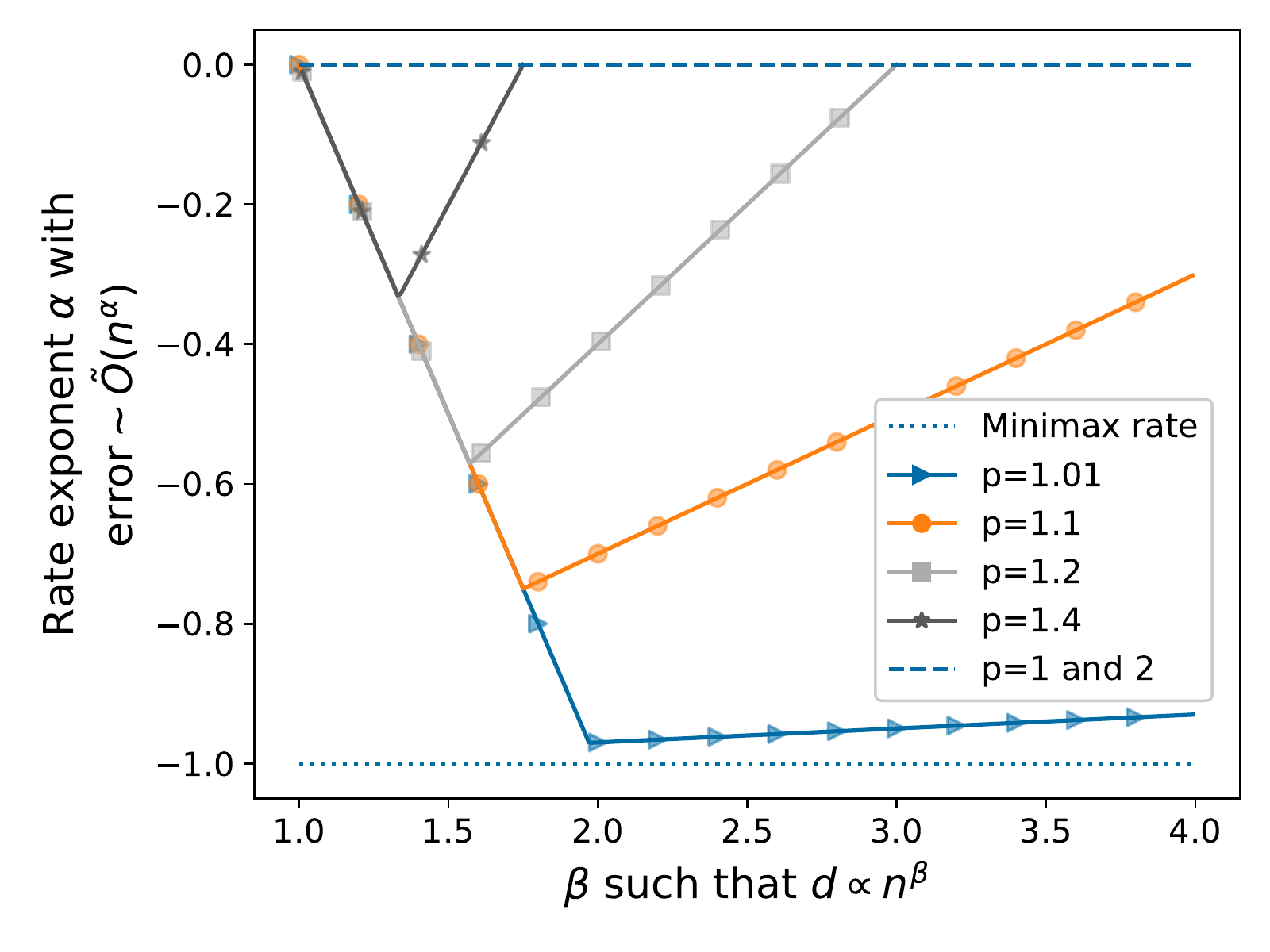}
    \else 
        \includegraphics[width=\linewidth]{figures/regression_rates.pdf}
\fi
        % \caption{Comparison with rates in Theorem~\ref{cr:koehler}}
        \caption{Depiction of Theorem~\ref{thm:regressionlpmain} for regression
        when $d\asymp n^{\beta}$. We plot the exponent $\alpha$ of the resulting estimation error rate 
        $\tilde{O}(n^{\alpha})$ for different strengths of inductive bias $p$ at different high-dimensional regimes $\beta$.}
    \label{fig:rates_reg}
   % \vspace{-0.2in}
\end{figure}

\paragraph{Tightness of the upper and lower bounds in Theorem~\ref{thm:regressionlpmain} } 
We note that for any fixed $p$ and $n,d$ sufficiently large, the upper and lower bounds in Theorem~\ref{thm:regressionlpmain} are of the same order. In fact, 
the second term in the lower bound of order $\frac{n}{d}$ is a universal lower bound for all interpolators  \cite{muthukumar_2020}. This bound is tight when the term $\frac{\sigma^2n  \exp(\kappa_5 q)}{q d} $  dominates the upper bound in Theorem~\ref{thm:regressionlpmain}, which is the case when  $\beta\leq 2$ and $p$ is small constant.

% small, and thus the termdominates the upper bound. 

% For large enough $n,d$ the upper bounds in Theorem~\ref{thm:regressionlpmain}  match universal lower bound $n/d$ fo and even $1/n$ for $\beta=2$. For $\beta>2$ the first term blabla dominates in the lower and upper bounds which then match.

% As we can see from Equation~\eqref{eq:thmregboundhighnoiseuppermain}, for any constant $p$, the upper and lower bounds are of the same order. When $\beta<2$ the terms of order $\sigma^2 n/d$ dominate for $p\leq 2\beta / (2\beta -1)$, whereas when $\beta>2$

% Further, we remark that the term $\frac{\sigma^2 n}{d}$ in the lower bound is exactly of the same order as uniform lower bounds for \emph{all} interpolating models  (see \cite{zhou20,muthukumar_2020}) --- which hold even under full knowledge of the ground truth \fy{don't understand}. As a consequence, we can see that for constant $p$, if the term $\frac{\sigma^2n  \exp(\kappa_5 q)}{q d}$ dominates the upper bound in Equation~\eqref{eq:thmregboundhighnoiseuppermain}, the min-$\ell_p$-norm interpolator yields the optimal rate for all interpolating models. 

\paragraph{Comparison with existing bounds for min-$\ell_p$-norm interpolators.}
We now discuss existing results for min-$\ell_p$-norm interpolators from the literature. %and refer to 
To the best of our knowledge,  previous works do not  study the rates of the  min-$\ell_p$-norm interpolator. However, we may obtain an upper bound
%in Corollary~\ref{cr:koehler} 
as a consequence of Theorem~4 in \cite{koehler2021uniform} which follows straightforwardly from applying Lemma~\ref{cor:hqbound2} in Appendix~\ref{apx:subsec:localisationproofreg}
  and Lemma~\ref{cor:hqbound1} in Appendix~\ref{apx:proofdualnormconc}.

\begin{theorem}[Corollary of Theorem~4 from
\cite{koehler2021uniform} (informal)]
\label{cr:koehler}
Let the data distribution be as described in Section~\ref{subsec:regsetting} with general ground truth $\wgt\in\RR^d$. Under the same conditions on $d,n$ as in Theorem~\ref{thm:regressionlpmain}, there exist universal constants $\kappa_1,\kappa_2,\kappa_3>0$ such that for any $p\in (1,2]$ and $q$ such that $ \frac{1}{p} + \frac{1}{q} =1 $, we have that 
\ifarxiv
\begin{align}
    \RiskR(\what) &\lesssim \sigma^2 \left( \sqrt{\frac{\log d}{q}} \frac{1}{ d^{1/q}} + \sqrt{\frac{\log d}{n}}
    + \frac{n\exp(\kappa_1 q)}{qd} \right) + \sigma \norm{\wgt}_p d^{1/q} \sqrt{\frac{q}{n}} + \norm{\wgt}_p^2 \frac{ q d^{2/q}}{n} \nonumber
\end{align}
\else 
\begin{align}
    \RiskR(\what) &\lesssim \sigma^2 \left( \frac{\sqrt{\log d}}{\sqrt{q} d^{1/q}} + \sqrt{\frac{\log d}{n}}
    + \frac{n\exp(\kappa_1 q)}{qd} \right) \\
    &~~~~~+ \sigma \norm{\wgt}_p d^{1/q} \sqrt{\frac{q}{n}} + \norm{\wgt}_p^2 \frac{ q d^{2/q}}{n} \nonumber
\end{align}
\fi
with probability at least $1- \kappa_2 d^{-\kappa_3}$ over the draws of the data set.
\end{theorem}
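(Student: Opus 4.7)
The plan is to derive this corollary by directly invoking Theorem~4 of \cite{koehler2021uniform}, specialized to the $\ell_p$-norm, and then substituting two auxiliary estimates: one on the concentration of the $\ell_q$-norm of a Gaussian vector, one on the localized Gaussian width of the $\ell_p$-ball intersected with a Euclidean ball. Since the Koehler et al. theorem is stated for a generic norm $\|\cdot\|$, the entire argument reduces to computing these two geometric quantities for the dual pairing $(\ell_p,\ell_q)$ with $\frac{1}{p}+\frac{1}{q}=1$.

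Concretely, first I would apply Theorem~4 of \cite{koehler2021uniform} with $\|\cdot\|=\|\cdot\|_p$ and radius $r\asymp\|\wgt\|_p$. Their theorem produces a bound of the schematic form $\RiskR(\what)\lesssim A_{\mathrm{noise}}+A_{\mathrm{cross}}+A_{\mathrm{signal}}+A_{\mathrm{univ}}$, where $A_{\mathrm{noise}}$ captures how much observation noise the interpolator must absorb, $A_{\mathrm{cross}}$ is a noise--signal interaction term, $A_{\mathrm{signal}}$ arises from the need for the $\ell_p$-ball of radius $r$ to accommodate $\wgt$, and $A_{\mathrm{univ}}$ is the universal $\sigma^2 n/d$-type contribution present for any interpolator in the overparameterized regime.

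Second, I would substitute the two lemmas. Lemma~\ref{cor:hqbound1} in Appendix~\ref{apx:proofdualnormconc} gives $\|g\|_q\lesssim\sqrt{q}\,d^{1/q}$ with high probability for $g\sim\NNN(0,I_d)$; this directly yields the $\sigma\|\wgt\|_p d^{1/q}\sqrt{q/n}$ cross term and, after squaring, the $\|\wgt\|_p^2 qd^{2/q}/n$ signal term. Lemma~\ref{cor:hqbound2} in Appendix~\ref{apx:subsec:localisationproofreg} controls the localized Gaussian width of $\{w:\|w\|_p\leq r\}\cap\{w:\|w\|_2\leq t\}$ uniformly over the relevant range of $t$; this produces the pure-noise contributions $\sigma^2\sqrt{\log d/q}/d^{1/q}$ and $\sigma^2\sqrt{\log d/n}$, together with the $\sigma^2 n\exp(\kappa_1 q)/(qd)$ term, the latter stemming from the worst-case ratio between the $\ell_p$- and $\ell_2$-radii as one zooms in during the localization step.

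The main obstacle is not the high-level structure, which is essentially bookkeeping once Theorem~4 is in hand, but rather tracking the exponential factor $\exp(\kappa_1 q)$ produced by the localization: it requires a careful non-asymptotic analysis of the Gaussian width of small-radius $\ell_p$-balls, and it is precisely the reason the resulting corollary is only informative in the regime $q\lesssim\log d$, i.e., $p\gtrsim 1+1/\log\log d$, which matches the regime in which the sharper \Cref{thm:regressionlpmain} is stated. Once the constants are tracked uniformly in $p\in(1,2]$, collecting like powers of $\sigma$ and $\|\wgt\|_p$ recovers the four-term bound exactly as stated.
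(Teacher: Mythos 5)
Your high-level plan matches the paper's approach: apply Theorem~4 of \cite{koehler2021uniform} with $\|\cdot\|=\|\cdot\|_p$, then evaluate the resulting geometric quantities via Lemmas~\ref{cor:hqbound1} and~\ref{cor:hqbound2}, which is all the detail the paper itself gives for this corollary. However, you have swapped and partially invented the content of the two lemmas. Lemma~\ref{cor:hqbound1} is the Paouris--Valettas concentration inequality for $\|H\|_q$ around its mean $m_q := \EE\|H\|_q$; on its own it gives no quantitative magnitude and does not yield $\|H\|_q \lesssim \sqrt{q}\,d^{1/q}$. That magnitude is precisely what Lemma~\ref{cor:hqbound2} supplies: it is the deterministic estimate $m_q \asymp \sqrt{q}\,d^{1/q}$ for $q \lesssim \log d$. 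It is emphatically \emph{not} a localized Gaussian width bound for $\{\|w\|_p \le r\} \cap \{\|w\|_2 \le t\}$; the paper proves no such separate lemma, and whatever localization Theorem~\ref{cr:koehler} relies on is internal to the statement of Theorem~4 of \cite{koehler2021uniform}, not to either of these lemmas.

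Your explanation of the $\exp(\kappa_1 q)$ factor is also off the mark: it is not a vague worst-case $\ell_p$-to-$\ell_2$ radius ratio encountered while zooming in. In the paper it arises from the squared $\ell_2$-norm of the subgradient of $\|H\|_q$, which equals $\left(\|H\|_{2q/p}/\|H\|_q\right)^{2q/p}$; controlling this quantity with Lemmas~\ref{cor:hqbound1} and~\ref{cor:hqbound2} introduces a $d^{\Theta(q/\log d)} = e^{\Theta(q)}$ slack (see the estimate of $\mutild$ in Lemma~\ref{lm:dualnormconc}). None of these errors change the skeleton of the argument, but as written they would send you to the wrong lemmas for the wrong conclusions, so the attributions need to be corrected before the bookkeeping in Theorem~4 can actually be carried out.
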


We remark that it already follows from Theorem~\ref{cr:koehler}, in combination with the results in \cite{wang2021tight},  that $p>1$ achieves faster rates compared to $p=1$. However, for any $p$ and $\beta>1$ with $d=n^{\beta}$ the rates in the upper bound in Theorem~\ref{cr:koehler} are at most of order $n^{-1/4}$, and thus much slower than the rates in Theorem~\ref{thm:regressionlpmain} that can reach orders even close to $n^{-1}$.  
For a better comparison, we illustrate both the rates in Theorem~\ref{thm:regressionlpmain} and Theorem~\ref{cr:koehler} in Figure~\ref{fig:ratesk} in Appendix~\ref{rm:koehler}, which also provides a detailed comparison of the proof techniques used to derive Theorems~\ref{thm:regressionlpmain} and~\ref{cr:koehler}.

\paragraph{Comparison with $\ell_p$-norm regularized estimators}
We are only aware of existing bounds in the literature for $\ell_p$-norm regularized estimators that characterize the ground truth by the  $\ell_p$-norm (e.g. see \cite{lecue17}). These bounds are at least of order $\|\wgt\|_p \frac{d^{2 - 2/p}}{n}$ (see Theorem 5.4 in  \cite{lecue17}), and thus slower than the bounds for the corresponding interpolating estimator in Theorem~\ref{thm:regressionlpmain}, for some choices of $\beta$.
Furthermore, we hypothesize when the term  $\frac{\sigma^{4-2p}q^p d^{2p-2}}{n^p}$ on the RHS in Equation~\eqref{eq:thmregboundhighnoiseuppermain} dominates, the optimally $\ell_p$-norm regularized estimator achieves the same rates as the corresponding interpolator. As can be seen in the proof of Theorem~\ref{thm:regressionlpmain} in Appendix~\ref{apx:regproof}, this term captures the error that arises from the orthogonal projection of $\what$ onto the direction of the signal $\wstar$. This error is expected to increase when adding explicitly regularization, and thus shrinking the estimator.

\section{Maximum-margin Interpolators for Classification}
\label{sec:cla}
We now establish upper bounds for max-$\ell_p$-margin interpolators, also called hard-margin $\ell_p$-SVMs or sparse-SVMs \cite{blanco2020lp,bennett2000duality}.
We show that these interpolators achieve fast rates for $p\in(1,2)$ and even match minimax optimal rates up to logarithmic factors.

\subsection{Setting}
\label{subsec:clasetting}
% \fy{this is not standard! or rather its unclear what that is!}
We study a discriminative linear classification setting with $n$ pairs of random input features $x_i \overset{\iid}{\sim} \NNN(0,I_d)$ and associated labels 
$y_i =\sgn(\langle \xui, \wgt \rangle)\xi_i$
where the label noise 
$\xi_i \in \{+1, -1\}$ follows the conditional distribution 
\begin{equation}
    \xi_i | x_i \sim \probsigma( \cdot; \langle x_i, \wgt \rangle)
\end{equation}
for some parameter~$\sigma$.
We again choose $\wgt = (1,0,\cdots,0)$ for the same reason as in Section~\ref{subsec:regsetting}.
Notice that the noise only depends on the input features in the direction of the ground truth.
More specifically,  we make the 
following assumption on the noise distribution~$\probsigma$:
\begin{itemize}
  \item[]
    The function ${z \to \probsigma(\xi=1; z)}$ is a piece-wise continuous function such that the minimum ${\nubar := \underset{\nu}{\arg\min}~ \EE_{Z \sim \NNN(0,1)} \EE_{\xi \sim \probsigma(\cdot; Z)} \left( 1- \xi \nu \vert Z \vert\right)_+^2 }$ exists and is positive $\nubar>0$.  
 \end{itemize}

Assumption A is rather weak and satisfied by most noise models in the literature,  such as
\begin{itemize}
    \item\textit{Logistic regression} with  $\probsigma(\xi_i=1; z) = h(z \sigma)$ and $h(z) = \frac{e^{\vert z\vert}}{1+e^{\vert z\vert}}$ and $\sigma >0$. 
    \item \textit{Random label flips} with  $\probsigma(\xi=1; \langle \xui, \wstar \rangle) = 1-\sigma$ and  $\sigma \in (0,\frac{1}{2})$. 
    \item \textit{Random noise before quantization} where $\yui = \sgn(\langle \wgt, \xui \rangle + \tilde{\xi}_i)$ with $\tilde{\xi}_i\vert x_i \sim \NNN(0, \sigma^2)$ and  $\sigma^2>0$.
\end{itemize}

Given the data set $\{\left(\xui, \yui\right)\}_{i=1}^n$, the goal is to obtain an estimate $\what$ that
%points in a similar direction as \fy{sounds like a kid}
directionally aligns with the  normalized ground truth $\wgt$ and thus has a small directional estimation error
\begin{align}
\label{eq:prediction_error_cla}
    \RiskC(\what) := \norm{\frac{\what}{\norm{\what}_2} - \wgt}_2^2.
\end{align}
This  classification error is also studied for example in the 1-bit compressed sensing literature (see e.g., \cite{Boufounos2008,plan2012robust} and references therein). 
Note that a small value $\RiskC(\what)$ corresponds to a small expected (noiseless)  0-1 error via the relation
\ifarxiv
\begin{align}
  &\EE_{x\sim \NNN(0,I_d)} \idvec[\sgn(\langle x, \what\rangle) \neq \sgn(\langle x,\wgt \rangle)] \nonumber = \frac{1}{\pi} \arccos\left(1- \frac{\RiskC(\what)}{2}  \right) \approx \frac{1}{\pi} \sqrt{\frac{\RiskC(\what)}{2}} \, .
\end{align}
\else
\begin{align}
  &\EE_{x\sim \NNN(0,I_d)} \idvec[\sgn(\langle x, \what\rangle) \neq \sgn(\langle x,\wgt \rangle)] \nonumber\\
    &= \frac{1}{\pi} \arccos\left(1- \frac{\RiskC(\what)}{2}  \right) \approx \frac{1}{\pi} \sqrt{\RiskC(\what)} \, .
\end{align}

\fi

Throughout this section, we study the \emph{max-$\ell_p$-margin interpolators}, or equivalently, the hard-margin $\ell_p$-SVM solutions  for $p\in [1,2]$ defined by 
      \begin{equation}\label{eq:svm problem}
    \what = \argmin_{w} \norm{w}_p \subjto \forall i:~ \yui \langle \xui, w \rangle \geq 1.
\end{equation}

\subsection{Maximum-\texorpdfstring{$\ell_p$}{lp}-margin Interpolation} 
\label{subsec:clalp}
% \kd{first time mentioning minimax lower bounds should be discussed}
We are the first to present non-asymptotic upper bounds for the directional estimation error of max-$\ell_p$-margin interpolators.

\begin{theorem}
\label{thm:classificationlp}
    Let the data distribution be as described in Section~\ref{subsec:clasetting} and assume that the noise model $\probsigma$ is independent of $n,d$ and $p$.
Let $q$ be such that $\frac{1}{p} + \frac{1}{q} =1$. 
There exist universal constants 
%$\kappa_1,\kappa_2,\kappa_3,\kappa_4,\kappa_5,\kappa_6,\kappa_7>0$
$\kappa_1,\cdots,\kappa_7>0$
such that for any $n\geq \kappa_1$, any $p \in\left(1+\frac{\kappa_2}{\log\log d},2\right)$ and any
$ n \log^{\kappa_3}n \lesssim d \lesssim \frac{n^{q/2}}{\log^{\kappa_4 q}n}$, the directional estimation error of the max-$\ell_p$-norm interpolator~\eqref{eq:svm problem} satisfies
% There exist universal constants $\cn,\cl,\cd,\cp,\cone, \ctwo>0$ such that for any $q \lesssim \log\log(d)$ and any
% $ n \log(n)^2 \lesssim d \lesssim \frac{n^{q/2}}{\log(n)^{c q}}$ and
%     $n \gtrsim 1$, the prediction error is upper- and lower-bounded 
% There exist universal constants $ c, c_n, c_l, c_u > 0 $ such that, if $ n\geq c_n $ and $ c_l n \leq d \leq \exp \left( c_u n^{1/5} \right) $, then
\begin{equation}
\label{eq:classificationlp}
\RiskC(\what) \lesssim  \frac{ q^{\frac{3}{2}p}d^{3p-3} \log^{3/2}d}{n^{\frac{3}{2}p} } \lor \frac{n  \exp(\kappa_4q)}{qd} \lor  \frac{\log^{\kappa_5} d}{n} \, ,
\end{equation}
with probability at least $ 1 - \kappa_6 d^{-\kappa_7}$ over the draws of the data set. 
\end{theorem}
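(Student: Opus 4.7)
The plan is to adapt the localization strategy used for the regression theorem to the margin-constrained setting, controlling separately how $\hat{w}$ aligns with the signal $e_1$ and how large its orthogonal component is. Write $\hat{w} = \hat{w}_1 e_1 + \hat{w}_\perp$ with $\hat{w}_\perp \perp e_1$; then the directional error satisfies
\begin{equation*}
\RiskC(\what) = 2 - 2\,\frac{\hat{w}_1}{\|\hat{w}\|_2} \;\lesssim\; \frac{\|\hat{w}_\perp\|_2^2}{\hat{w}_1^2}
\end{equation*}
whenever $\hat{w}_1$ is positive and dominates $\|\hat{w}_\perp\|_2$. So the task reduces to a lower bound on $\hat{w}_1$ (signal alignment) and an upper bound on $\|\hat{w}_\perp\|_2^2$ (noise absorption). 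The assumption $\bar\nu > 0$ on the noise model provides the population analogue of the signal component: in the infinite-sample limit, the maximizer of the margin in direction $e_1$ has strictly positive projection.

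The key tool is the Convex Gaussian Min-Max Theorem (CGMT) applied to the Lagrangian formulation of~\eqref{eq:svm problem}, which reduces the primal problem to an auxiliary Gaussian problem with a much simpler structure: one involving independent standard normal vectors instead of the coupled design matrix. After CGMT, one is left with a scalar optimization over parameters like $(\hat{w}_1, \|\hat{w}_\perp\|_2, \|\hat{w}_\perp\|_p)$, and the Gaussian process in the orthogonal direction concentrates via a dual-norm bound on $\|X^\top u\|_q$ for $u$ in a low-dimensional set (precisely the estimate provided by Lemmas~\ref{cor:hqbound1} and~\ref{cor:hqbound2} from the regression proof). Then, first, use the assumption on $\probsigma$ and the piecewise continuity of $z \mapsto \probsigma(\xi=1;z)$ to show that the optimal $\hat{w}_1 / \|\hat{w}\|_p$ is close to $\bar\nu$, incurring a statistical error of order $\sqrt{\log^{\kappa_5} d / n}$ (this produces the third term in the bound). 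Second, construct a feasible $w_\perp$ by randomizing over coordinates in the null directions to absorb the $\xi_i$-induced labeling errors; this construction mirrors the regression case and yields the second term $n \exp(\kappa_4 q)/(qd)$, which is essentially the universal lower bound for interpolators. Third, the first term $q^{3p/2} d^{3p-3} \log^{3/2} d / n^{3p/2}$ arises from the combined effect of controlling $\hat{w}_1$ up to fluctuations of order $q^p d^{2p-2}/n^p$ (the regression rate) and then propagating this through the quotient $\|\hat{w}_\perp\|_2^2/\hat{w}_1^2$; the extra power of $3/2$ compared to Theorem~\ref{thm:regressionlpmain} reflects that directional error measures an angular deviation normalized by the signal length rather than an absolute estimation error.

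The main obstacle is establishing the first bound with the exponent $3p/2$ rather than the naive $p$. In regression one compares $\hat{w}$ to a nearby feasible point for which the squared-loss value is easy to evaluate; for max-margin the margin constraint couples $w_1$ and $w_\perp$ in a nonlinear way through the event $y_i \langle x_i, w\rangle \geq 1$, so CGMT must be deployed carefully to preserve the right scaling. Concretely, one needs a uniform version of the signal-direction analysis over all perturbations of $w_1$ compatible with the $\ell_p$ minimization, and then a second-order expansion around the population optimum $\bar\nu$ that exploits the strong convexity of $z \mapsto \EE (1 - \xi \nu |Z|)_+^2$ at $\nu = \bar\nu$. This second-order expansion is what converts the regression-type rate $(d^{2p-2}/n^p)$ into its $3/2$-power form in the angular error. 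Routine pieces—concentration of $\|X^\top u\|_q$, existence of a feasible $w_\perp$ with controlled $\ell_p$-norm, and the final union bound giving probability $1 - \kappa_6 d^{-\kappa_7}$—should carry over from the regression argument with only notational changes.
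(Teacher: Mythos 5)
The broad architecture you describe is right and matches the paper: apply the CGMT to a Lagrangian reformulation of~\eqref{eq:svm problem}, split $\hat w$ into a one-dimensional signal coordinate $\nu$ and an orthogonal part with $\eta := \|\hat w_\perp\|_2$, reduce the angular error to the ratio $\eta^2/\nu^2$, use the assumption $\bar\nu>0$ to anchor $\nu$ near a positive constant, and obtain the three terms from (i) the $\ell_p$-localization, (ii) the dual-norm concentration of $\|H\|_q$, and (iii) a $\rho^2 \asymp \log^{\kappa}d/n$ fluctuation. However, there is a real gap in the central step, the derivation of the exponent $3p/2$.

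Your explanation attributes the $3/2$ power to ``a second-order expansion around $\bar\nu$ exploiting strong convexity'' plus ``propagating through the quotient $\eta^2/\hat w_1^2$.'' Neither of these produces a $3/2$ power. Since $\nu$ concentrates around the constant $\bar\nu$, dividing by $\nu^2$ only rescales by $\Theta(1)$; it cannot change the exponent. The second-order (quadratic) terms in the expansion of the empirical margin loss $\fnclas$ about its empirical minimizer $(\bar\nu_n,0)$ are also not the source: in the paper they are precisely \emph{cancelled} against the localization bound $\loneboundclas$, because the localization point $(\nu,\eta)=(\bar\nu_n+\Delta\nu_0,\,b\,\partial\|H\|_q)$ is chosen to saturate the quadratic approximation. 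What remains after this cancellation, and is hence the actual source of the rate, is the \emph{cubic remainder} of the second-order expansion, which Lemma~\ref{lm:lpfn} quantifies as $(\rho+\sqrt{\log d}\,(|\Delta\nu|+\eta))^3$. Since $\Delta\nu$ and $\eta$ are shown to be of order $\Delta\nu_0 = \Theta\bigl((q^p d^{2p-2}/n^p)^{1/2}\bigr)$ (note: the square root, not the quantity itself, as you wrote), this cubic remainder is $\Theta\bigl(\Delta\nu_0^3\log^{3/2}d\bigr)$, which is the first term of~\eqref{eq:classificationlp}. Without isolating this cancellation-plus-cubic-remainder mechanism, the natural outcome of your plan is the weaker bound $\eta^2 \lesssim \Delta\nu_0^2 = q^p d^{2p-2}/n^p$, which would fail to reach minimax optimality in the regime $\beta\geq 2$ highlighted by the paper. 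Also, a smaller inaccuracy: the feasible orthogonal component used in the localization step is not obtained by ``randomizing over coordinates'' but by taking the (deterministic, given $H$) subgradient direction $\partial\|H\|_q$, scaled by a carefully chosen $b$, and the iterative tightening of the constraint set $\Gamma$ (steps $\nu^2+\eta^2 = O(1)$, then $O(\log^{-5}d)$, then $O(\Delta\nu_0^2+\rho^2)$, then the final cubic bound) is what makes the cubic remainder term applicable. You should add these ingredients to make the proposed proof actually deliver the claimed rate.
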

% The dependency on the noise is captured by constants only depending on $\probsigma$  hidden in  the universal constants.
The proof of the result is presented in Appendix~\ref{apx:claslpproof}.
The dependence on the noise  $\probsigma$ is hidden in  the universal constants but can be made explicit when carefully following the steps in the proof.   We refer to Section~\ref{rm:p} for a discussion on the assumptions in the theorem. Throughout the discussion we consider the regime $d \asymp n^{\beta}$ with $\beta>1$.

\begin{figure}[t!]
    \centering
    \ifarxiv
    \includegraphics[width=0.4\linewidth]{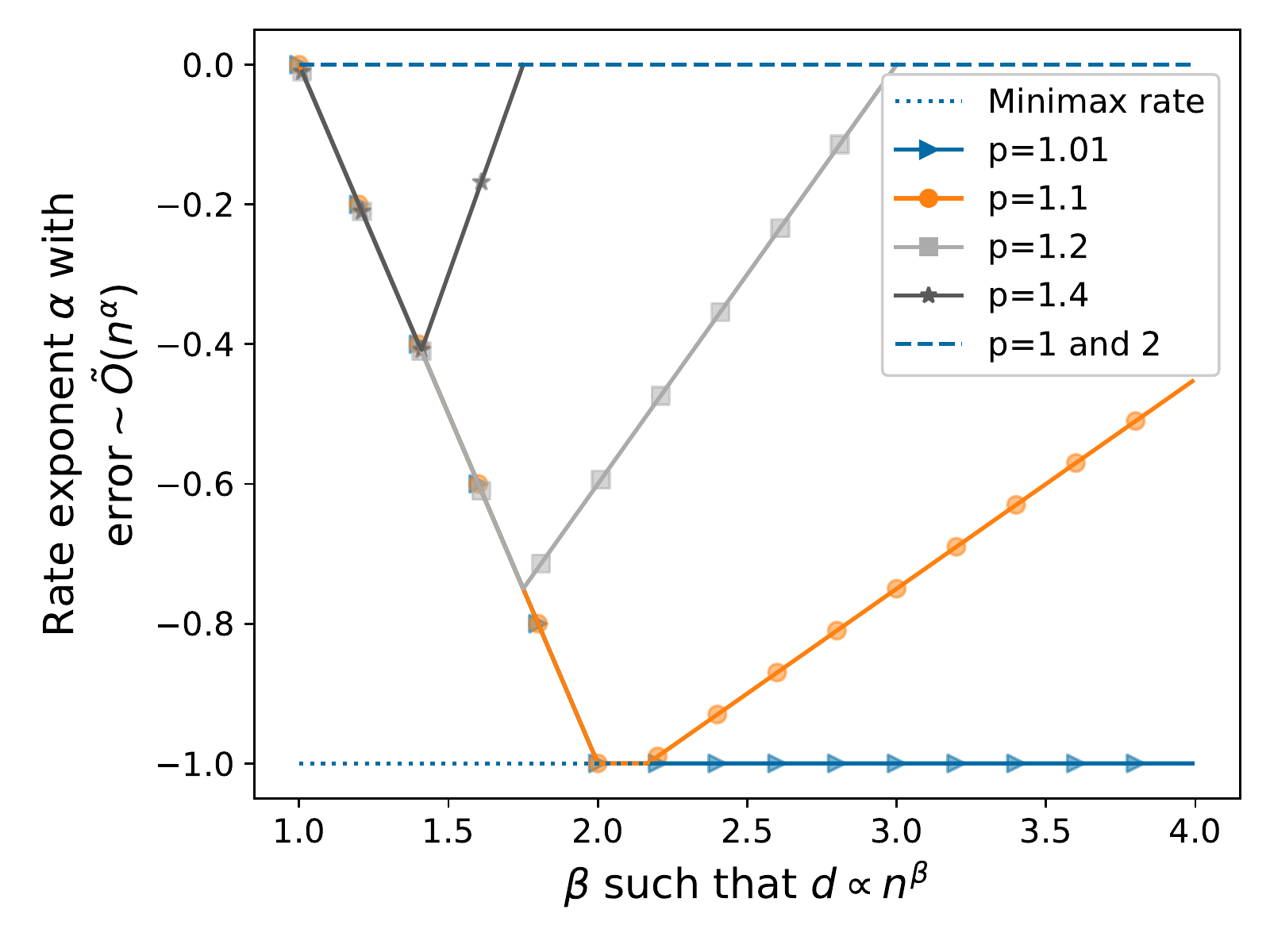}
    \else 
        \includegraphics[width=\linewidth]{figures/classification_rates.pdf}
    \fi
         \caption{Depiction of Theorem~\ref{thm:classificationlp} for classification  when $d\asymp n^{\beta}$. We plot the exponent $\alpha$ of the resulting  upper bound of the (directional) estimation error rate 
        $\tilde{O}(n^{\alpha})$  for different strengths of inductive bias $p$ and different high-dimensional regimes $\beta$.}
    \label{fig:ratescla}
   % \vspace{-0.2in}
\end{figure}

\paragraph{Close to  minimax optimal rates}
As for regression, Theorem~\ref{thm:classificationlp} 
implies that 
 the estimation error $\Risk(\what) = \tilde{O}(\alpha)$ of the max-$\ell_p$-margin interpolator with ${p \in \left(1 + \frac{\kappa_2}{\log\log d}, \frac{3 \beta}{3 \beta - 1.5}\right)}$ vanishes at a polynomial rate ($\alpha<0$).  We again illustrate the rates in Figure~\ref{fig:ratescla} where we plot the exponent of the rate $\alpha$ as a function of $\beta$ for different values of $p$, assuming that $d,n$ is sufficiently large. Furthermore, we plot the minimax optimal lower bounds  (dotted line at $\alpha=-1$) for the directional estimation error $\RiskC$, which are known to be of order $\frac{\log d}{n}$  \cite{Wainwright2009InformationTheoreticLO,abramovich2018high}. 
When $\beta>2$ and $p$ is small, the rates of the max-$\ell_p$-norm interpolator
are of order $\tilde{O}(\frac{1}{n})$ and hence minimax optimal up to logarithmic factors. 
More specifically, for any  $\beta>2$ and $p\in \left( 1 + \frac{\kappa_2}{\log\log d}, \frac{6\beta-2}{6\beta - 3} \right)$, the term $ \frac{\log^{\kappa_5}d}{n}$ dominates the RHS in Equation~\eqref{eq:classificationlp} and we obtain a minimax optimal rate up to logarithmic factors.

%  We compare the rates against the minimax optimal rate for sparse regression  of order $\frac{\sigma^2 \log d}{n}$ (dotted horizontal line at $\alpha = -1$) \cite{Raskutti2011MinimaxRO}.
% We can clearly see in Figure~\ref{fig:rates_reg} that for $\beta>2$ and small values of $p$, the rates of the error are close to the minimax optimal rate. In fact, when choosing $\beta =2$ and $p = 1 + \kappa_4/\log\log d$, Theorem~\ref{thm:regressionlpmain} shows that, for $n,d$ sufficiently large, the rate of the error is close to the minimax optimal rate up to logarithmic factors. 

% in Figure~\ref{fig:ratescla} we illustrate the rates of the upper bound \eqref{eq:classificationlp}. We again plot the decay rate $\alpha$ 
% for the directional estimation error $\RiskC(\what) = \tilde{O}(n^{\alpha})$ as a function of $\beta>1$ with $d \asymp  n^{\beta}$ for different values of $p$, assuming $d,n$ is sufficiently large. 
% Minimax lower bounds (dotted line at $\alpha=-1$) for the directional estimation error $\RiskC$ are known to be of order $\frac{\log d}{n}$  \cite{Wainwright2009InformationTheoreticLO,abramovich2018high}. 

\vspace{-1mm}
\paragraph{Faster rates than $p=1$}
So far, previous non-asymptotic upper bounds for the error of the max-$\ell_1$-margin interpolator  \cite{chinot2021adaboost}  are non-vanishing, while to the best of our knowledge we are not aware of any lower bounds. However, using the same tools as used for bounding the error of the min-$\ell_1$-norm interpolator in \cite{wang2021tight} and the tools introduced in the proof of Theorem~\ref{thm:classificationlp}, we can upper and lower bound\footnote{The explicit theorem statement is moved from the camera ready version of this paper as a response to the reviewers concerns on the length of the paper. The statement will instead appear in a followup work of \cite{wang2021tight}.} the directional estimation error \eqref{eq:prediction_error_cla} by  $\RiskC(\what) \asymp \frac{1}{\log(d/n)}$.

Analogous to regression, Figure~\ref{fig:ratescla}  shows how the max-$\ell_p$-margin interpolators with $p>1$ achieve faster rates compared to $p =1$ and $2$ (dashed line at $\alpha=0$ in Figure~\ref{fig:ratescla}).
More specifically, for any fixed $\beta>1$, we obtain  faster rates  with any $p \in \left(1 + \frac{\kappa_2}{\log\log d}, \frac{3 \beta}{3 \beta - 1.5}\right)$ than with $p=1$ and $2$.

\paragraph{Lower bounds} Unlike Theorem~\ref{thm:regressionlpmain}, Theorem~\ref{thm:classificationlp} does not provide matching lower bounds.
However, when $\beta \geq 2$ and $p$ is small, the rates in Theorem~\ref{thm:classificationlp} are close to minimax lower bounds and therefore cannot be improved. Furthermore, 
similar to regression, in Proposition~\ref{prop:uniformlowerbound} we provide  a uniform lower bound of order $\frac{n}{d}$ for all interpolating classifiers that matches the upper bound  in Theorem~\ref{thm:classificationlp}  for the regime $\beta\leq 2$ and when $p$ is small.  
% For $\beta>2$ in turn, we obtain minimax optimal rates. 
%both together illustrates how "tight" these bounds are.} 
 The proof can be found in Appendix~\ref{apx:uniformlowerbound}. 
\begin{proposition}[Universal lower bound for all interpolating classifiers]
\label{prop:uniformlowerbound}
  Let the data distribution be as described in Section~\ref{subsec:clasetting} and assume that the noise model $\probsigma$ is independent of $n,d$ and $p$.
  There exist universal constants $\kappa_1,\kappa_2>0$ such that 
for all non-zero ground truths $\wgt\in\mathbb{R}^d$, with probability $\geq 1-\exp(-\kappa_1 n) -\exp(-\kappa_2 d)$ for some $\kappa>0$,
the directional estimation error of any interpolator $\what$ satisfying $\forall i: \yui \langle \xui, \what \rangle \geq 0$ satisfies
\begin{equation}
    \RiskC(\what) \geq \frac{c n}{d}. 
\end{equation}
\end{proposition}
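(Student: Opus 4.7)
The plan is to control the alignment $\alpha := \langle \what/\|\what\|_2,\, \wgt\rangle$ (treating $\wgt$ as unit, WLOG by rescaling). Since $\RiskC(\what) = 2(1-\alpha)$ and $2(1-\alpha) \geq 1-\alpha^2$ for $\alpha \in [0,1]$ (while $\RiskC \geq 2$ trivially if $\alpha < 0$), it suffices to show $1-\alpha^2 \gtrsim n/d$. By the rotational invariance of the isotropic Gaussians, I would assume $\wgt = e_1$, decompose each feature as $x_i = (x_i^{(1)}, x_i^{(\perp)}) \in \mathbb{R} \times \mathbb{R}^{d-1}$, and write
\begin{equation*}
\what/\|\what\|_2 \;=\; \alpha\, e_1 + \sqrt{1-\alpha^2}\, u, \qquad u \perp e_1,\ \|u\|_2 = 1.
\end{equation*}

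Setting $g_i := y_i x_i^{(\perp)}$ and $\rho := \alpha/\sqrt{1-\alpha^2}$, the interpolation constraint $y_i \langle x_i, \what\rangle \geq 0$ becomes $\langle g_i, u\rangle \geq -\rho\, y_i x_i^{(1)}$. Because $x_i^{(\perp)}$ is independent of $(x_i^{(1)}, \xi_i)$, the vectors $g_1,\dots,g_n$ are \iid\ $\NNN(0, I_{d-1})$ and independent of $(x_i^{(1)}, \xi_i, y_i)_{i=1}^n$. On noisy samples ($\xi_i = -1$), $y_i x_i^{(1)} = -|x_i^{(1)}|$ and the constraint simplifies to $\langle g_i, u\rangle \geq \rho |x_i^{(1)}|$; these are the binding ones.

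The heart of the argument is an upper bound on $\rho$. Fix a constant $c_0>0$ with $p_0 := \Pr(\xi=-1,\, |Z|\geq c_0) > 0$; such $c_0$ exists under Assumption A, for if $\xi = 1$ almost surely on $\{|Z|>0\}$ then $\nu \mapsto \EE(1-\xi\nu|Z|)_+^2$ is strictly decreasing to $0$ and admits no finite minimizer. Define $S := \{i: \xi_i=-1,\ |x_i^{(1)}| \geq c_0\}$. A Chernoff bound gives $|S| \geq p_0 n/2$ with probability at least $1 - \exp(-c_1 n)$. For any interpolator we have $\min_{i\in S} \langle g_i, u\rangle/|x_i^{(1)}| \geq \rho$, so by Sion's minimax theorem applied to the convex relaxation $\|u\|_2\leq 1$ and the dual choice $\lambda_i \propto |x_i^{(1)}|$,
\begin{equation*}
\rho \;\leq\; \min_{\lambda \in \Delta_{|S|}} \Bigl\|\sum_{i\in S} \lambda_i\, g_i/|x_i^{(1)}|\Bigr\|_2 \;\leq\; \frac{\bigl\|\sum_{i\in S} g_i\bigr\|_2}{\sum_{i\in S} |x_i^{(1)}|}.
\end{equation*}
Conditional on $S$, $\sum_{i\in S} g_i \sim \NNN(0, |S|\, I_{d-1})$, so $\chi^2$-concentration gives $\|\sum_{i\in S} g_i\|_2 \leq C\sqrt{|S|\, d}$ with probability $\geq 1-\exp(-c_2 d)$. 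Combined with $\sum_{i\in S}|x_i^{(1)}| \geq c_0 |S|$, this yields $\rho \lesssim \sqrt{d/|S|} \lesssim \sqrt{d/n}$, and hence $1-\alpha^2 = 1/(1+\rho^2) \gtrsim n/d$, as desired.

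The main subtlety will be the Assumption-A step: extracting the positive constant $p_0$ in full generality (the contrapositive argument above is essentially one line, but the inequality $\bar\nu < \infty \Rightarrow p_0 > 0$ has to be stated carefully for piecewise-continuous $\mathbb{P}_\sigma$). Once that is in hand, the rest is a clean combination of Chernoff for $|S|$, $\chi^2$ concentration for $\|\sum_{i \in S} g_i\|_2$, and LP duality for the feasibility bound on $\rho$. The probability accounting $1-\exp(-c_1 n)-\exp(-c_2 d)$ matches the statement exactly.
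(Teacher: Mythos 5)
Your proof is correct, but it takes a genuinely different route from the paper's. The paper derives this bound by invoking the Gaussian Minimax Theorem (a mild variant of Proposition~\ref{prop:CGMT_application_classification}) to pass to the auxiliary problem, then applies a modification of Lemma~\ref{quadratic_bound_fn} to lower bound $\tilde{\fn}$ by a quadratic, and finally uses a Cauchy--Schwarz-plus-$\chi^2$ bound on $\frac{1}{n}\langle \wtwo, H\rangle^2$ to extract $\|\wtwo\|_2^2/\nu^2 \gtrsim n/d$. You bypass the GMT entirely: after restricting to the noisy constraints $i\in S$ (whose positive density you extract by essentially the same contrapositive-of-Assumption-A argument the paper uses in the proof of Lemma~\ref{quadratic_bound_fn}), you bound the separating margin $\rho$ of the perpendicular Gaussian vectors $g_i/|x_i^{(1)}|$ directly in the primal via Sion/LP duality and $\chi^2$ concentration on $\|\sum_{i\in S} g_i\|_2$. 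Both proofs exploit the same mechanism --- a $\Theta(n)$-sized set of violated-by-the-signal samples forces the perpendicular component to have squared norm $\gtrsim n/d$ --- but your primal argument is more elementary and self-contained, requiring only Chernoff, $\chi^2$ tails, and convex duality rather than the paper's GMT machinery. The trade-off is that the paper's route reuses infrastructure (Propositions~\ref{prop:CGMT_application_classification} and Lemma~\ref{quadratic_bound_fn}) already built for Theorem~\ref{thm:classificationlp}, so it adds little marginal length there, whereas your argument stands on its own. The probability budget $1-\exp(-c_1 n)-\exp(-c_2 d)$ in your proof correctly matches the statement, and your handling of the $\alpha\le 0$ case and the $\RiskC=2(1-\alpha)\ge 1-\alpha^2$ reduction are both fine. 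The only place worth being a bit more explicit, as you yourself flag, is in deducing $p_0:=\Pr(\xi=-1,|Z|\ge c_0)>0$ from Assumption~A: you should first note that $\Pr(\xi=-1)>0$ (else the objective in Assumption~A has no finite minimizer), then use continuity of measure to pick $c_0>0$ small enough that $\Pr(\xi=-1,|Z|<c_0)<\Pr(\xi=-1)$.
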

% As we can see, this lower bound is again of the same order as the term $\frac{n  \exp(\kappa_5q)}{qd}$ appearing in the upper bound in Theorem~\ref{thm:classificationlp}, assuming that $p$ (and thus $q$) is constant. 
% Finally, we note that a similar uniform lower bound has been established for linear regression
% \cite{muthukumar_2020,zhou20} using a very different proof technique. 

\paragraph{Comparison with regression.}
Comparing the rates in Figures~\ref{fig:rates_reg} and \ref{fig:ratescla}, we can see that the max-$\ell_p$-margin interpolators in classification achieve faster rates than the corresponding min-$\ell_p$-norm interpolators in regression, in the sense that they are of order of minimax lower bounds for a wider range of $\beta$ and $p\in(1,2)$. 
% Indeed, the max-$\ell_p$-margin interpolators achieve minimax optimal rates for $\beta \in [2,\frac{\frac{3}{2} p-1}{3p-3}]$ compared to $\beta =2$  for min-$\ell_p$-norm interpolators in regression.
We remark that the proofs of Theorems~\ref{thm:regressionlpmain} and \ref{thm:classificationlp} follow a similar scheme (see
detailed discussion in Section~\ref{sec:proofidea}).
Intuitively,
the difference in performance mostly originates from the fact that the  directional estimation error $\RiskC$ merely depends on the direction of the interpolator $\what$ and not its magnitude,
as it is the case for the estimation error $\RiskR$ in regression. We note that the authors in \cite{muthukumar2021classification} observe a similar difference in performance between max-margin and min-norm interpolators for $p=2$
and Gaussians features with spiked covariance matrices.

 \begin{figure*}
    \centering
\begin{subfigure}[b]{0.32\linewidth}
         \centering
         \includegraphics[width=\textwidth]{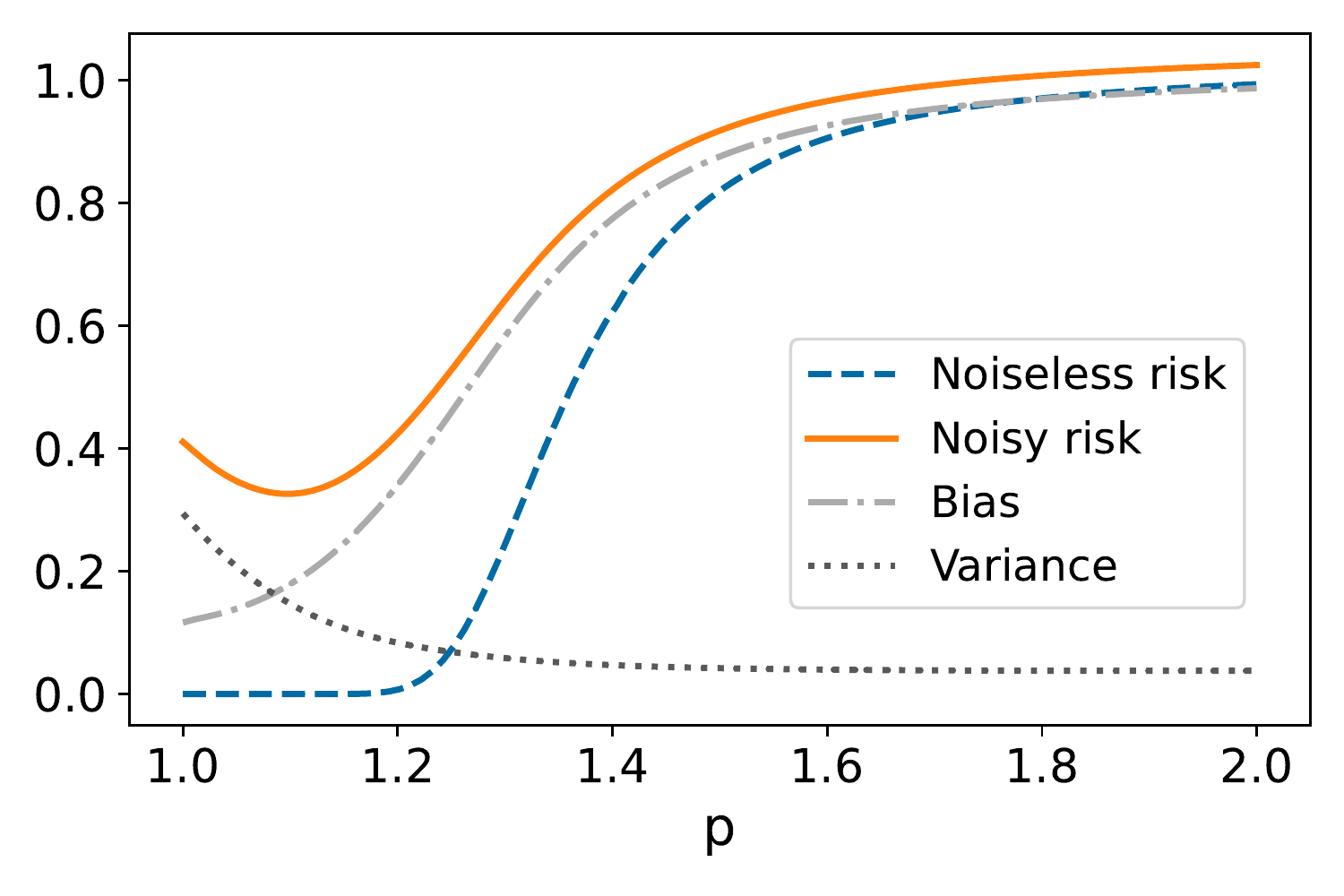}
         \caption{ New bias-variance trade-off}
         \label{fig:synthetic_regression_bias_variance}
     \end{subfigure}
     \begin{subfigure}[b]{0.32\linewidth}
         \centering
   \includegraphics[width=\textwidth]{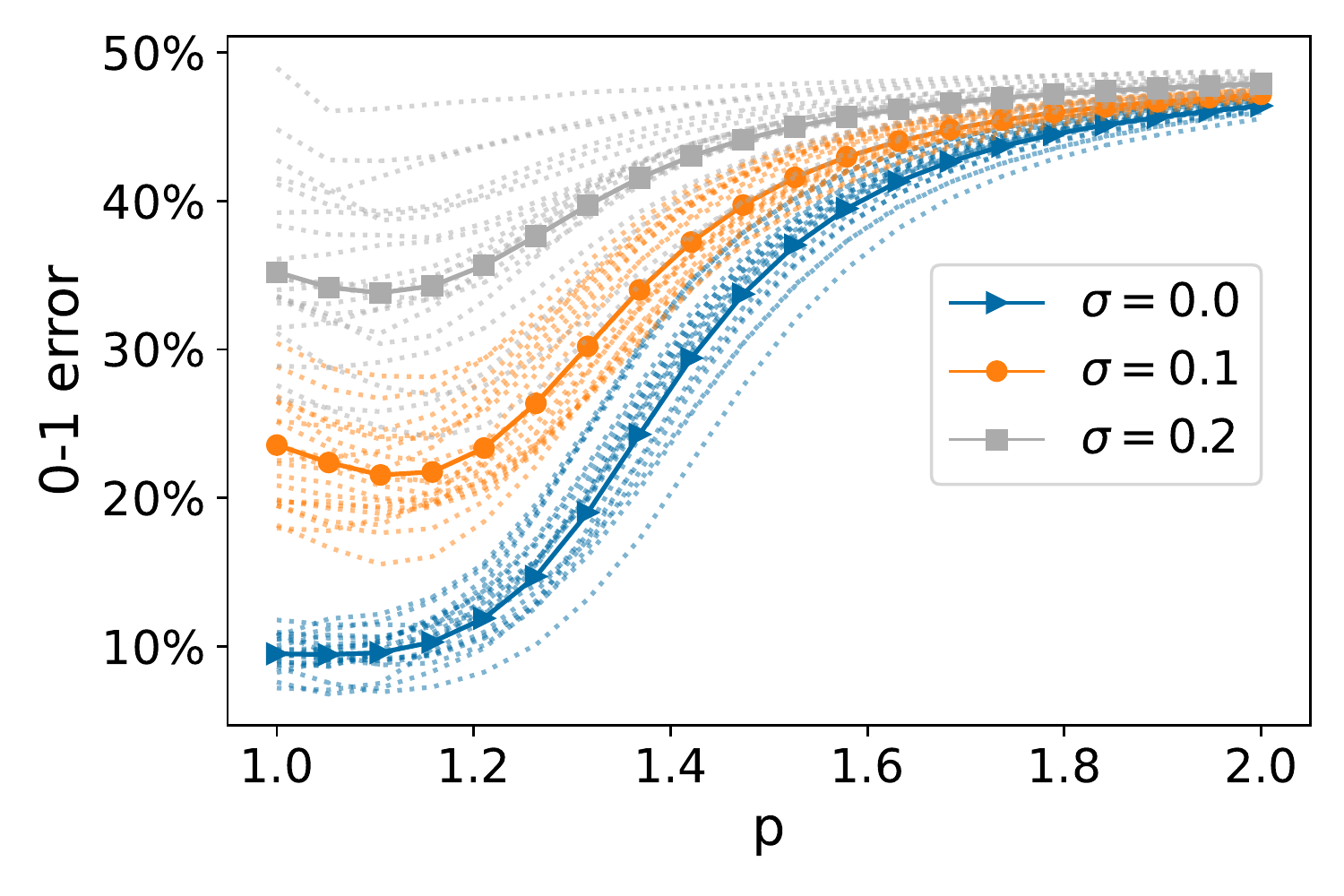}
         \caption{Simulations Classification}
         \label{fig:synthetic classification subplot}
     \end{subfigure}
\begin{subfigure}[b]{0.32\linewidth}
         \centering
         \includegraphics[width=\textwidth]{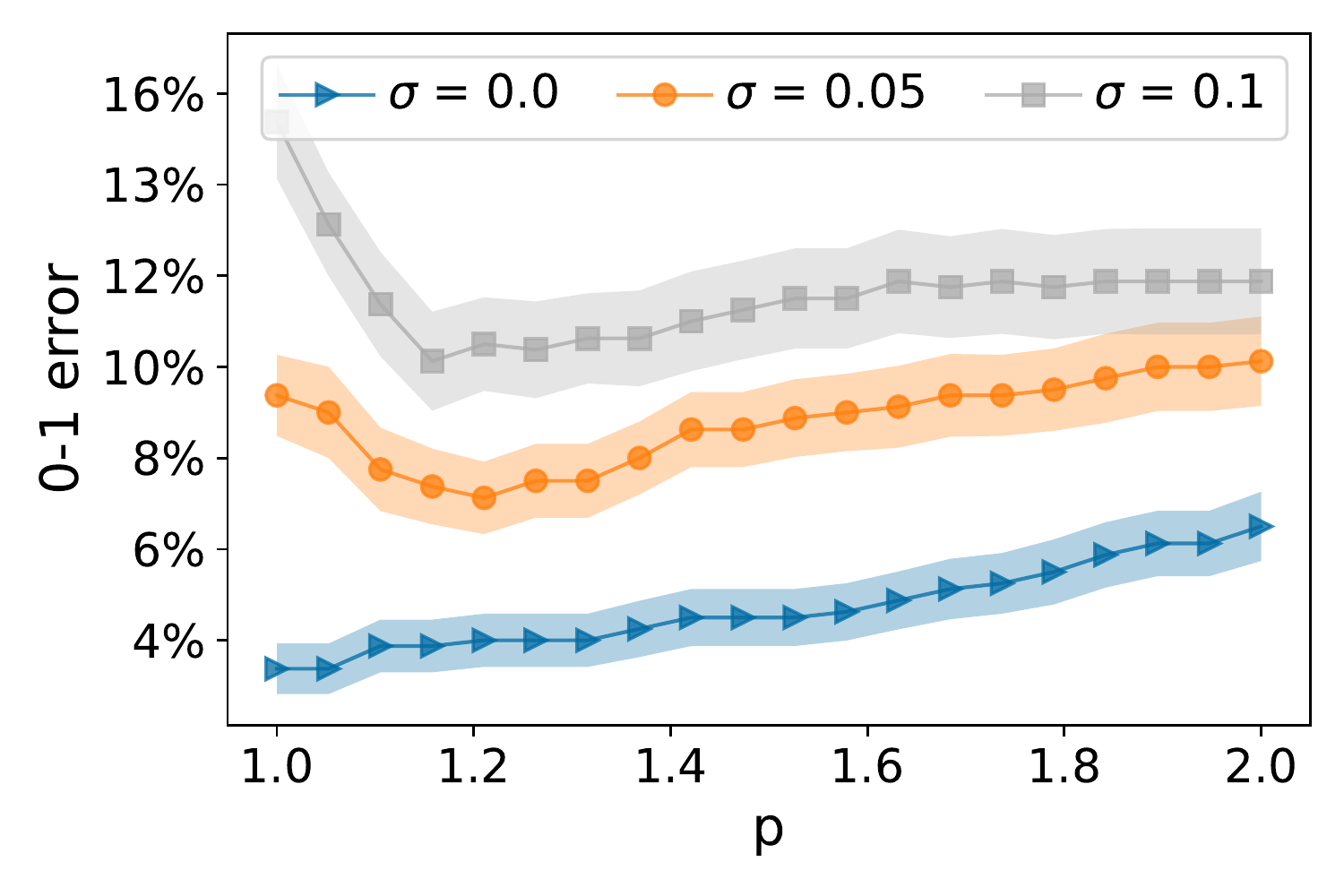}
         \caption{Leukemia classification dataset}
         \label{fig:leukemia classification subplot}
     \end{subfigure} 
         \caption{
         \textbf{(a)} The averaged estimation error (risk) as well as the estimated bias and variance of the min-$\ell_p$-norm interpolator \eqref{eq:minlpnomrinterpol} as a function of $p$. We run 50 independent simulations with data drawn from the distribution in Section~\ref{subsec:regsetting} and choose $n= 100,d=5000$, $\sigma =1$ (noisy) and $\sigma = 0$ (noiseless).
         %The bias and variance are estimated from 50 simulation rounds.
         \textbf{(b)} Average and individual classification error for the max-$\ell_p$-margin interpolator \eqref{eq:svm problem} with $n=100$, $d=5000$ over 50 independent runs. 
         %data generated with a 1-sparse ground truth $w^{\star}$ and $y=\sign(\langle x, w^{\star}\rangle) \xi$ and fixed fraction $\sigma\%$ of label noise. %\fy{i feel like need to at least briefly mention in text what that has to do with our noise model for precision}
         %, with $\mathbb{P}(\xi=-1)=\sigma, \mathbb{P}(\xi=+1)=1-\sigma$. 
         \textbf{(c)}
         Mean and variance of the classification error for the max-$\ell_p$-margin interpolator \eqref{eq:svm problem}
          on the Leukemia data set with $d=7070$ and $72$ data points in total. See Section~\ref{sec:discussion} for further details.}
    \label{fig:classification svm}
\end{figure*}

\section{Numerical Simulations and Interpretation of the Results}
\label{sec:discussion}

%We recall that $p$ stands for the strength of the inductive bias towards sparse solutions, which increases for $p\to 1$. 
%In this section, we provide experiments that show the superiority of interpolators with moderate inductive bias, i.e., $p \in (1,2)$, compared to strong ($p=1$) and no inductive bias ($p=2$). While this is predicted by Theorem~\ref{thm:regressionlpmain} for large $d,n$, we present empirical evidence that the phenomenon holds for finite (and moderate) $d,n$ as well.
We now present empirical evidence for our theory\footnote{We release the code for the replication of the experiments at \url{https://github.com/nickruggeri/fast-rates-for-noisy-interpolation}}, 
which predicts the superiority of min-$\ell_p$-norm/max-$\ell_p$-margin interpolators with moderate inductive bias $p\in(1,2)$ compared to $p=1,2$.
Further, we discuss how varying the strength of the inductive bias induces a trade-off between 
structural simplicity aligning with the ground truth and noise resilience in high dimensions.
%that arises in the upper bounds in Theorems~\ref{thm:regressionlpmain} and \ref{thm:classificationlp}. \fy{somehow sounds fishy to talk about upper bounds and tradeoff}
%\emph{the structural alignment with ground truth} and the \emph{regularizing effect due to high-dimensionality} that arises in the upper bounds in Theorems~\ref{thm:regressionlpmain} and \ref{thm:classificationlp}. 
Finally, we show experimentally that this trade-off can also be intuitively  understood as a "new" bias-variance trade-off induced by varying the inductive bias.

\vspace{-1mm}
\paragraph{Simulations for regression and classification}
While Theorems~\ref{thm:regressionlpmain} and \ref{thm:classificationlp} hold for "large enough" $d, n$, we now illustrate experimentally in Figure~\ref{fig:synthetic_regression_bias_variance} and \ref{fig:synthetic classification subplot} that the superiority of the choice $p\in(1,2)$ over $p=1$ and $p=2$ also holds for finite $d,n$ of practically relevant orders:
For $d=5000,n=100$, the min-$\ell_p$-norm/max-$\ell_p$-margin interpolators with $p\in(1,2)$ achieve lowest (directional) estimation error in the noisy case ($\sigma >0$), while $p=1$ is optimal in the noiseless case ($\sigma = 0$).
Both experiments are run on a synthetic dataset with a 1-sparse ground truth $\wstar$, as described in Section~\ref{subsec:regsetting} (regression) and Section~\ref{subsec:clasetting} (classification). For the regression experiment, we plot the average risks over 50 runs for $\sigma =1$ (noisy) and $\sigma = 0$ (noiseless) and the bias and variance in the noisy case. 
For the classification experiment, we randomly flip a fixed amount of $\sigma\%$ of the labels.
 We plot the error of individual random draws and their mean over 50 independent runs.

Furthermore, we examine the performance of the max-$\ell_p$-margin classifier on the Leukemia classification dataset \cite{leukemiapaper} with $d=7070$.
In line with our theory, Figure~\ref{fig:classification svm} shows
that the value of $p$ which minimizes the (directional) estimation error shifts from $p=1$, in the noiseless case,  to $p\in(1,2)$ in the presence random label flips in the data. Noiseless, i.e. $\sigma=0$, here means that we are not adding artificial label noise.   We plot the  averaged error and its variance over $100$ random train-test splits with training set size $n=65$. 
Finally, note that in practical applications with noisy data, we do not advocate  the use of interpolators but recommend using regularized estimators instead. 

\vspace{-1mm}
\paragraph{The strength of the inductive bias induces a new bias-variance trade-off.}
Why the optimal choice of $p$ does not  correspond to the strongest inductive bias ($p=1)$  can be explained by a
trade-off between two competing factors:
the regularizing effect of high-dimensionality and the effective sparsity of the solutions, matching the simple structure of the ground truth. In particular, with increasing inductive bias (decreasing $p\to 1$), noise resilience due to high dimensions decreases while the interpolator becomes effectively sparser and hence achieves better performance on noiseless data. This trade-off is also reflected  in Theorems~\ref{thm:regressionlpmain} and \ref{thm:classificationlp}: The terms of order $\frac{\sigma^2 n  \exp(\kappa_4 q)}{qd}$ 
capture the noise resilience due to high-dimensionality, or rather vulnerability, and monotonically decrease with $d$ but increase as $p\to 1$.
On the other hand, the terms of order $\frac{ q^p\sigma^{4-2p}d^{2p-2}}{n^p}$ and $\frac{q^{\frac{3}{2}p}d^{3p-3} \log^{3/2}d }{n^{\frac{3}{2}p}}$
capture the benefits of
the structural alignment with the sparse ground truth as they decrease with $p\to1$.

We now give a more intuitive reasoning for this trade-off,  which may also translate to more general models. Clearly, to recover noiseless signals in high dimensions, the space of possible solutions must be restricted by using an inductive
bias that encourages the structure of the interpolator to match that of the ground truth.
For instance, the solution of the min-$\ell_1$-norm interpolator is always $n$-sparse despite having $d$ parameters. 
However, exactly  this restriction  towards a certain structure (such as sparsity)  becomes harmful
when fitting noisy labels: Instead of low $\ell_2$-norm 
solutions that can distribute noise across all dimensions, 
the interpolator is forced to find $n$-sparse 
solutions with a higher $\ell_2$-norm.

In Figure~\ref{fig:synthetic_regression_bias_variance}, we demonstrate how
the trade-off 
can also be viewed as a novel kind of bias-variance trade-off for interpolating models. 
In the classical bias-variance trade-off, increasing  model complexity (e.g., by decreasing the regularization penalty) leads to larger variance but smaller (statistical) bias. For interpolating models, we observe a similar trade-off when varying the strength of the inductive.  Figure~\ref{fig:synthetic_regression_bias_variance} depicts this trade-off and shows that the optimal (directional) estimation error is attained at the $p$ where both terms are approximately the same.

\vspace{2mm}
\section{Proof Idea and Discussion of the Assumptions}
\label{sec:proofidea}
We now provide the proof sketch followed by a discussion of the assumptions in Theorem~\ref{thm:regressionlpmain} and \ref{thm:classificationlp}. 
\subsection{Proof idea}
\label{subsec:proofintuition}

The proofs of Theorem~\ref{thm:regressionlpmain} and \ref{thm:classificationlp} follow a localized uniform convergence argument which is standard in the literature on empirical risk minimization. More specifically, we first upper bound the $\ell_p$-norm of the interpolators by 
%\vspace{-0.1in}
\begin{equation}
     \min_{ \forall i:\:\langle \xui , w\rangle =y_i } \norm{w}_p^p  \leq \loneboundreg~~\mathrm{and}  \min_{\forall i:\:\yui \langle x, w \rangle \geq 1} \norm{w}_p^p   \leq \loneboundclas.
\end{equation}
%\vspace{-0.1in}
In a second step, we uniformly bound the risk over all interpolating models with $\loneboundreg$/$\loneboundclas$-bounded $\ell_p$-norm
\ifarxiv
\begin{align}
    \RiskR(\what) &\leq  \max_{\substack{\norm{w}_p^p \leq \loneboundreg \\ \forall i:\:\langle \xui , w\rangle =\yui}} \norm{w - \wgt}_2^2 =: \Phi_{\mathcal{R}} %\label{eq:proofideareg} 
    ~~\mathrm{and}~~
    \RiskC(\what)\leq 2 - 2 \underbrace{\left[
    %\frac{1}{2} - \frac{1}{2} \Risk(\what) &\geq  
    \min_{\substack{\norm{w}_p^p \leq \loneboundclas\\ 
    \forall i:\:\yui \langle x_i, w \rangle \geq 1}}
    \frac{\langle w, \wgt\rangle}{\norm{w}_2} \right]}_{=: \Phi_{\mathcal{C}}}. %\label{eq:proofideaclas}
\end{align}
\else
\begin{align}
    \RiskR(\what) &\leq  \max_{\substack{\norm{w}_p^p \leq \loneboundreg \\ \forall i:~\langle \xui , w\rangle =\yui}} \norm{w - \wgt}_2^2 =: \Phi_{\mathcal{R}} \label{eq:proofideareg} 
        \\
    \RiskC(\what) &\leq 2 - 2 \underbrace{\left[
    %\frac{1}{2} - \frac{1}{2} \Risk(\what) &\geq  
    \min_{\substack{\norm{w}_p^p \leq \loneboundclas~~\mathrm{and} \\ 
    \forall i:~ \yui \langle x_i, w \rangle \geq 1}}
    \frac{\langle w, \wgt\rangle}{\norm{w}_2} \right]}_{=: \Phi_{\mathcal{C}}}. \label{eq:proofideaclas}
\end{align}
\fi
We now only discuss the uniform convergence argument as the localization step to upper bound $\loneboundreg$ and $\loneboundclas$ follows from a similar argument. 

The proof exploits the assumption that the ground truth is $1$-sparse (the extension to $s$-sparse ground truths is discussed in Appendix~\ref{rm:p}).  We decompose $w = (\wone, \wtwo)$, with $\wone$ the first entry of $w$ (since $\wgt = (1,0,\cdots,0)$), and abbreviate $\eta = \norm{\wtwo}_2$. Furthermore,  we define $\nu = (w -\wgt)_{[1]} = \wone -1$  for regression and $\nu = \wone$  for classification. Define \vspace{-5mm}

\ifarxiv
\begin{align}
         &\phi_{\mathcal{R}} := \max_{(\nu,\eta) \in \Omegareg} \nu^2 +\eta^2
         \leq \max_{\nu \in \Omegareg}~\nu^2 + \max_{\eta \in \Omegareg}~\eta^2  \\\nonumber 
         \mathrm{and}~~
 &\phi_{\mathcal{C}} :=\min_{ (\nu,\eta) \in \Omegaclas} \frac{\nu }{\sqrt{\nu^2+\eta^2}} \geq \left(1 + \frac{\max_{\eta \in \Omegaclas} ~\eta^2}{\min_{\nu \in \Omegaclas}~ \nu^2}  \right)^{-1/2} \label{eq:phimlpv1main} \, ,
    \end{align}
    \else
\begin{align}
         &\phi_{\mathcal{R}} := \max_{(\nu,\eta) \in \Omegareg} \nu^2 +\eta^2
         \leq \max_{\nu \in \Omegareg}~\nu^2 + \max_{\eta \in \Omegareg}~\eta^2  \nonumber \\
 &\phi_{\mathcal{C}} :=\min_{ (\nu,\eta) \in \Omegaclas} \frac{\nu }{\sqrt{\nu^2+\eta^2}} \geq \left(1 + \frac{\max_{\eta \in \Omegaclas} ~\eta^2}{\min_{\nu \in \Omegaclas}~ \nu^2}  \right)^{-1/2} \label{eq:phimlpv1main} \, ,
    \end{align}
\fi
with constraint sets 
\ifarxiv
\begin{multline}%\label{eq:omegareg}
    \Omegareg = 
    \bigg\{(\nu, \eta) \:\vert \:\exists b>0 \subjto (1+\nu)^p + b^p \leq \loneboundreg \text{ and}~
    \frac{1}{n}\norm{H}_q^2 b^2  \geq \frac{1}{n}\sumin \left( \xi_i - Z_i \nu  - \tilde{Z}_i \eta\right)^2 \bigg\} \, ,
\end{multline}
\else
\begin{multline}%\label{eq:omegareg}
    \Omegareg = 
    \bigg\{(\nu, \eta) \:\vert \:\exists b>0 \subjto (1+\nu)^p + b^p \leq \loneboundreg \text{ and} \\
    \frac{1}{n}\norm{H}_q^2 b^2  \geq \frac{1}{n}\sumin \left( \xi_i - Z_i \nu  - \tilde{Z}_i \eta\right)^2 \bigg\} \, ,
\end{multline}
\fi
and
\ifarxiv
\begin{multline}\label{eq:omegaclas}
    \Omegaclas = 
    \bigg\{(\nu, \eta) \:\vert\: \exists b>0 \subjto \nu^p + b^p \leq \loneboundclas \text{ and }
      \frac{1}{n}\norm{H}_q^2 b^2  \geq    \frac{1}{n}\sumin \possq{1-\xi_i \vert Z_i\vert \nu  + \tilde{Z}_i \eta} \bigg\} \, ,
\end{multline}
\else 
\begin{multline}\label{eq:omegaclas}
    \Omegaclas = 
    \bigg\{(\nu, \eta) \:\vert\: \exists b>0 \subjto \nu^p + b^p \leq \loneboundclas \text{ and } \\
      \frac{1}{n}\norm{H}_q^2 b^2  \geq    \frac{1}{n}\sumin \possq{1-\xi_i \vert Z_i\vert \nu  + \tilde{Z}_i \eta} \bigg\} \, ,
\end{multline}
\fi
where ${(\cdot)_+ = \max(0,\cdot)}$ and  $H,Z, \tilde{Z}$ are \iid~ Gaussian random vectors.

The key ingredient for the proofs is now that via the (Convex) Gaussian Minimax Theorem (C)GMT \cite{thrampoulidis2015regularized,gordon1988milman},
we can show that $\prob(\Phi_{\mathcal{R}} > t \vert \xi, X') \leq 2 \prob(\phi_{\mathcal{R}}\geq t \vert \xi, X' )$ and $\prob(\Phi_{\mathcal{C}} < t \vert \xi, X') \leq 2 \prob(\phi_{\mathcal{C}} \leq t\vert \xi, X')$ with $X'$ being the subset of features $X$ in the direction of the ground truth. In short, high probability upper and lower bounds for $\phi_{\mathcal{R}}$ and $\phi_{\mathcal{C}}$, yield corresponding high probability bounds for $\Phi_{\mathcal{R}} $ and $\Phi_{\mathcal{C}}$. The (C)GMT has been used previously to obtain similar bounds for regression in \cite{koehler2021uniform,wang2021tight}.

The result can then be obtained by 
carefully bounding the constraint sets $\Omegaclas$ and $\Omegareg$ 
using tight concentration inequalities for the $\ell_q$-norm of \iid~Gaussian random vectors in \cite{paouris_2017}.

\vspace{4mm}
\subsection{Limitations and discussion of the assumptions}
\label{rm:p}
In the following subsections, we  now discuss the assumptions of our main results and how they can be generalized. 

\label{apx:remarks}
\paragraph{Assumption on  $p~ \&~ d$}
While for very large $d$, Theorems~\ref{thm:regressionlpmain} and \ref{thm:classificationlp} hold for most $p\in(1,2)$, even close to $1$, for fixed $n,d$ our theorems apply to the range $p\in \left(1+\frac{\kappa_2}{\log\log d},2\right)$.
This assumption is used to obtain a high probability concentration of the dual norm $\norm{H}_q$ in Lemma~\ref{lm:dualnormconc}.
It is possible and straightforward to relax the assumption on $p$ to allow for smaller $p$, such as  $p \in (1+\frac{\kappa \log \log d}{\log d},2)$, by choosing $\epsilon(n,d)$ in Lemma~\ref{lm:dualnormconc} larger. However, this choice comes at the price of non-matching upper and lower bounds. In fact, when choosing $p =  1 + \frac{\kappa}{\log d}$, the $\ell_q$-norm $\norm{H}_q$ behaves similarly as the $\ell_{\infty}$-norm $\norm{H}_\infty$ (see \cite{paouris_2017}).

\paragraph{Assumption on the sparsity of $\wgt$}
In this paper we study the special case where the ground truth is  $1$-sparse and thus aligns "maximally" with the sparse inductive bias of the $\ell_1$-norm. 
Precisely, choosing a $1$-sparse ground truth significantly simplifies the analysis 
(see Section~\ref{sec:proofidea}) and the  presentation of the bounds in Theorem~\ref{thm:regressionlpmain} and \ref{thm:classificationlp}. However, the application of the (C)GMT in Propositions~\ref{prop:CGMT_applicationreg} and \ref{prop:CGMT_application_classification} holds more generally for $s$-sparse ground truths and even non-sparse ground truths as exploited in \cite{koehler2021uniform}. 
Therefore, the proof methodology presented in this paper can 
also be employed to bound the risk of general $s$-sparse ground truths. However, this would come at the cost of more involved theorems statements, and non-tight upper and lower bounds in $s$.

\paragraph{Assumption on the noise}
Theorems~\ref{thm:regressionlpmain} and \ref{thm:classificationlp} assume that the amount of noise $\sigma$ is fixed and non-vanishing as $n\to\infty$.
This setting is of particular interest as the ground truth is still consistently learnable (unlike in settings where the noise dominates), while the noise prevents  the min-$\ell_1$-norm/max-$\ell_1$-margin interpolators from generalizing well.  In fact, we are the first to prove, for a constant noise setting, that we can consistently learn with min-norm/max-margin interpolators at fast rates.

On the other hand, several works including \cite{chinot_2021,chinot2021adaboost,wojtaszczyk_2010} have also studied the low noise regime where $\sigma\to 0$  as $n,d$ tend to infinity. We remark that  our proof of Theorem~\ref{thm:regressionlpmain} can be directly extended to cover vanishing noise for regression.  On the other hand, the methodology used for the classification results, i.e. Theorem~\ref{thm:classificationlp}, strongly relies on the noise model and we leave the extension of our results to low noise settings (i.e. vanishing fraction of flipped labels)  as an interesting future work.

\paragraph{Assumption on the distribution of the features}
Our proofs strongly rely on the (C)GMT (Propositions~\ref{prop:CGMT_applicationreg} and \ref{prop:CGMT_application_classification}), which crucially hinges on the assumption that the input features are Gaussian. We believe that generalizing the input distribution $\prob_X$ requires the development of novel tools -- an important task for future work. 
For instance, the small-ball method that is known to yield tight bounds for general input distributions for many estimators  \cite{mendelson_2014, koltchinskii_2015} results in loose bounds when applied directly (\cite{chinot_2021}, see also the discussion on this topic in \cite{wang2021tight}). It remains to be seen whether a modified technique based on the small-ball method can be  powerful enough to yield tight bounds for min-norm/max-margin interpolators studied in this paper.

\vspace{1.5mm}
\section{Related Work}
The majority of works that attempt to rigorously understand minimum-norm/max-margin interpolation from a non-asymptotic viewpoint have so far
focused on $\ell_2$-norm interpolators, \cite{bartlett_2020,tsigler_2020,muthukumar_2020}.
However, to be asymptotically consistent as $d,n\to\infty$,  the covariates need to be effectively low-dimensional and aligned with the direction of the ground truth, e.g.~via a spiked covariance structure. 

On the other hand, the existing literature on structured interpolators primarily focuses on the min-$\ell_1$-norm interpolator \cite{wang2021tight,muthukumar_2020,chinot_2021,wojtaszczyk_2010,chatterji_2021}, showing exact rates of order $\frac{1}{\log(d/n)}$. Even though  \cite{koehler2021uniform,zhou2021optimistic,chinot_2021} present frameworks to obtain non-asymptotic bounds for general min-norm interpolators, the discussion of the implications of their results center around the  
$\ell_1$/$\ell_2$-norms.
For classification, so far the only known non-asymptotic upper bound holds for the max-$\ell_1$-margin interpolator --- however, assuming that a constant fraction of data points are mislabeled, the upper bound diverges with growing sample size  \cite{chinot2021adaboost}. %\nr{Unclear here. Do you mean that DESPITE assuming constant noise they hvae diverging bounds?}

Beyond the mentioned non-asymptotic results, many more papers study linear regression and classification in the limit as $d,n \to \infty$ \cite{muthukumar2021classification, hastie2019surprises, dobriban_2018,deng21,li2021minimum}.
In contrast to this paper, these works study the linear regime where ${d/n \to \gamma>0}$ and thus where the errors of interpolators do not vanish.

%Further, many papers study the asymptotic setting where $d,n \to \infty$ and characterize the regimes where the risk asymptotically vanishes \cite{muthukumar2021classification} or give precise characterisations of the asymptotic risk in various regimes where the risk does not vanish (e.g., see \cite{hastie2019surprises, dobriban_2018,deng21}). \fy{yuting?} 
%  These   results can only distinguish between asymptotic consistency and inconsistency but  do not capture the rates at which the (directional) estimation error vanishes - which is needed to study the superiority of the interpolator with $p\in(1,2)$ compared to $p=1$. 

\vspace{1.5mm}
\section{Generality and Future Work}
\label{sec:trade-off}
Our results naturally suggests a variety of impactful follow-up work that may advance the understanding of the effect of inductive bias and interpolation both from a theoretical and empirical perspective.

\subsection{Future  Work on Theory for Linear Interpolators}
%\texorpdfstring{$\ell_p$}{lp}-norm interpolators}
\label{subsec:optimalp}
As a specific question for future work, a natural quantity of interest would be the optimal choice  among all $p \in (1,2)$ as a function of  $n,d$. 
Unfortunately, our presented bounds are not sufficient to provide an explicit answer.
Note that for $\beta <2$ and $p = 1 +\frac{\kappa_2}{ \log \log d}$, our rates match the uniform lower bounds for all interpolators when $n,d$ are sufficiently large. 
On the other hand, for larger $\beta$, the optimum of the upper bound is achieved at $p< 1 + \frac{\kappa_2}{\log \log d}$
which is beyond the range of our analysis. We refer to Section~\ref{rm:p} for a discussion on smaller choices of $p$. 
Giving a precise expression and tight lower bounds supporting the choice remains a challenging task.

\begin{figure}[!t]
    \centering
    \ifarxiv
    \includegraphics[width=.5\linewidth]{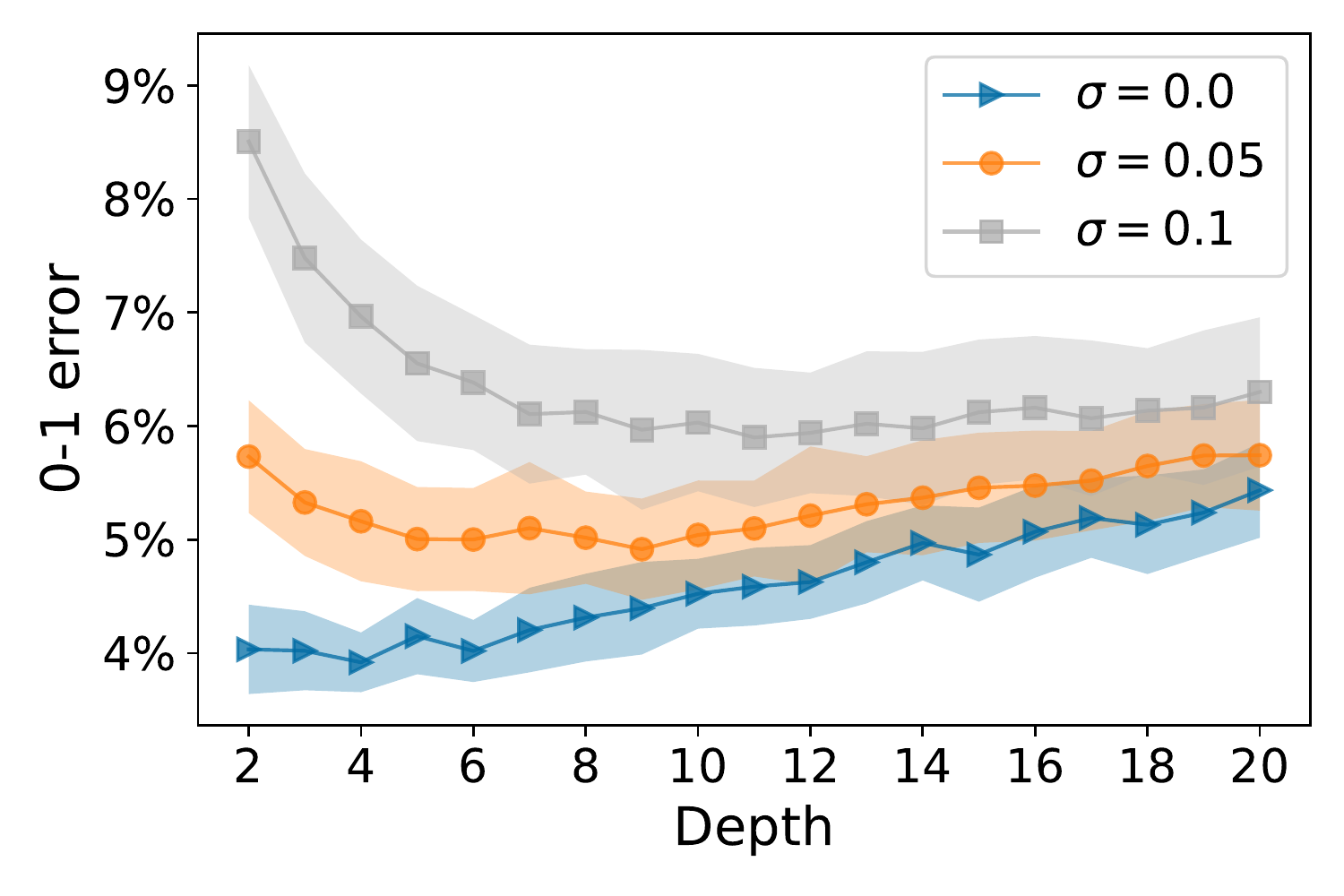}
    \else
    \includegraphics[width=\linewidth]{figures/cntk_depth_vs_risk.pdf}
    \fi
    \caption{Classification with convolutional neural tangent kernels on binarized MNIST. We plot the $0-1$~classification error on the test set as a function of the depth of the model, i.e. the number of stacked convolutional layers with ReLU activation. Means and confidence bands are computed across 50 independent draws of the training set with $n =500$. \vspace{-4mm}}
    \label{fig:cntk}
\end{figure}

\subsection{Non-linear Overparameterized Interpolators}
%{Rethinking the role of inductive bias in interpolation for non-linear models (NN)}

The discussion in Section~\ref{sec:discussion} suggests that
good generalization of interpolating models in the noisy case hinges on a careful choice of the inductive bias and that stronger is not automatically better, as is  perhaps widely assumed.
This stands in contrast to regularized models and noiseless interpolation, where  the best performance is usually attained at the strongest inductive bias with the right amount of regularization. 
One analogous conclusion is that the optimal inductive bias for \emph{noiseless} interpolation may not be optimal for \emph{noisy} interpolation, in particular for small sample sizes. % regime.
 
As a step towards more complex non-linear models, we provide experimental evidence to support our claim on convolutional neural tangent kernels \cite{cntk}. Kernel regression with kernels with convolutional filter structure has shown good performance on real world image datasets  \cite{lee2020} and is thus a good candidate to move towards state-of-the-art models. 
In Figure~\ref{fig:cntk}, we plot the test error as a function of the depth in a binary classification task on the MNIST data for a  small sample size regime (see Appendix~\ref{appendixsec:cntk} for the experimental details).  
When we do not add artificial label noise in the training data, we observe how increasing the depth leads to a monotonic increase in error, suggesting that the inductive bias decreases with depth. However, when adding label noise, the optimal performance is attained at a medium depth and hence a moderate  inductive bias.
These findings suggest a promising avenue for empirical and theoretical investigations evolving around the new bias variance trade-off.

\section{Conclusion}
\label{sec:conc}
In this paper, we showed that min-$\ell_p$-norm/ max-$\ell_p$-margin interpolators can achieve much faster rates with a moderate inductive bias, i.e.,  $p\in(1,2)$,  compared to a strong inductive bias, i.e., $p=1$. 
This arises from a novel bias-variance type trade-off induced by the inductive bias of the interpolating model, balancing the regularizing effect of high-dimensionality and the structural alignment with the ground truth. Based on preliminary experiments on image data with the CNTK in Figure~\ref{fig:cntk}, we further hypothesize that this trade-off carries over to more complex interpolating models and datasets used in practice and leave a thorough investigation as future work. 

\section*{Acknowledgements}
K.D.~is supported by the ETH AI Center and the ETH Foundations of Data Science. N.R.~is supported by the Max Planck ETH Center for Learning Systems.
We would like to thank Nikita Zhivotovskiy for helpful discussions and Kai Lion for his preliminary experiments. Finally we would like to thank the anonymous reviewers for their feedback.

% We presented the first non-asymptotic bounds for the risk of min-$\ell_p$-norm regression and max-$\ell_p$-margin classification in high dimensions. While the results themselves are novel, the faster (and sometimes minimax optimal) convergence rates attained by $p$ values inside $(1, 2)$ reveal an inherently different behaviour of interpolating models

% We presented the first non-asymptotic bounds for the risk of min-$\ell_p$-norm regression and max-$\ell_p$-margin classification in high dimensions. While the results themselves are novel, the faster (and sometimes minimax optimal) convergence rates attained by $p$ values inside $(1, 2)$ reveal an inherently different behaviour of interpolating models with respect to regularized ones: in the presence of noise, the optimum inductive bias is not the strongest, but rather the one that balances the \emph{regularizing effect of high-dimensionality} and the emph{structural alignment with the ground truth}. We conjecture that this inductive bias argument carries over to more complex interpolating models used in practice, and leave such connection as important future work.

%is a strong limitation for the applicability which we motivate future work to improve is the fact that 
\bibliography{references}
%\printbibliography
\bibliographystyle{icml2022}

%%%%%%%%%%%%%%%%%%%%%%%%%%%%%%%%%%%%%%%%%%%%%%%%%%%%%%%%%%%%%%%%%%%%%%%%%%%%%%%
%%%%%%%%%%%%%%%%%%%%%%%%%%%%%%%%%%%%%%%%%%%%%%%%%%%%%%%%%%%%%%%%%%%%%%%%%%%%%%%
% APPENDIX
%%%%%%%%%%%%%%%%%%%%%%%%%%%%%%%%%%%%%%%%%%%%%%%%%%%%%%%%%%%%%%%%%%%%%%%%%%%%%%%
%%%%%%%%%%%%%%%%%%%%%%%%%%%%%%%%%%%%%%%%%%%%%%%%%%%%%%%%%%%%%%%%%%%%%%%%%%%%%%%
\newpage
\appendix
\onecolumn

\section{Experimental details for the convolutional NTK}
\label{appendixsec:cntk}
We include here details on the experiments with the convolutional neural tangent kernel (CNTK) presented in Section~\ref{sec:trade-off}. 

\paragraph{Binarized MNIST} We perform a binary classification task on a reduced version of the MNIST data set \cite{deng2012mnist}. In particular, we only utilize the digits from 0 to 5 (included), and define the binary labels as $y=-1$ if the digit is even, $y=+1$ if it is odd. To stay closer to the high dimensional regime, we randomly subsample only $n=500$ data points for training, the remaining ones are used for testing. The label noise is added by randomly flipping exactly $\sigma \%$ of the labels only on the training set.

\paragraph{The CNTK} The CNTK is a kernel method based on the analytical solutions of infinite width neural networks trained via gradient flow. In particular, a convolutional neural network with a given architecture has a corresponding CNTK, defined as its limit for an infinite number of channels and trained to convergence via MSE loss. We include here a short description of the model. For additional details we refer to \cite{cntk, neuraltangents2020} and references therein. \\
Consider a given neural network architecture $f_\theta$, parameterized by $\theta$. \cite{jacot2018neural, cntk} showed that, under initialization and optimization conditions, the CNTK can be formalized as a kernel method with
\begin{equation}
    k(x, x') = \mathbb{E}_\theta \left< 
        \frac{\partial f_\theta(x)}{\partial \theta} ,
        \frac{\partial f_\theta(x')}{\partial \theta}
    \right> \, .
\end{equation}
For Gaussian weights, the expected value above can be computed in closed form via recursion through the layers, which allows this model to be used in practice and without sampling.

\paragraph{Architecture} All the implementations of the CNTK are done via the Neural Tangents Python library \cite{neuraltangents2020}. In our experiments, we utilize CNTK architectures with the following structure: 
\begin{itemize}
    \item we stack convolutional layers followed by a ReLU activation $l$ times, where $l$ is what we call the depth of the architecture.
    We utilize a kernel size 3, stride 1 and padding at every layer. 
    \item a final flatten layer followed by a linear layer with one output neuron, containing the logit of the classification probability.
\end{itemize}

The results for this experiment are presented in Figure~\ref{fig:cntk}.

\section{Comparison of Theorem~\ref{thm:regressionlpmain} and \ref{cr:koehler}}
\label{rm:koehler}
We now discuss how our proof of Theorem~\ref{thm:regressionlpmain} differs from the proof of Theorem~\ref{cr:koehler} (i.e., Theorem~4 \cite{koehler2021uniform}). Theorem 4 in \cite{koehler2021uniform} (summarized in  Theorem~\ref{cr:koehler}) also applies to non-sparse ground truths $\wgt$ and is therefore expected to be less tight. Nevertheless, we  discuss the major differences with respect to their proof, which allow us to obtain tighter bounds and illustrate the different rates in Figure~\ref{fig:ratesk}.
Balancing the terms $\sqrt{\frac{d^{2/q}}{n}} $ and $\frac{1}{d^{1/q}}$, we obtain an optimal rate of order~$n^{-1/4}$ in Theorem~\ref{cr:koehler}. We now discuss how each of these terms can be further tightened in the bounds in Theorem~\ref{thm:regressionlpmain}.

\paragraph{Use of triangular inequality in $\PhiNR,\PhipR$ and $\PhimR$.} The definitions of the terms  $\PhiNR,\PhipR$ and $\PhimR$  in the proof of Theorem~\ref{thm:regressionlpmain} in Appendix~\ref{apx:regproof} differ from the ones used in \cite{koehler2021uniform} for the localization and uniform convergence steps. In \cite{koehler2021uniform}  the authors rely on  the triangular inequality in the localization and  uniform convergence steps by bounding 
\begin{align}
   &\PhiNR := \norm{\what}_p^p = \min_{X w = \xi}  \norm{w+\wgt}_p^p
        \leq \|\wgt\|_p +  \min_{X w = \xi}  \norm{w}_p \leq \widetilde{\lonebound} \, ~~~\mathrm{and}\\
    & \PhipR :=
\max_{\substack{
      \norm{w+\wgt}_p \leq  \widetilde{\lonebound} \\ 
            Xw=\xi
        }}  \norm{w}_2^2 \leq \max_{\substack{
            \norm{w}_p \leq  \widetilde{\lonebound} + \norm{\wgt}_p \\ 
            Xw=\xi
        }}  \norm{w}_2^2.
\end{align}
While such a procedure also covers non-sparse ground truth, the resulting bounds are not tight when $\wgt$ is sparse.
Instead, we directly bound the terms $\PhiNR$, $\PhipR$(for which we make use of the sparsity of $\wgt$), which effectively allows us to  improve the terms of order $\sqrt{\frac{  d^{2/q}}{n}}$ in Theorem~\ref{cr:koehler} to the term of order $\frac{\sigma^{4-2p}q^pd^{2p-2}}{ n^p}$ in Theorem~\ref{thm:regressionlpmain}.

\paragraph{Bounds for $\fnreg$.} Another important improvement is in the bounds used for $\fnreg$. While the proofs in \cite{koehler2021uniform} rely on a bound of the form of Equation~\eqref{eq:lpreghighnoisestep1}, in order to obtain faster rates, our analysis requires on the tighter bound in Equation~\eqref{eq:regressionfn1}. 
More specifically, in Theorem~\ref{thm:regressionlpmain} this allows us to avoid the term  $\sqrt{\frac{\log d}{n}}$.

\paragraph{Concentration inequalities for $\norm{H}_q$.}
The concentration inequalities for the dual norm $\norm{H}_q$ are a crucial ingredient for the proof of the theorem. The ones used in the proofs in \cite{koehler2021uniform} rely on the concentration of Lipschitz continuous functions (see, \cite{Ledoux1992AHS}) and also apply to more general norms. However, for the specific case of $\ell_p$-norms we can make use of much tighter concentration inequalities from \cite{paouris_2017} (see Appendix~\ref{apx:proofdualnormconc}) to avoid the terms involving $\sqrt{\frac{\log(d)}{q}} \frac{1}{ d^{1/q}}$  in the bound in Theorem~\ref{cr:koehler}.

\begin{figure}[t]
    \centering
    \includegraphics[width=0.5\linewidth]{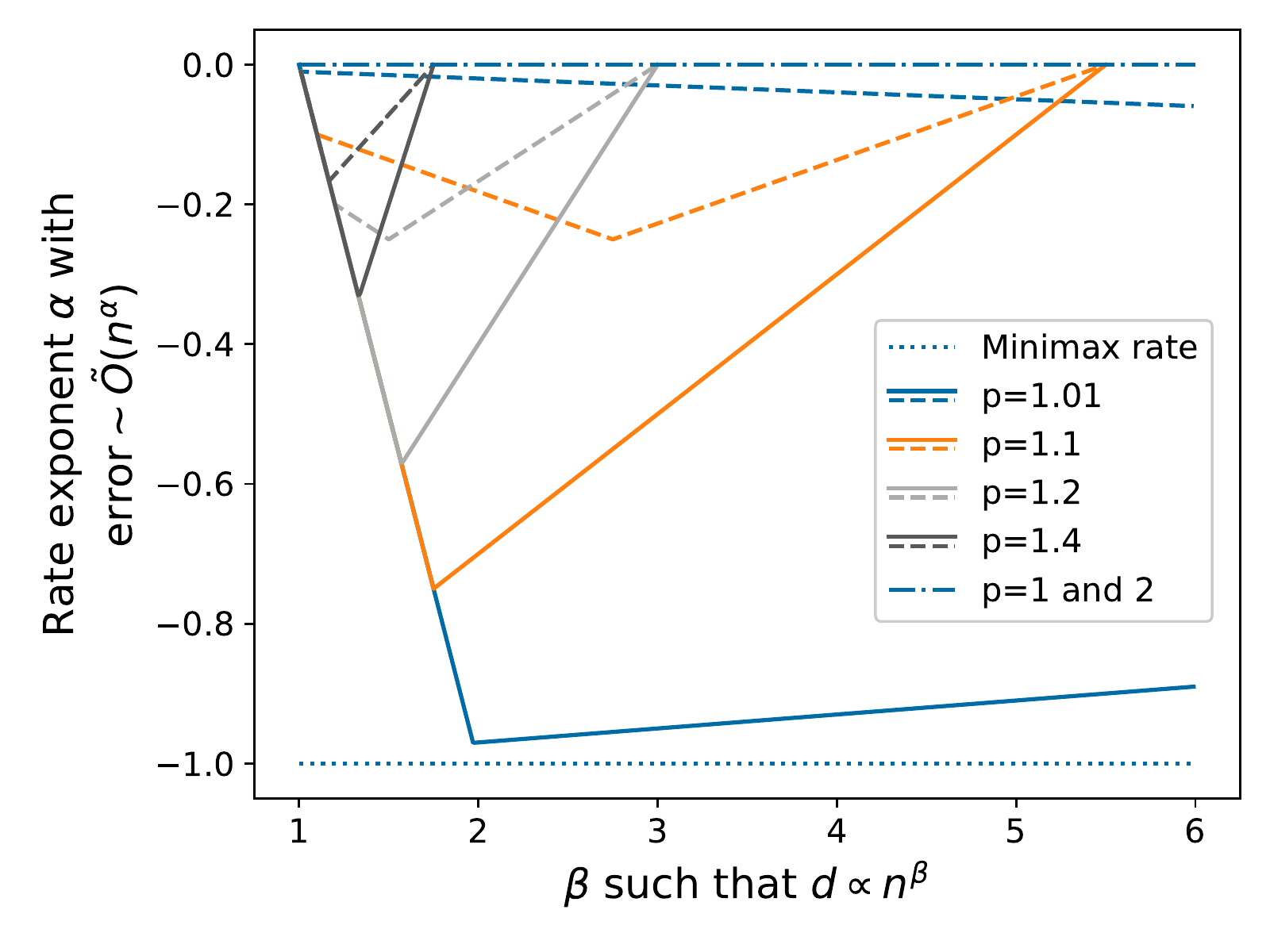}
         \caption{Depiction of the rates in  Theorem~\ref{thm:regressionlpmain} (solid line) and Theorem~\ref{cr:koehler} (dashed lines): classification. Order of the theoretical rates $\tilde{O}(n^{\alpha})$ for different strengths of inductive bias $p$ at different asymptotic regimes. On the x-axis we represent the value $\beta$ defining the regime $d=n^{\beta}$, on the y-axis the risk decay $\alpha$.}
         \vspace{-0.1in}
    \label{fig:ratesk}
\end{figure}

\section{Proof of Theorem~\ref{thm:regressionlpmain}}
\label{apx:regproof}
Throughout the remainder of the appendix we use $\kappa_1,\kappa_2,...$ and $c_1,c_2,...$ for generic universal positive constants independent of $d$, $n$ or $p$. The value $c_1,c_2,...$ may change from display to display throughout the derivations. The standard notations $O(\cdot), o(\cdot), \Omega(\cdot), w(\cdot)$ and $\Theta(\cdot)$, as well as $\lesssim,\gtrsim$ and $\asymp$, are utilized to hide universal constants, without any hidden dependence on $d$, $n$ or $p$.   
Throughout the proof, whenever we say with high-probability, we mean with probability $\geq 1 - c_1 d^{-c_2}$ with universal constants $c_1,c_2 >0$.

The lower bound $\RiskR(\what) \gtrsim \frac{n}{d}$ follows  simply from the uniform lower bound for all interpolators in Theorem~1 and Corollary~1 in \cite{muthukumar_2020}. 

For the rest of the proof of the statement, 
we use a  localization/maximization approach,
similar to the papers \cite{chinot_2021,koehler2021uniform,ju_2020,muthukumar_2020,wang2021tight}, and common in the literature. More specifically, the proof consists of two major parts:
\begin{enumerate}
    \item \emph{Localization.}
    We derive a  high-probability upper bound $\loneboundreg$ in Proposition~\ref{prop:phidnhighreg} that only depends on the noise $\xi$ but not on the draws of the features $X$. 
    More specifically, for the norm of the min-$\ell_p$-norm interpolator $\what$
    \begin{align} \label{eq:Phi_Nreg} 
         \PhiNR := \norm{\what}_p^p =  \min_{ X(w-\wgt) = \xi} & \norm{w}_p^p
        = \min_{X w = \xi}  \norm{w+\wgt}_p^p
        \leq \loneboundreg
    \end{align}
    and consequently
    \mbox{$\norm{\what}_p^p \leq \loneboundreg$},
    with high probability.
    
    \item \emph{Uniform convergence.}
     Conditioning on the  draw of the noise $\xi$, we derive high-probability uniform upper and lower bounds (Proposition~\ref{prop:highnoiselpreg}) on the prediction error for all interpolators $w$ with  $\|w\|_p^p \leq \loneboundreg $, 
    \begin{align} \label{eq:Phi_+reg}
        \PhipR := \max_{\substack{
            % \norm{w}_p^p \leq \loneboundreg \\ % CONVENTION 1
            \norm{w}_p^p \leq \loneboundreg \\ % CONVENTION 2
             X(w-\wgt)=\xi
        }} & \norm{w-\wgt}_2^2 
        = \max_{\substack{
            % \norm{w}_p^p \leq \loneboundreg+\norm{\wgt}_p^p \\ % CONVENTION 1
            \norm{w+\wgt}_p^p \leq \loneboundreg \\ % CONVENTION 2
            Xw=\xi
        }}  \norm{w}_2^2
        ~~~\mathrm{and}\\
     \label{eq:Phi_-reg} 
            \PhimR := \min_{\substack{
            % \norm{w}_p^p \leq \loneboundreg \\ % CONVENTION 1
            \norm{w}_p^p \leq \loneboundreg \\ % CONVENTION 2
             X(w-\wgt)=\xi
        }} & \norm{w-\wgt}_2^2 
        = \min_{\substack{
            % \norm{w}_p^p \leq \loneboundreg+\norm{\wgt}_p^p \\ % CONVENTION 1
            \norm{w+\wgt}_p^p \leq \loneboundreg \\ % CONVENTION 2
            Xw=\xi
        }}  \norm{w}_2^2.
    \end{align}

\end{enumerate}

\subsection{Application of the (C)GMT}
\label{apx:subsec:gmtreg}
Similar to \cite{koehler2021uniform}, we now discuss how the (Convex) Guassian Minimax Theorem ((C)GMT) \cite{gordon1988milman,thrampoulidis2015regularized} can be used to simplify the study of the primal optimization problems defining $\PhiNR,\PhipR$ and
$\PhimR$ via the introduction of auxiliary optimization problems $\phiRdn,\phiRdp$ and $\phiRdm$ defined below. This approach also applies to general sparse ground truths with $\norm{\wgt}_0 =s$. 
% \paragraph{Application of the (Convex) Gaussian Minimax Theorem.} 

We begin with the introduction of some additional notation. Assume that $\wgt$ is $s$-sparse which motivates us to separate $w$ into the vector $\wone \in \RR^s$ consisting of the $s$-sparse entries containing the support of $\wgt$ and $\wtwo \in \RR^{d-s}$ consisting of the remaining entries. Similarly, decompose $\wgt$ into $\wgtp$ and $\wgtp'=0$. 
Using this notation, we can express $\yui \langle \xui, w \rangle = \yui \left(\langle \xsui, \wone \rangle + \langle \xscui, \wtwo \rangle\right)$ with $\xsui$ and $\xscui$ the corresponding subvectors of $\xui$.  
We denote by $\Xs = ( x'_1,\cdots,x'_n )$ and $X'' = (  x''_1,\cdots,x''_n )$ the sub-matrices  of $X=(x_1,\cdots,x_n )$.

 We now make use of the fact that $\xscui$ is independent of $\yui$ and $\xsui$ and define $\fnreg : \RR^{s}\times\RR \to \RR_+$ with 
    \begin{equation}
        \fnreg(\wone,\|\wtwo\|_2 ) = \frac{1}{n}\| \xi - \Xs \wone - \Ggaussian \|\wtwo\|_2 \|_2^2.
    \end{equation}
  with $\Ggaussian \sim \NNN(0,I_n)$, 
 By reformulating the optimization problems via  Lagrange multipliers, we can then apply the CGMT to  $\PhiNR$ and the GMT to  $\PhipR,\PhimR$. As a result, we obtain the following result: 
% can use a standard arguemnt as presented in (cite some works) to apply the (C)GMT, resulting in Proposition~\ref{prop:CGMT_application}. Details for the proof are given in Appendix.... 

\begin{proposition}
% \begin{proposition} 
\label{prop:CGMT_applicationreg}
    % Denote $\loneboundregp = \loneboundreg + \norm{\wgt}_p^p$. % CONVENTION 1
    %Denote $\loneboundregp =  \loneboundreg + 2\norm{\wgt}_p^p$. % CONVENTION 2
      Let $p \geq 1$ be some constant and assume that $\wgt$ is $s$-sparse. 
  For $\Hgaussian \sim \NNN(0,I_{d-s})$ define the stochastic auxiliary optimization problems:\footnote{We define $\PhiNR,\PhimR,\phiRdn,\phiRdm = \infty$ and $\PhipR,\phiRdm = -\infty$ if the corresponding optimization problems have no feasible solution.}
        \begin{align}
        \label{eq:phin}
            \phiRdn &= \min_{(\wone,\wtwo)} \norm{\wone + \wgtp}_p^p+\norm{\wtwo}_p^p
            \subjto \frac{1}{n} \innerprod{\wtwo}{\Hgaussian}^2 \geq  \fnreg(\wone,\|\wtwo\|_2) \\
       \label{eq:phip}
            \phiRdp &= \max_{(\wone,\wtwo)} \norm{\wone}_2^2+\norm{\wtwo}_2^2
            \subjto \begin{cases}
                \frac{1}{n} \innerprod{\wtwo}{\Hgaussian}^2 \geq  \fnreg(\wone,\|\wtwo\|_2)\\
              \norm{\wone+\wgtp}_p^p+\norm{\wtwo}_p^p \leq \loneboundreg
            \end{cases}\\
        \label{eq:phim}
            \phiRdm &= \min_{(\wone,\wtwo)} \norm{\wone}_2^2+\norm{\wtwo}_2^2
            \subjto \begin{cases}
               \frac{1}{n} \innerprod{\wtwo}{\Hgaussian}^2 \geq  \fnreg(\wone,\|\wtwo\|_2) \\
               \norm{\wone + \wgtp}_p^p+\norm{\wtwo}_p^p  \leq \loneboundreg
            \end{cases}
        \end{align}
where $\loneboundreg>0$ is a constant possibly depending on $\xi$ and $\Xs$.  Then for any $t \in \RR$, we have:
        \begin{align}
            \PP( \PhiNR > t \vert \xi, \Xs) &\leq 2\PP( \phiRdn \geq t \vert \xi, \Xs)
            \\
            \PP( \PhipR > t \vert \xi, \Xs) &\leq 2\PP( \phiRdp \geq t \vert \xi, \Xs) 
            \\
            \PP( \PhimR < t \vert \xi, \Xs) &\leq 2\PP( \phiRdm \leq t \vert \xi, \Xs) 
           ,
        \end{align}
        where the probabilities on the LHS and RHS are over the draws of $X''$ and of $\Ggaussian,\Hgaussian$, respectively. 
\end{proposition}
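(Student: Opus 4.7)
The plan is to apply the (Convex) Gaussian Minimax Theorem ((C)GMT) of \cite{gordon1988milman,thrampoulidis2015regularized} to each of the three primal problems, conditionally on the ``signal part'' $(\xi,\Xs)$. The key observation is that $X''$ has i.i.d.\ standard Gaussian entries and is independent of $(\xi,\Xs)$, so conditionally on the latter it plays precisely the role of the free Gaussian matrix in the (C)GMT. Concretely, I would introduce a Lagrange multiplier $u\in\RR^n$ for the linear equality constraint $Xw=\xi$ in each primal problem, rewriting for instance
\[
\PhiNR \;=\; \min_{(\wone,\wtwo)} \; \max_{u\in\RR^n}\; \norm{\wone+\wgtp}_p^p + \norm{\wtwo}_p^p + u^T(\Xs\wone + X''\wtwo - \xi),
\]
and analogously for $\PhipR,\PhimR$ with the additional $\ell_p$-ball constraint $\norm{\wone+\wgtp}_p^p+\norm{\wtwo}_p^p \leq \loneboundreg$ carried as an explicit side constraint on $(\wone,\wtwo)$. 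After decomposition, the only bilinear dependence on $X''$ is the single term $u^T X''\wtwo$; everything else depends only on $(\wone,\wtwo,\xi,\Xs)$ or on the $\ell_p$-ball constraint, which itself is independent of $X''$.

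The (C)GMT then replaces $u^T X''\wtwo$ by $\norm{\wtwo}_2\, \Ggaussian^T u + \norm{u}_2\, \Hgaussian^T \wtwo$ with independent $\Ggaussian\sim\NNN(0,I_n)$ and $\Hgaussian\sim\NNN(0,I_{d-s})$. For $\PhiNR$ and $\PhimR$, the resulting Lagrangian is convex-concave in $(w,u)$, so the full CGMT yields the stated one-sided probability comparisons; for $\PhipR$, which maximises the convex function $\norm{w}_2^2$, I would rewrite the maximum as $-\min(-\norm{w}_2^2)$ and use the weaker (non-convex) Gordon inequality, which still delivers the required one-sided bound $\PP(\PhipR > t\vert\xi,\Xs)\leq 2\PP(\phiRdp\geq t\vert\xi,\Xs)$. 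Next I would evaluate the inner maximum over $u$ in each auxiliary problem: combining the linear term $u^T(\Xs\wone-\xi+\norm{\wtwo}_2 \Ggaussian)$ with the positively homogeneous $\norm{u}_2\,\Hgaussian^T\wtwo$, the supremum over $u\in\RR^n$ is finite (and equal to zero) iff $\norm{\Xs\wone-\xi+\norm{\wtwo}_2\Ggaussian}_2 + \Hgaussian^T\wtwo \leq 0$. Since flipping $\wtwo\mapsto -\wtwo$ preserves $\norm{\wtwo}_p^p$, $\norm{\wtwo}_2^2$ and every other term while negating $\Hgaussian^T\wtwo$, the implicit sign restriction on $\Hgaussian^T\wtwo$ is costless, and squaring the above condition yields exactly $\tfrac{1}{n}\langle\wtwo,\Hgaussian\rangle^2 \geq \fnreg(\wone,\norm{\wtwo}_2)$, the constraint defining \eqref{eq:phin}--\eqref{eq:phim}.

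The main technical obstacle is that the Lagrange multiplier $u$ ranges over the unbounded set $\RR^n$, whereas the standard (C)GMT is stated for compact $S_u$. I would handle this by the usual truncation argument: restrict $u$ to $\{\norm{u}_2 \leq R\}$, apply (C)GMT at finite $R$, and take $R\to\infty$, using that any primal point violating $Xw=\xi$ (resp.\ any auxiliary point violating the squared-norm inequality) drives the inner maximum to $+\infty$ in the limit, so the limiting comparison reduces to the stated form. A secondary point requiring care is verifying that the deterministic $\ell_p$-norm side constraint for $\PhipR,\PhimR$ survives the (C)GMT application unchanged; this is immediate since it depends only on $(\wone,\wtwo)$ and not on $X''$, so it may simply be kept in both the primal and the auxiliary feasibility sets.
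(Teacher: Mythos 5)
Your proposal is correct and matches the paper's approach: the authors simply cite Lemmas~4 and~7 of \cite{koehler2021uniform} for this proposition, and the argument you outline---Lagrangianize the equality constraint, apply the GMT/CGMT conditionally on $(\xi,\Xs)$ with $X''$ as the free Gaussian matrix, use the sign-flip symmetry in $\wtwo$ to turn the positively homogeneous inner max over $u$ into the squared constraint $\tfrac{1}{n}\langle\wtwo,\Hgaussian\rangle^2\geq\fnreg(\wone,\|\wtwo\|_2)$, and recover $\PhipR$ via the non-convex Gordon direction after flipping sign---is exactly that recipe, mirrored in the paper's own proof of the classification analogue in Appendix~\ref{apx:subsec:cgmtproof}. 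One small detail you gloss over: for $\PhiNR$ not only $u$ but also $w$ ranges over an unbounded set, so the truncation step must additionally restrict $w$ (e.g.\ to $\norm{w+\wgt}_p^p\leq t$, which leaves the event $\{\PhiNR>t\}$ unchanged), precisely as done via $\PhiCN^r(t)$ in the classification proof.
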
   
The proof is analogous to Lemma 4 and Lemma 7 in \cite{koehler2021uniform}. Proposition~\ref{prop:CGMT_applicationreg} allows us to reduce the optimization problems depending on the random matrix $\datamat\in \mathbb{R}^{n \times d}$ to optimization problems depending on the much smaller random matrix $\Xs \in \mathbb{R}^{n \times s} $ and two additional random vectors $\Hgaussian,\Ggaussian$.

\subsection{Localization step - bounding \texorpdfstring{$\phiRdn$}{phiN}}
\label{apx:subsec:localisationproofreg}

The goal of this first step is to give a high probability upper bound on $\phiRdn$ from Proposition~\ref{prop:CGMT_applicationreg}. We now return to the $1$-sparse ground truth $\wgt=(1,0,\cdots,0)$ assumed in the theorem statement, which allows us to write $\phiRdn$ as
\begin{equation}
\label{eq:lpregphidnproof}
    \phiRdn = \min_{(\wone,\wtwo)} \vert \wone + 1\vert^p+\norm{\wtwo}_p^p
            \subjto \frac{1}{n} \innerprod{\wtwo}{\Hgaussian}^2 \geq  \fnreg(\wone,\|\wtwo\|_2)
\end{equation}

\begin{remark}[A simplifying modification for the proof.]
\label{rm:simplicification}
By definition,  $\Hgaussian \in \RR^{d-1}$. Instead, for the ease of presentation, we  prove the result assuming that $\wgt, x_i \in \RR^{d+1}$ and hence $\Hgaussian \in \RR^d$, which does not affect the expression for the bounds in Equation~\eqref{eq:thmregboundhighnoiseuppermain} in Theorem~\ref{thm:regressionlpmain}. 
\end{remark}

We now  derive  a bound for $\phiRdn$ that holds with high probability over the draws of $H,G,X',\xi$ and define $\dmoment{q} = \EE \|H\|_{q}$ and  $\sigmaxi^2 = \frac{1}{n} \|\xi\|_2^2$ with $H \sim N(0,I_d)$ and $\mutild := \frac{\dmoment{2q/p}^{2q/p}}{\dmoment{q}^{2q/p}}$. 

% \end{proposition}
\begin{proposition}
\label{prop:phidnhighreg}
Let the data distribution be as described in Section~\ref{subsec:regsetting} and assume that $\sigma = \Theta(1)$. 
Under the same conditions as in Theorem~\ref{thm:regressionlpmain} for $n,d,p$, there exist universal constants $c_1,c_2,\cdots,c_8>0$ such that with  probability at least $ 1- c_1 d^{-c_2}$ over the draws of $\Ggaussian,\Hgaussian,\Xs,\xi$, it holds that
\begin{align}
\phiRdn  \leq \left(\frac{n \sigmaxi^2}{\dmoment{q}^2}\right)^{p/2}\left(1+ \frac{p}{2} \frac{\nu_0^2}{\sigmaxi^2}  \left(1+ c_6 \rho \right) + \OOOc + c_7 \left( \rho^2 + \frac{\rho \vert \nu_0 \vert}{\sigmaxi}\right) \right)
+ \nugt + p\nu_0\left(1+ c_8 \nu_0 \right) =: \loneboundreg,
\label{eq:lpphinbound}
\end{align}
with $\OOOc = c_5 \frac{ n \exp(c_3q)}{d q }$,  $\rho = \frac{\log^{c_4}d}{\sqrt{n}}$ and $\nunull =  -\sigmaxi^{2} \left(\frac{\dmoment{q}^2}{n \sigmaxi^2}\right)^{p/2} =  \Theta \left( \frac{q^p d^{2p-2}}{n^p} \right)^{1/2}$.
\end{proposition}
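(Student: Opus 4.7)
The plan is to exhibit an explicit feasible pair $(\wone^\star,\wtwo^\star)$ for the auxiliary program~\eqref{eq:lpregphidnproof} whose objective value is bounded by the right-hand side of~\eqref{eq:lpphinbound}. Since $\wgt$ is $1$-sparse, the coupling between $\wtwo$ and $\Hgaussian$ through Hölder's inequality makes it cheapest (in $\ell_p$-norm) to pick $\wtwo$ perfectly aligned with $\Hgaussian$ in the Hölder sense. Concretely I would set $\wtwo^\star_i = b\,\mathrm{sgn}(\Hgaussian_i)|\Hgaussian_i|^{q-1}/\|\Hgaussian\|_q^{q-1}$, which satisfies $\|\wtwo^\star\|_p = b$, $\langle \wtwo^\star,\Hgaussian\rangle = b\|\Hgaussian\|_q$ and $\|\wtwo^\star\|_2^2 = b^2\|\Hgaussian\|_{2q-2}^{2q-2}/\|\Hgaussian\|_q^{2q-2}$, a ratio which concentrates around $\mutild$ by Gaussian $\ell_q$-norm estimates. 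For the scalar coordinate I would take $\wone^\star = \nunull$, motivated by the observation that $\nunull$ is the unique minimizer, to leading order, of the linear-plus-quadratic expansion of the objective subject to the leading-order constraint.

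With this choice, the task reduces to determining the smallest $b$ that makes the constraint $\tfrac{b^2\|\Hgaussian\|_q^2}{n} \geq \fnreg(\wone^\star,\eta^\star)$ hold, where $\eta^\star = \|\wtwo^\star\|_2$. Expanding
\[
\fnreg(\wone^\star,\eta^\star) = \tfrac{1}{n}\|\xi - \Xs\wone^\star - \Ggaussian\eta^\star\|_2^2
\]
gives $\sigmaxi^2 + (\wone^\star)^2\tfrac{\|\Xs\|_2^2}{n} + (\eta^\star)^2\tfrac{\|\Ggaussian\|_2^2}{n}$ minus twice the three inner-product cross terms. Standard Gaussian concentration of $\tfrac{1}{n}\|\Xs\|_2^2$ and $\tfrac{1}{n}\|\Ggaussian\|_2^2$ around $1$, together with bounds on the cross terms, all at relative scale $\rho$, absorb everything into multiplicative $1+O(\rho)$ errors; combined with the $\|\Hgaussian\|_q/\dmoment{q} = 1+O(\rho)$ concentration supplied by Lemma~\ref{lm:dualnormconc}, the constraint becomes
\[
b^2\dmoment{q}^2/n \geq \sigmaxi^2 + \nunull^{\,2} + \mutild\, b^2 + (\text{errors of relative size }\rho).
\]
I would then solve for $b^p$ via a short bootstrap that moves the $\mutild\,b^2$ term to the left: the assumption $d \lesssim n^{q/2}\log^{-\kappa_4 q}n$ is precisely what forces $\mutild\,n/\dmoment{q}^2 \ll 1$, guaranteeing a contraction. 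This yields $b^p \leq (n\sigmaxi^2/\dmoment{q}^2)^{p/2}$ multiplied by an expansion of the form $1 + \tfrac{p}{2}\nunull^{\,2}/\sigmaxi^2 + \OOOc +$ error, where the $\OOOc = c_5 n\exp(c_3 q)/(qd)$ contribution precisely tracks $\mutild\,n/\dmoment{q}^2$ after invoking the sharp Gaussian $\ell_q$-moment bounds from~\cite{paouris_2017}.

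To finish I evaluate the objective $|\wone^\star+1|^p + b^p$ using a second-order Taylor expansion of $t\mapsto(1+t)^p$ at $t=0$, giving $\nugt + p\nunull(1 + c_8\nunull)$ with $\nugt = 1 = \|\wgt\|_p^p$ the constant term. Adding this to the bound on $b^p$ derived above reproduces~\eqref{eq:lpphinbound} term by term, and the feasibility argument controls $\phiRdn$ by the value of the objective at $(\wone^\star,\wtwo^\star)$.

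The main obstacle I expect is the careful bookkeeping of error terms through the bootstrap that solves for $b$: since $\eta^\star$ depends on $b$ and $b$ is defined implicitly through the constraint, one has to iterate the inequality and show the contraction ratio is strictly less than one uniformly in the admissible range of $d,p$. A second subtle point is that Lemma~\ref{lm:dualnormconc} must give multiplicative $1+O(\rho)$ concentration of $\|\Hgaussian\|_q$ rather than merely additive concentration; this multiplicative accuracy is exactly the strength of the Paouris-type bound, and it is also the technical reason the theorem requires $p > 1 + \kappa_2/\log\log d$ rather than allowing $p$ all the way down to $1$.
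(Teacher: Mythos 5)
Your choice of witness $(\wone^\star,\wtwo^\star)=(\nunull,\, b\,\partial\|\Hgaussian\|_q)$, the resulting identities $\|\wtwo^\star\|_p=b$, $\langle\wtwo^\star,\Hgaussian\rangle=b\|\Hgaussian\|_q$, $\|\wtwo^\star\|_2^2=b^2\|\Hgaussian\|_{2(q-1)}^{2(q-1)}/\|\Hgaussian\|_q^{2(q-1)}$, the expansion of $\fnreg$ via Gaussian concentration, the solve-for-$b$ step (with its implicit self-dependence through $\eta^\star = b\|\wgrad\|_2$), and the final Taylor expansion of $t\mapsto(1+t)^p$ are exactly the steps the paper carries out. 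So the route is the same; the two places where your sketch falls short of the stated bound are quantitative.

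First, you repeatedly invoke $\|\Hgaussian\|_q/\dmoment{q}=1+O(\rho)$ as the concentration rate for the dual norm. That is too lossy to reproduce~\eqref{eq:lpphinbound}: the error in the parenthesis of the proposition is of order $\nu_0^2 + \OOOc + \rho^2 + \rho|\nu_0|$, with no bare $O(\rho)$ term. If the $\|\Hgaussian\|_q$ concentration only gave a multiplicative $1+O(\rho)$ error, you would inject an $O(\rho)$ error into $b^p$, and since $\rho = \log^{c_4}d/\sqrt n$ can dominate $\nu_0^2$ (e.g.\ at $\beta=2$, $p$ near $1$, one has $\nu_0^2\asymp 1/n$ while $\rho\asymp 1/\sqrt n$), the resulting bound on $\phiRdn$ would be strictly weaker than claimed and would propagate to a slower rate in Theorem~\ref{thm:regressionlpmain}. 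The Paouris-type Lemma~\ref{lm:dualnormconc} in fact gives multiplicative accuracy $1+O(\epsilon)$ with $\epsilon = \max(n/d,\log^{c_3}d/n)$, and the paper uses precisely that $\epsilon = O(\OOOc + \rho^2)$. Your closing remark acknowledges the multiplicative vs.\ additive distinction, but the multiplicative rate you quote is not sharp enough.

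Second, you attribute the smallness $\mutild\, n/\dmoment{q}^2\ll1$ to the upper bound $d\lesssim n^{q/2}\log^{-\kappa_4 q}n$. In fact $\mutild\,n/\dmoment{q}^2\asymp n/(qd)$ (using $\mutild\asymp d^{1/q-1/p}$ and $\dmoment{q}^2\asymp q\,d^{2/q}$ from Proposition~\ref{cor:hqbound2}), so this quantity is driven to zero by the \emph{lower} bound $d\gtrsim n\log^{\kappa_3}n$. The upper bound on $d$ instead enforces $\nu_0^2\asymp(q\,d^{2/q}/n)^{p}\to0$, which is the condition you need when you apply the second-order Taylor expansions of $(1+x)^p$ and $(1+x)^{2/p}$. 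This mix-up does not change the structure of the argument, but it is the kind of bookkeeping that must be right for the constants $\kappa_1,\ldots,\kappa_4$ in the theorem statement to be chosen consistently.
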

Note that $\loneboundreg$ only depends on $\xi$  but not on $\Hgaussian$ and $\Ggaussian$, and thus, $\loneboundreg$ is a valid choice for the upper bound of the $\ell_p$-norm of $w$ in the constraints of $\PhipR,\PhimR$ and  Proposition~\ref{prop:CGMT_applicationreg}.  Furthermore, we note that the bound in the proposition holds with high-probability over the draws of $\xi$ despite explicitly depending on $\sigmaxi$.  However, replacing $\sigmaxi$ with $\sigma$ in the Equation~\eqref{eq:lpphinbound} by applying the high-probability bound $\vert \sigmaxi^2 -\sigma^2\vert \lesssim \sqrt{\frac{\logjone{d}}{n}}$ \fy{fix this} would lead to a loose bound in the subsequent uniform convergence step (Section~\ref{apx:subsec:unfiformconvreg}).
\begin{proof}

%  Furthermore, we recall that $\wone$ is just a one dimensional scalar. We abbreviate $\wone= \nu$ and $\norm{\wtwo}_2 = \eta$. Further, we denote with $\nugt = \wgt_1 \in \RR^d$ to be the only non-zero entry. 
%\fy{say somewhere that on a high level here we wanna get bound for $b^2$ first and then use simple Taylor to get the $b^p$ (and vice versa in the other auxiliary thing}
We condition on the high probability event $\vert \sigmaxi - \sigma\vert \lesssim \sqrt{\frac{\log d }{n}}$ and $\sigma \asymp 1$ by assumption, which we will use in multiple occasions implicitly throughout the reminder of the proof. 
By definition, any feasible $w=(w',w'')$ in Equation~\eqref{eq:lpregphidnproof} must satisfy $\frac{1}{n} 
\innerprod{\wtwo}{\Hgaussian}^2  \geq \fnreg(\wone,\|\wtwo\|_2 ) $.
Since the goal is to upper bound the solution of the minimization problem in Equation~\eqref{eq:lpregphidnproof}, it suffices to find one feasible point. 
In a first step, 
%we make use of high-probability upper bounds for $\fnreg$. In particular, 
following standard concentration arguments (as used for instance in the  proof of Lemma 5 in \cite{koehler2021uniform}), we can show that with probability at least $ 1-c_1 d^{-c_2}$ the following equation holds uniformly over all $w',w''$ 
\begin{align}
    \label{eq:regressionfn1}
 \vert \fnreg(\wone, \norm{\wtwo}_2) - \sigmaxi^2 -(\wone)^2 -\norm{\wtwo}_2^2\vert &\lesssim \tilde{\rho}\left[\sigmaxi  (\norm{\wtwo}_2 + \vert \wone \vert) + (\wone)^2 +\norm{\wtwo}_2^2\right]
\end{align}
with $\tilde{\rho} = \sqrt{\frac{\logjone{d}}{n}} $. 
We now show that we can choose $\wone = \nu_0$  and   $\wtwo$ to be a rescaled version of the subgradient of $H$, i.e. $\wtwo = b \wgrad = b \partial \|H\|_q$ with some $b \in \RR_+$. We have $\langle \wtwo, H \rangle = \|w\|_p \norm{H}_q =  b \norm{\wgrad}_p \norm{H}_q  = b \norm{H}_q$ and  thus suffices to show that
\begin{align}
\frac{b^2 \norm{H}_q^2}{n} &\geq  \sigmaxi^2+  (\nunull^2+b^2 \|\wgrad\|_2^2)(1+ c_1 \tilde{\rho} ) + c_2 \tilde{\rho} \sigmaxi (\vert \nunull\vert+b\|\wgrad\|_2) \\ 
\impliedby
b^2 &\geq \frac{ \sigmaxi^2 +  \nunull^2(1+ c_1 \tilde{\rho} ) + c_2 \tilde{\rho} \sigmaxi (\vert \nunull\vert+b\|\wgrad\|_2) }{\frac{\norm{H}_q^2}{n} - \norm{\wgrad}_2^2 (1+ c_1 \tilde{\rho} ) }. \label{eq:lpboundphinb} 
\end{align}
% \begin{align}\frac{1}{n} \innerprod{\wtwo}{\Ggaussian}^2 \geq \fnreg(\|\wone\|_2,\|\wtwo\|_2) \\ \impliedby
% b^2 \norm{H}_q^2/n \geq  \sigmaxi^4 + \tilde{\rho} \sigmaxi^2 \vert b \norm{\wgrad}_2 \vert +\rho b \norm{\wgrad}_2 \vert \nunull\vert + \rho \sigmaxi^2 \vert \nunull \vert + (1+\rho) (\nunull^2 + b^2 \norm{\wgrad}_2^2) \\
% \impliedby
% b^2 \geq \frac{1}{\norm{H}_q^2/n - (1+\rho)\norm{\wgrad}_2^2 } \left(  \sigmaxi^4 + \rho \sigmaxi^2 \vert b \norm{\wgrad}_2 \vert + \rho \sigmaxi^2 \vert \nunull \vert +  (1+\rho) \nunull^2  \right)\\
% \end{align}

\paragraph{Finding feasible $b$} We begin by studying the denominator, for which we can use concentration results for the $\ell_q$-norm of Gaussian random vectors in \cite{paouris_2017}, summarized in the following lemma
%Appendix~\ref{apx:concentrationdual}. 
\begin{lemma}
\label{lm:dualnormconc}
There exist universal constants $c_1,c_2,\cdots,c_6>0$ such that for any $n \geq c_1$, $n \lesssim d \lesssim \exp(n^{c_2})$, $q \lesssim \log\logjone{d}$ 
and for $\epsilon = \max\left(\frac{n}{d}, \frac{\log^{c_3}(d)}{n}\right)$ 
%\red{fix macro, also seems unclear regarding log notation, also after moving the $\epsilon$ notation should be adjusted to this section}
 with probability at least $ 1-c_4 d^{-c_5}$, 
\begin{align}
\langle \wgrad, H\rangle & = \dmoment{q}\left(1+O\left(\epsilon \right)\right)   %= \Theta(d^{\frac{1}{q} +\frac{\log(q)}{2\logjone{d}}})
~~\mathrm{and} \label{eq:dualnormconc1}
\\
\norm{\wgrad}_2^2 &= %\frac{\norm{H}_{2q/p}^{2q/p}}{\norm{H}_q^{2q/p}} =
\mutild
\left( 1 + O\left(\epsilon q \right)\right)  \label{eq:dualnormconc2}%= O\left(d^{\frac{1}{q}-\frac{1}{p} + \frac{c_6 q}{\logjone{d}}}\right) \lor \Omega\left(d^{\frac{1}{q}-\frac{1}{p} - \frac{c_6 q}{\logjone{d}}}\right)\label{eq:dualnormconc2},
\end{align}
where $p$ is such that $\frac{1}{p} + \frac{1}{q} = 1$. %\red{make fractions consistent}
Further it holds that
\begin{equation}
    \dmoment{q} = \Theta(d^{\frac{1}{q} + \frac{\log q}{2 \log d}}) \:\: \text{and} \:\: \mutild = O\left(d^{\frac{1}{q}-\frac{1}{p} + \frac{c_6 q}{\logjone{d}}}\right) \lor \Omega\left(d^{\frac{1}{q}-\frac{1}{p} - \frac{c_6 q}{\logjone{d}}}\right). 
\end{equation}
\end{lemma}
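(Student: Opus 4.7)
\textbf{Proof sketch for Lemma~\ref{lm:dualnormconc}.}

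The plan is to rewrite both quantities in closed form using the explicit formula for the subgradient, and then reduce each to a concentration statement for $\ell_r$-norms of standard Gaussian vectors, to which we can apply the sharp bounds of \cite{paouris_2017}. Recall that for $H\neq 0$ the $\ell_q$-subgradient is $w_{\mathrm{grad},i}=\mathrm{sign}(H_i)|H_i|^{q-1}/\|H\|_q^{q-1}$. A direct computation using $p(q-1)=q$ (which is equivalent to $1/p+1/q=1$) yields
\begin{equation}
\langle w_{\mathrm{grad}}, H\rangle = \|H\|_q, \qquad \|w_{\mathrm{grad}}\|_2^2 = \frac{\|H\|_{2q/p}^{2q/p}}{\|H\|_q^{2q/p}}.
\end{equation}
So \eqref{eq:dualnormconc1} is just concentration of $\|H\|_q$ around its mean $m_q$, and \eqref{eq:dualnormconc2} is concentration of a ratio of norms raised to the $(2q/p)$-th power.

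For the first bound I would invoke the Paouris--Valettas sharp deviation inequality for $\|H\|_q$: in the regime $q\lesssim \log\log d$ and $n\lesssim d\lesssim \exp(n^{c_2})$, one has
\begin{equation}
\Pr\!\left(\bigl|\|H\|_q-m_q\bigr|\geq t\, m_q\right)\leq 2\exp(-c\, t^2\, d^{2/q}/q)
\end{equation}
for $t$ in an appropriate range; setting $t=\epsilon$ with $\epsilon=\max(n/d,\log^{c_3}d/n)$ and verifying that $\epsilon^2 d^{2/q}/q$ dominates $\log d$ in the stated regime delivers \eqref{eq:dualnormconc1} up to constants. Since both $q$ and $2q/p=2(q-1)$ are in the same Paouris range (in fact $2q/p\le 2q$), the same result applied at exponent $2q/p$ gives $\|H\|_{2q/p}^{2q/p}=m_{2q/p}^{2q/p}(1+O(q\epsilon))$ and $\|H\|_q^{2q/p}=m_q^{2q/p}(1+O(q\epsilon))$, where the extra factor $q$ comes from raising a $(1+O(\epsilon))$ relative error to the $(2q/p)$-th power via $(1+x)^{2q/p}=1+O(qx)$ for small $x$. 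Taking the ratio and absorbing cross terms yields \eqref{eq:dualnormconc2}.

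For the asymptotics, I would use the Gaussian absolute-moment formula $\mathbb{E}|H_i|^r=2^{r/2}\Gamma((r+1)/2)/\sqrt{\pi}$ and the previous concentration estimates to deduce $m_r^r=d\cdot\mathbb{E}|H_1|^r\cdot(1+o(1))$ for $r=q$ and $r=2q/p$. Plugging Stirling in the form $\Gamma((r+1)/2)=\Theta(r^{r/2}e^{-r/2})$ gives $m_q=\Theta(d^{1/q}\sqrt{q})=\Theta(d^{1/q+\log q/(2\log d)})$, and a parallel computation for $m_{2q/p}$ together with $2q/p-q\cdot 2/p=2q/p-2(q-1)/1\cdot\ldots$ — here one has to track exponents carefully — yields the stated bounds on $\tilde\mu$, where the factor $\exp(c_6 q/\log d)$ absorbs the residual polynomial dependence on $q$ from Stirling.

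The main obstacle I anticipate is the second step: tracking how the relative error $\epsilon$ from Paouris--Valettas propagates through the ratio after raising to the power $2q/p$. One has to be careful that the deviation tails for $\|H\|_q$ and $\|H\|_{2q/p}$ hold simultaneously (union bound), that the multiplicative loss from the $2q/p$-th power is exactly a factor $q$ (and not, say, $q^2$), and that the Paouris concentration is still meaningful when the exponent is as large as $2q/p$ — this is where the assumption $q\lesssim \log\log d$ becomes essential, since beyond this range the $\ell_r$-norm transitions to $\ell_\infty$-like behavior and the Gaussian concentration degrades. Verifying that $\epsilon q$ remains $o(1)$ under the stated hypotheses is what pins down the allowed range of $p$ in the main theorems.
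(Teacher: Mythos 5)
Your proposal follows essentially the same route as the paper: derive the explicit expressions $\langle w_{\mathrm{grad}},H\rangle=\|H\|_q$ and $\|w_{\mathrm{grad}}\|_2^2=\|H\|_{2q/p}^{2q/p}/\|H\|_q^{2q/p}$, invoke the Paouris--Valettas concentration inequality for Gaussian $\ell_q$-norms, propagate the relative error through the $(2q/p)$-th power via $(1+x)^{2q/p}=1+O(qx)$ (which produces the $q\epsilon$ factor), and use Gaussian moment asymptotics (your Stirling computation is an alternative derivation of the paper's citation of Proposition~2.4 in \cite{paouris_2017}). One precision issue to flag: the tail bound you wrote, $\exp(-c\,t^2 d^{2/q}/q)$, does not match the form the paper actually uses from Theorem~1 of \cite{paouris_2017}, namely $c_2\exp\bigl(-c_3\min(q^2 2^{-q}\epsilon^2 d,\ (\epsilon d)^{2/q})\bigr)$; verifying that this exponent exceeds $\log d$ for the specific $\epsilon=\max(n/d,\log^{c_3}d/n)$ requires splitting into the two cases $n^2\gtrsim d\,\mathrm{polylog}(d)$ and $n^2\lesssim d\,\mathrm{polylog}(d)$ (one choice of $\epsilon$ dominates in each), which is exactly why $\epsilon$ is defined as a max of two terms rather than a single quantity.
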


In particular, as a consequence of Lemma~\ref{lm:dualnormconc} and Proposition~\ref{cor:hqbound2}, we have with high probability that

\begin{align}
    \frac{\norm{H}_q^2}{n} - \norm{\wgrad}_2^2(1+ c_1 \tilde{\rho}) &= \frac{\dmoment{q}^2}{n}(1+O(\epsilon)) - \mutild (1+O(q\epsilon))(1+ c_1 \tilde{\rho} ) \\
    &=\frac{\dmoment{q}^2}{n} \left(1 + O(\epsilon) - \frac{\mutild n}{ \dmoment{q}^2} (1+O(q \epsilon +
        \tilde{\rho}))\right),
    \label{eq:regapplyconc}
\end{align}
with $\epsilon =  \max\left(\frac{n}{d}, \frac{\log^{2 c}d}{n}\right) = O(\frac{ n \exp(c_3q)}{d q } +\rho^2)$ and 
$  \rho = \frac{\log^{c} d}{\sqrt{n}} \geq \tilde{\rho}$ with universal constant $c>1$. Note that this step requires the assumption that $q \lesssim \log\log d$ or equivalently $p \geq 1 + \frac{\kappa_2}{\log\log d}$ when choosing $\epsilon$. 
In fact, the small choice of $\epsilon$ is necessary such that the error term $1 + O(\epsilon)$ arising from the concentration in  Equation~\eqref{eq:regapplyconc} does not affect the final error bound in Theorem~\ref{thm:regressionlpmain}.
 
Next, it is straight forward to check that both terms $\nunull, \frac{\mutild n}{ \dmoment{q}^2} \to0$ using Proposition~\ref{cor:hqbound2} and the assumptions in the theorem statement on $n,d$ and $p$. More precisely, we choose $\kappa_1,\cdots, \kappa_4>0$ in the theorem statement such that 
\begin{align}
 &\frac{\mutild  n}{\dmoment{q}^2} = O\left(\frac{ n \exp(c_3q)}{d q }\right)  = O\left(\frac{1}{\log d}\right)
 %=  O \left( \frac{ n \exp(c_5 q)}{d q } \right)=  ~\mathrm{and}~
 %\frac{\mutild  n}{\dmoment{q}^2} =  \Omega \left( \frac{ n \exp(-c_5 q)}{d q } \right) ~
\end{align}
% \begin{lemma}
% \label{lm:lpreg}
% %\label{lm:quantconcentration}
% We can choose universal $c_1,c_2,c_3,c_4,c_5>0$  such that for
% any ${n \log^{c_1}(n) \lesssim d \lesssim n^{q/2}\log^{-c_2 q}(n)}$, and any $p \in\left(1+\frac{c_3}{\log\logjone{d}},2\right)$ ( and hence $q \lesssim \log\logjone{d}$) and $n\geq c_4$, we have
% \begin{align}
%  &\frac{\mutild  n}{\dmoment{q}^2} = O(\frac{ n \exp(c_3q)}{d q }) = \OOOc = o(1)
%  %=  O \left( \frac{ n \exp(c_5 q)}{d q } \right)=  ~\mathrm{and}~
%  %\frac{\mutild  n}{\dmoment{q}^2} =  \Omega \left( \frac{ n \exp(-c_5 q)}{d q } \right) ~
%  ~\mathrm{and}~
%  \frac{\dmoment{q}^{2p} \sigmaxi^{4-2p}} {n^p}= \nu_0^2 = \Theta \left( \frac{q^p d^{2p-2}}{n^p} \right) = o(1). \label{eq:facteq2}
% \end{align}
% \end{lemma}
As a result, Equation~\eqref{eq:regapplyconc} becomes
\begin{align}
         %\frac{\dmoment{q}^2}{n} \left(1 + O(\epsilon) - \frac{\mutild n}{ \dmoment{q}^2} (1+O(q \epsilon +
        %\tilde{\rho}))\right) = 
        \frac{\norm{H}_q^2}{n} - \norm{\wgrad}_2^2(1+c_1\tilde{\rho}) =
        \frac{ \dmoment{q}^2}{n }\left(1 +  O\left(\frac{ n \exp(c_3q)}{d q } + \rho^2 \right) \right),
        %\OOOc + O(\rho^2\right)), 
\end{align}
where we further used that $O(q\epsilon) = O\left(\frac{\log\log d }{\log d}\right)$.

We now study the the nominator.
Recall that we conditioned on the event where $\sigmaxi$ is lower bounded by a constant. 
We again use Lemma~\ref{lm:dualnormconc}  for the concentration of $\|\wgrad\|_2$ around $\sqrt{\mutild}$ and $\sqrt{\mutild} = 
\frac{\OOOc^{1/2} \dmoment{q}}{n^{1/2}}$, with  $\OOOc = c_5 \frac{ n \exp(c_3q)}{d q }$,  which yields 
that it suffices to satisfy
\begin{align}
b^2 &\geq \frac{ \sigmaxi^2 \left( 1 +  \frac{\nunull^2}{\sigmaxi^2}(1+c_1 \rho) + c_2\frac{\rho}{\sigmaxi}\left(\vert \nunull\vert+b \frac{\OOOc^{1/2} \dmoment{q}}{n^{1/2}}\right)\right)}{        \frac{ \dmoment{q}^2}{n }\left(1 +c_4\left(\frac{ n \exp(c_3q)}{d q } + \rho^2 \right) \right)}.
%\\ \impliedby b^2 &\geq 
\end{align}

We can now combine our bounds on the denominator and nominator using Equation~\eqref{eq:lpboundphinb}. First recall that $\tilde{\rho} \leq \rho$.
We can now take the Taylor series approximation of $\frac{1}{1+x} = 1 - x + O(x^2)$ to further tighten the inequality in  Equation~\eqref{eq:lpboundphinb}
so that it suffices to find $b$ satisfying
\begin{align}
\label{eq:regloctemp1}
%    b^2 &\geq \frac{n \sigmaxi^2}{\dmoment{q}^2} \left( 1 +  \frac{\nunull^2}{\sigmaxi^2}(1+O( \rho)) + O\left(\frac{\rho}{\sigmaxi}\right)(\vert \nunull\vert+b\sqrt{\mutild (1+O(\epsilon q +\rho))})\right)\left( 1 +O(\rho^2) + \OOOc \right) \\
    b^2 &\geq \frac{n \sigmaxi^2}{\dmoment{q}^2} \left( 1 +  \frac{\nunull^2}{\sigmaxi^2}(1+ c_1 \rho) + c_2 \frac{\rho}{\sigmaxi} \left(\vert \nunull\vert+b \frac{\OOOc^{1/2} \dmoment{q}}{n^{1/2}}\right) \right)(1
    +c_3 \rho^2 + \OOOc ),\\
    \impliedby b^2 &\geq  \frac{n \sigmaxi^2}{\dmoment{q}^2} \left( 1 +  \frac{\nunull^2}{\sigmaxi^2}(1+ c_1\rho ) + c_2 \frac{\rho |\nu_0|}{\sigmaxi}
    +c_3\rho^2 + \OOOc \right)+ \frac{ c_4b n^{1/2}\rho \OOOc^{1/2} }{\dmoment{q}}.
\label{eq:tempeqlp}
\end{align}
%using that $ \left(\frac{n \sigmaxi^2}{\dmoment{q}^2} \right)^{1/2}  \frac{\OOOc^{1/2} \dmoment{q}}{n^{1/2}} = \OOOc^{1/2} \sigmaxi$. % this is pure algebra
Furthermore, we choose $\kappa_1,\cdots, \kappa_4>0$ in the theorem statement such that 
$\nu_0^2 = \frac{\dmoment{q}^{2p} \sigmaxi^{4-2p}} {n^p} = \Theta \left( \frac{q^p d^{2p-2}}{n^p} \right) = O\left(\frac{1}{\log d}\right)$ and therefore, we can  choose $b$ satisfying
\begin{equation}
\label{eq:bsatisfies}
b^2 = \frac{n \sigmaxi^2}{\dmoment{q}^2} \left(1+\frac{ \nu_0^2}{\sigmaxi^2}\left(1+ c_1\rho \right) + c_2(\frac{\rho |\nu_0| }{\sigmaxi} + \rho^2) + \OOOc \right). 
\end{equation}
to satisfy Equation~\eqref{eq:lpboundphinb} and hence the feasibility constraint for \texorpdfstring{$\phiRdn$}{phiN}.

We can now obtain the upper bound in Proposition~\ref{prop:phidnhighreg} noting that $\phiRdn \leq b^p + (\nugt + \nu_0)^p $ and 
taking the Taylor series expansion $(1+x)^p \approx 1 + px + O(x^2)$, using that $\nu_0^2 = O\left(\frac{1}{\log d}\right)$ 
\end{proof}

\subsection{Uniform convergence step}
\label{apx:subsec:unfiformconvreg}
We use the same notation as in Appendix~\ref{apx:subsec:localisationproofreg}. We now prove upper and lower bounds for $\phiRdp,\phiRdm$, respectively, for the choice $\loneboundreg$ from Proposition~\ref{prop:phidnhighreg}.Note that both problems only depend on $\wtwo$ via $\norm{\wtwo}_2$ and $\norm{\wtwo}_p$. 
We are now going to decouple these two quantities.  Define 
\begin{equation}
    \Gamma = \{ (\nu, \eta) \in \mathbb{R} \times \mathbb{R}_+\vert \exists b>0 \subjto \frac{1}{n} \norm{H}_q^2 b^2  \geq  \fnreg(\nu, \eta) ~~\mathrm{and}~~ (\nu+\nugt)^p + b^p \leq \loneboundreg \}
\end{equation} 
Using H\"older's inequality (i.e., $\langle w, \Hgaussian \rangle \leq \|\Hgaussian\|_q \|w\|_p$)  to relax the optimization problems defining  $\phiRdp,\phiRdm$ in Proposition~\ref{prop:CGMT_applicationreg}, we can show that
\begin{align}
        \phiRdp &\leq \left[ \max_{(\nu,\eta) \in \Gamma} \nu^2 +\eta^2
        \right]
   ~~ \mathrm{and}~~
        \phiRdm \geq \left[\min_{(\nu,\eta) \in \Gamma} \nu^2 +\eta^2
                  \right].
    \end{align}
Hence,  Equation~\eqref{eq:thmregboundhighnoiseuppermain} in Theorem~\ref{thm:regressionlpmain} follows directly from the following proposition. 
\begin{proposition}
\label{prop:highnoiselpreg}
Let the data distribution be as described in Section~\ref{subsec:regsetting} and assume that $\sigma = \Theta(1)$. 
Under the same conditions as in Theorem~\ref{thm:regressionlpmain} for $n,d,p$  and for the choice $\loneboundreg$ in Proposition~\ref{prop:phidnhighreg}, there exist universal constants $c_1,c_2,\cdots,c_5>0$ such that with probability at least $ 1- c_1 d^{-c_2}$ over the draws of $\Ggaussian,\Hgaussian,\Xs,\xi$, it holds that
\begin{align}
  \Gamma \subseteq \left\{(\nu,\eta) \vert   ( \nu - \nu_0)^2 \lor \eta^2 \lesssim  \nu_0^{3/2} + \nunull^2 \rho^{1/2}  +  \sigma^2 (\OOOc + c_4 \rho^2 ) +  \sigma\rho \nu_0
\right\},
\end{align}
with $\OOOc = c_5 \frac{ n \exp(c_3 q)}{d q } $,   $\rho = \frac{\log^{c_4}d}{\sqrt{n}}$ and $\nunull =  -\sigmaxi^{2} \left(\frac{\dmoment{q}^2}{n \sigmaxi^2}\right)^{p/2} =  \Theta \left( \frac{q^p d^{2p-2}}{n^p} \right)^{1/2}$.
%\stefan{should you keep $\OOO(1/\log d)$ terms (i.e. $\nu-\nu_0 -\OOO$ and $\eta -\OOO$)? i think you use this form in equation (3) (statement of theorem 1)?}
\end{proposition}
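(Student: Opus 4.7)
The plan is to eliminate the auxiliary scalar $b$ and reduce the definition of $\Gamma$ to a single inequality in $(\nu,\eta)$, which will localize it around $(\nunull, 0)$. Since the constraint $\frac{1}{n}\norm{H}_q^2 b^2 \geq \fnreg(\nu,\eta)$ is monotone in $b$, the minimal feasible choice is $b_\star^2 = n\,\fnreg(\nu,\eta)/\norm{H}_q^2$, so $(\nu,\eta) \in \Gamma$ if and only if
\begin{equation*}
\left(\frac{n\,\fnreg(\nu,\eta)}{\norm{H}_q^2}\right)^{p/2} + (1+\nu)^p \leq \loneboundreg.
\end{equation*}

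Next I would substitute the concentrations already developed in the localization step: Equation~\eqref{eq:regressionfn1} gives $\fnreg(\nu,\eta) = \sigmaxi^2 + \nu^2 + \eta^2$ up to an additive error of order $\tilde\rho[\sigmaxi(|\nu|+\eta) + \nu^2+\eta^2]$, and Lemma~\ref{lm:dualnormconc} gives $\norm{H}_q^2/n = (\dmoment{q}^2/n)(1+O(\epsilon))$ with $\epsilon \lesssim \OOOc + \rho^2$. Expanding $(1+x)^{p/2} = 1+\tfrac{p}{2}x+O(x^2)$ at $x=(\nu^2+\eta^2)/\sigmaxi^2$ and $(1+\nu)^p = 1+p\nu+\tfrac{p(p-1)}{2}\nu^2+O(\nu^3)$, and subtracting the explicit form of $\loneboundreg$ from Proposition~\ref{prop:phidnhighreg}, makes all the $(\nu,\eta)$-independent leading terms cancel. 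Using the key identity $(n\sigmaxi^2/\dmoment{q}^2)^{p/2}/\sigmaxi^2 = 1/|\nunull|$ (which holds by definition of $\nunull$) together with $\nunull<0$, the remaining quadratic terms collapse into a perfect square:
\begin{equation*}
\frac{p}{2|\nunull|}\bigl((\nu-\nunull)^2 + \eta^2\bigr) \leq \text{cumulative error},
\end{equation*}
and multiplying through by $2|\nunull|/p$ yields a bound on $(\nu-\nunull)^2 + \eta^2$ of the advertised form once each error contribution is matched: the $\sigma^2(\OOOc+\rho^2)$ term comes from the relative error of $\norm{H}_q^2$ scaled by the anchor $(n\sigmaxi^2/\dmoment{q}^2)^{p/2}\cdot|\nunull| \asymp \sigmaxi^2$; the $\sigma\rho|\nunull|$ term from the cross term $\tilde\rho\,\sigmaxi|\nu|$; the $\nunull^2\rho^{1/2}$ from the Taylor remainder in $(1+\nu)^p$; and the unusual $|\nunull|^{3/2}$ from an AM--GM split of the cross term $\tilde\rho\,\sigmaxi\eta$.

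The main obstacle is doing the AM--GM and Taylor bookkeeping tightly enough that none of the desired powers of $|\nunull|$ or $\rho$ are lost in the eventual division by $|\nunull|^{-1}$. In particular, the cross term $\tilde\rho\,\sigmaxi(|\nu|+\eta)$ must be decomposed as $|\nu| = |\nu-\nunull|+|\nunull|$ and handled via $ab \leq \lambda^{-1} a^2/2 + \lambda b^2/2$ with $\lambda$ tuned so that the $a^2$-part is absorbed into the left-hand side $\frac{p}{2|\nunull|}\bigl((\nu-\nunull)^2+\eta^2\bigr)$; this balancing is exactly what produces the non-standard exponent $|\nunull|^{3/2}$. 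A secondary subtlety is that $\loneboundreg$ is written in terms of $\sigmaxi$ whereas the final display uses $\sigma$; the $\sigmaxi^{p/2}$ prefactors on the two sides of the feasibility inequality cancel exactly before this substitution, so $|\sigmaxi^2-\sigma^2| \lesssim \sqrt{\log d/n}$ only contributes a multiplicative $(1+O(\rho))$ that is absorbed into the $\sigma^2\rho^2$ term of the final bound.
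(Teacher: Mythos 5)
Your plan captures the same overall scheme the paper uses in Appendix~\ref{apx:subsec:unfiformconvreg}: reduce membership in $\Gamma$ to a single scalar feasibility inequality, substitute concentration for $\fnreg$ (Equation~\eqref{eq:regressionfn1}) and $\|H\|_q$ (Lemma~\ref{lm:dualnormconc}), Taylor-expand $(1+\nu)^p$ and the $p/2$-th power, cancel against the explicit $\loneboundreg$ from Proposition~\ref{prop:phidnhighreg}, and complete the square around $(\nunull, 0)$ using the identity $(n\sigmaxi^2/\dmoment{q}^2)^{p/2}/\sigmaxi^2 = 1/|\nunull|$. Eliminating $b$ by taking the minimal feasible $b_\star^2 = n\,\fnreg/\|H\|_q^2$ (your route) is algebraically interchangeable with solving the $\ell_p$-ball constraint for $b^p$ and raising to the $2/p$ power (the paper's route); the perfect-square collapse you exhibit is exactly right.

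The ingredient you are missing is the paper's \textbf{Step 1 bootstrap}: before any Taylor expansion can be invoked with controlled remainders, one first needs a crude localization $\nu^2 + \eta^2 = O(|\nunull| + \rho) + \OOOc = O(1/\log d)$, which the paper obtains by the relaxation $b^p \leq \loneboundreg$ and the lower bound in Equation~\eqref{eq:lpreghighnoisestep1}. Without this, the remainders $O(\nu^2)$ and $O(\nu^3)$ you invoke in the expansion of $(1+\nu)^p$ cannot be expressed in terms of $|\nunull|, \rho, \OOOc$ — they depend on $\nu$ itself and the argument becomes circular. Your proposal implicitly assumes $\nu$ is already known to be small (``the cumulative error'') but never establishes it; this is a genuine omission, not just a presentation choice.

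A secondary point on the error bookkeeping: the $|\nunull|^{3/2}$ term is not produced by an AM--GM split of the cross term $\tilde\rho\,\sigmaxi\eta$. That cross term, once the perfect-square shift $(\eta + c\sigmaxi\rho)^2$ is taken and one multiplies back by $2|\nunull|/p$, contributes only $\sigma^2\rho^2$ — the same scale as the paper's $\sigmaxi^2\rho^2$ term. The $|\nunull|$-only contribution in Equation~\eqref{eq:lptightcond2reg} is in fact $\nunull^2\OOO_b^{1/2} \asymp |\nunull|^{5/2} + \nunull^2\rho^{1/2} + \nunull^2\OOOc^{1/2}$, which arises from the Taylor-remainder factor $(1+\OOO_b^{1/2})$ multiplying the linear term $p(\nunull-\nu)$ — that is, the bootstrap bound entering the second-order Taylor remainder. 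This is one more reason the crude step is indispensable for your proposal to close.
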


\begin{proof}
First we show that any $\nu,\eta \in \Gamma$ vanishes, i.e. $\nu,\eta \to 0$ as $d,n \to \infty$, which we then use for the second step where we derive a tight bound. %We can then obtain the 
We condition on the high probability event $\vert \sigmaxi - \sigma\vert \lesssim \sqrt{\frac{\log d }{n}}$ and $\sigma \asymp 1$ by assumption, which we will use in multiple occasions implicitly throughout the reminder of the proof.

\paragraph{Step 1: $ \Gamma \subset \left\{(\nu,\eta) \vert \nu^2 + \eta^2 = O\left( \left(\frac{\dmoment{q}^2}{n \sigmaxi^2}\right)^{p/2} + 
  \rho \right)+\OOOc \right\}$}
First note that we can relax the constraints in $\Gamma$ to $\frac{1}{n} \norm{H}_q^2 \loneboundreg^{2/p}  \geq  \fnreg(\nu, \eta) $. Conditioning on Equation~\eqref{eq:regressionfn1} to control $\fnreg$ we then obtain
\begin{equation}
\label{eq:lpreghighnoisestep1}
    \frac{\loneboundreg^{2/p}\|\Hgaussian\|_q^2}{n} \geq  \fnreg(\nu,\eta) \geq \left(\sigma^2 +   \nu^2 +  \eta^2\right)(1+ c_3 \rho),
\end{equation}
which holds with probability
at least $ 1- c_1 d^{-c_2}$, where we again choose $\rho = \frac{\log^{c_4}d}{\sqrt{n}}$ with universal constant $c_4>1$.

Recalling the definition of $\loneboundreg$ from Proposition~\ref{prop:phidnhighreg}, and using the Taylor series approximation $\frac{1}{1+x} = 1 -x + O(x^2)$, it holds that
\begin{align}
\label{eq:lpMunif}
       \loneboundreg^{2/p} \leq \frac{n \sigmaxi^2}{\dmoment{q}^2}\left(1+ \frac{\nu_0^2}{\sigmaxi^2}  \left(1+ c_6 \rho \right) + \OOOc+ c_7 (\rho^2 + \rho \nu_0) + \frac{2 + 2p \nu_0 + c_8 \nu_0^2 }{p \left(\frac{n \sigmaxi^2}{\dmoment{q}^2}\right)^{p/2}} \right),
\end{align}
and hence by Lemma~\ref{lm:dualnormconc} we have 
\begin{equation}
 \frac{\loneboundreg^{2/p}\|\Hgaussian\|_q^2}{n} \leq \sigmaxi^2 \left(1 + c_3 \left(\frac{\dmoment{q}^2}{n \sigmaxi^2}\right)^{p/2}  +\OOOc\right),
 \end{equation}
  where we only keep track of the dominating terms. In particular,  the following upper bound holds from Equation~\eqref{eq:lpreghighnoisestep1},
 
\begin{equation}
\label{eq:bound2lpeq}
   \nu^2 + \eta^2 =O\left( \left(\frac{\dmoment{q}^2}{n \sigmaxi^2}\right)^{p/2} + 
  \rho \right)+\OOOc = O(\nunull + \rho) + \OOOc =:  \OOO_b= O(\frac{1}{\log d}).
\end{equation}

\paragraph{Step 2: Bound in Proposition~\ref{prop:highnoiselpreg}}
% Conditioning on the event where Equation~\eqref{eq:bound2lpeq} holds and thus $\Omega \subset \{(\nu,\eta) \vert \vert \nu - \nugt\vert \leq O\left(\frac{1}{\logjone{d}}\right)  ~~\mathrm{and}~~ \eta \leq O\left(\frac{1}{\logjone{d}}\right)\}$, we can 
%The analysis in the previous step was not tight. 
%\fy{instead of bla, we now use $\sigmaxi$ ...} 
Conditioning on  the event where Equation~\eqref{eq:bound2lpeq} holds, we  now show how we can obtain a tighter bound on $\nu^2$ and $\eta^2$ using a more refined analysis.
As  in the proof of Proposition~\ref{prop:phidnhighreg}, we condition on the event where Equation~\eqref{eq:regressionfn1} holds, which allows us to relax the first constraint in  $\Gamma$ to 
%\fy{tilde rho below? - i mean sure doesn't make a difference but then why do you even have two different $\rho$?}
\begin{align}
\label{eq:lptightcond1reg}
     \frac{1}{n} \norm{H}_q^2 b^2 \geq   \fnreg(\nu,\eta) &\geq  \sigmaxi^2 + (\nu^2  + \eta^2)(1+ c_1 \rho ) +  c_2 \sigmaxi \rho(\nu + \eta). 
     \end{align}
     
% \stefan{better to use the same form as previously (lemma 3) and the proof, i.e. with: $\geq  \sigmaxi^2 + \nu^2  + \eta^2  +(\nu^2 + \eta^2) O\left(\rho\right) +  O( \sigmaxi\rho(\nu + \eta)) $ ?}

% Using the fact that with high probability $\eta^2 + \nu^2 \leq c \sigmax^2$, we can lower bound (using $\delta = c\sqrt{\frac
% {\logjone{d}}{n}}$):
% \begin{equation}
%     \frac{1}{n} \fnreg \geq \sigmaxi^4 + ( v - \sigmaxi^2 \delta)^2 + (\nu - \sigmaxi^2 \delta)^2 - 2 \sigmaxi^4 \delta^2 
% \end{equation}
% Thus, we obtain the conditions:
% \begin{align}
% \frac{1}{n} \innerprod{\wtwo}{\Ggaussian}^2 \geq  \sigmaxi^4 + ( v - \sigmaxi^2 \delta)^2 + (\nu - \sigmaxi^2 \delta)^2 - 2 \sigmaxi^4 \delta^2 \\
% (\nu+\nugt)^p + \norm{\wtwo}_p^p \leq \loneboundreg
% \end{align}
Unlike the previous step, we do not further simplify the second constraint in $\Gamma$ but instead use that $\nu^2 = \OOO_b \to 0$ which allows us to take the Taylor series $(1+\nu)^p \approx 1 + p \nu + O(\nu^2) = 1 + p \nu (1 + \OOO_b^{1/2})$, and hence
\begin{align}
    b^p &\leq \loneboundreg - (\nu+\nugt)^p \leq \loneboundreg - 1 - p \nu (1 + \OOO_b^{1/2})
    \\ &=\left(\frac{n \sigmaxi^2}{\dmoment{q}^2}\right)^{p/2}\left(1+ \frac{p}{2} \frac{\nu_0^2}{\sigmaxi^2}  \left(1+ c_6 \rho \right) + \OOOc + c_7 (\rho^2 + \frac{\rho \nu_0}{\sigmaxi}) \right) 
 + p(\nu_0 - \nu) \left(1+ \OOO_b^{1/2}\right).
\end{align}

% Thus, by definition we have:
% \begin{align}
% \label{eq:lptightcond2}
%     b^p \leq \loneboundreg - (\nu+\nugt)^p &= \left(\frac{n \sigmaxi^2}{\dmoment{q}^2}\right)^{p/2}\left(1+ \frac{p}{2} \right(\frac{\nu_0^2}{\sigmaxi^2} + 
% \OOOc\left)\left(1+O\left(\frac{1}{\logjone{d}}\right)\right)\right) + p\left(\nu_0-\nu\right)(1+ O\left(\frac{1}{\logjone{d}}\right) ),
% \end{align}
% where we have used that $\nu = O\left(\frac{1}{\logjone{d}}\right)$ conditioned on the event where Equation~\eqref{eq:bound2lpeq} holds. 
Further, using the Taylor series approximation $(1+x)^{2/p} = 1 + \frac{2}{p} x + O(x^2)$ and following the same reasoning as for Equation~\eqref{eq:lpMunif} (by applying Lemma~\ref{lm:dualnormconc}), we get
\begin{align}
    \frac{1}{n} \norm{H}_q^2 b^2 &\leq  
  \sigmaxi^2
    \left(1+  \frac{\nu_0^2}{\sigmaxi^2}  \left(1+ c_6 \rho \right) + \OOOc + c_7 (\rho^2 + \frac{\rho \nu_0}{\sigmaxi})
 + \frac{2(\nu_0 - \nu)}{\left(\frac{n \sigmaxi^2}{\dmoment{q}^2}\right)^{p/2}} \left(1+ \OOO_b^{1/2} )\right) \right).
%     \frac{1}{n} \norm{H}_q^2 \frac{n \sigmaxi^2}{\dmoment{q}^2}\left(1+ \left(\frac{\nu_0^2}{\sigmaxi^2} + 
% \OOOc \right)\left(1+O\left(\frac{1}{\logjone{d}}\right)\right)  + \frac{2}{\left(\frac{n \sigmaxi^2}{\dmoment{q}^2}\right)^{p/2}}\left(\nu_0-\nu\right)\left(1+ O\left(\frac{1}{\logjone{d}}\right)\right)\right) \\
% &=
% \sigmaxi^2 \left(1+ \left(\frac{\nu_0^2}{\sigmaxi^2} + 
% \OOOc\right)\left((1+O\left(\frac{1}{\logjone{d}}\right)\right)  - \frac{2\nu_0 \left(\nu_0-\nu\right)}{\sigmaxi^2}\left(1+ O\left(\frac{1}{\logjone{d}})\right)\right) \right) \\
% &=\sigmaxi^2 + \left(\nu_0^2+
% \OOOc \sigmaxi^2 - 2\nu_0 \left(\nu_0-\nu\right) \right)\left(1+O\left(\frac{1}{\logjone{d}}\right)\right) 
\label{eq:lpboundonbreg}
\end{align}
Plugging the inequality from Equation~\eqref{eq:lpboundonbreg}  into Equation~\eqref{eq:lptightcond1reg} and using that by definition $\nu_0 = -\frac{\sigmaxi^2 \dmoment{q}^p}{n^{p/2} \sigmaxi^p}$,  we obtain from completing the square:
\begin{align}
\label{eq:lptightcond2reg}
    &\sigmaxi^2
    \left(1+  \frac{\nu_0^2}{\sigmaxi^2}  \left(1+ c_6 \rho \right) + \OOOc + c_7 (\rho^2 +  \frac{\rho \nu_0}{\sigmaxi})
 + \frac{2(\nu_0- \nu)}{\left(\frac{n \sigmaxi^2}{\dmoment{q}^2}\right)^{p/2}} \left(1+ \OOO_b^{1/2}\right) \right)\\
\geq~~&\sigmaxi^2 + (\nu^2 + \eta^2)(1+ c_1 \rho) +  c_2 \sigmaxi\rho(\nu + \eta) \\
\implies 
& c_1 \nu_0^2 (\rho^{1/2} + |\nunull|^{1/2}) +  \sigmaxi^2\OOOc + c_7(\sigmaxi^2\rho^2 + \sigmaxi\rho \nu_0) 
 \geq  (\nu-\nu_0(1+ \OOO_b^{1/2}) +c_2\sigmaxi \rho)^2  + (\eta + c_3\sigmaxi \rho)^2,
\end{align}
which implies 
\begin{equation}
    c_1 \rho^2 + \OOO_c + \nu_0^2
 \gtrsim  \nu^2  + \eta^2 ~~~\mathrm{and}~~~ \nu^2 \gtrsim \nu_0^2 +   c_2 \rho^2 + \OOO_c
\end{equation}
Hence, we can conclude the proof when plugging in the definition of $\nu_0$ and using that   $\vert \sigma^2-\sigmaxi^2\vert \lesssim \rho$, and additionally note that we can choose $\kappa_2$ in Theorem~\ref{thm:regressionlpmain} such that for any $p \in\left(1+\frac{\kappa_2}{\log\logjone{d}},2\right)$ and any $d \geq n$, $\frac{\log^{\kappa_8}d}{n} = O(\nunull^2)$.

%have we used Lemma~\ref{lm:lpreg},\ref{lm:lpreg2} to hide the terms depending on $\rho$ in the term $O\left(\frac{1}{\logjone{d}}\right)$.
\end{proof}

\section{Proof of Theorem~\ref{thm:classificationlp}}
\label{apx:claslpproof}

Since $\RiskC(\what) = 2 -2\frac{\langle \what, \wgt \rangle}{\norm{\what}_2}$, in order to obtain a valid upper bound for $\RiskC(\what)$, it is sufficient to lower bound $\frac{\langle \what, \wgt \rangle}{\norm{\what}_2}$.
The proof of the theorem is again divided into several steps and has essentially the same structure as the proof of Theorem~\ref{thm:regressionlpmain}.
Let again $\Xs = \{\xsui\}_{i=1}^n$ be the features containing the support of $\wgt$ (i.e., the first entries of $\xui$).

Define the set $\setSdelta:= \{ (\wone,\wtwo) : \abs{\innerprod{\wone}{\wgtp}} \geq \delta \}$ with small $\delta>0$. 
The proof consists of two major parts:

\begin{enumerate}
    \item \emph{Localization.}
    We derive a  high-probability upper bound $\loneboundclas$ (Proposition~\ref{prop:phiuppercla}), which only depends on $\xi$ and $\Xs$, on the norm of the max-$\ell_p$-margin interpolator $\what$ ,
    by finding $\loneboundclas>0$ such that
    \begin{align} \label{eq:Phi_N} 
       \PhiCN :=  \|\hat{w}\|_p^p = \min_{\substack{ \forall i:\:y_i \langle \xui, w \rangle \geq 1 \\
             w \in \setSdelta} } & \norm{w}_p^p
        ~
        \leq \loneboundclas.
\end{align}

    \item \emph{Uniform convergence.}
    Conditioning on $\xi$ and $\Xs$, 
    we derive high-probability uniform upper bound  on the classification error for all interpolators $w$ with  $\norm{w}_p^p \leq \loneboundclas$.
    More precisely, we find a high-probability lower bound (Proposition~\ref{prop:unifconvcla}) for
    \begin{align} %\label{eq:Phi_+} 
      \label{eq:Phi_-} 
       \PhiCm := \min_{\substack{
            % \norm{w}_p^p \leq \loneboundclas \\ % CONVENTION 1
            \norm{w}_p^p \leq \loneboundclas \\ % CONVENTION 2
             \forall i:\:y_i \langle \xui, w \rangle \geq 1\\
             w \in \setSdelta}
        } & \frac{\langle w, \wgt\rangle}{\norm{w}_2}
        ,
    \end{align}
    which in turn then gives us a high probability upper bound for the classification error using that $\RiskC(\what) = 2 -2\frac{\langle \what, \wgt \rangle}{\norm{\what}_2}$. 
\end{enumerate}

\begin{remark}
The constraint $(\wone,\wtwo)\in \setSdelta$ in the definitions of $\PhiCN, \PhiCm$ is needed to ensure that the optimization objective $\frac{\langle \wone, \wgtp \rangle }{\sqrt{\norm{\wone}_2^2+\norm{\wtwo}_2^2}} $ is continuous. Since we can choose $\delta$ arbitrarily small, we can neglect the constraint $w \in \setSdelta$ as long as, with high probability, the set of feasible points satisfying the other constraints in $\PhiCN, \PhiCm$ does not contain an open ball around $(0,0)$. 
\end{remark}

\subsection{Application of the (C)GMT}
\label{apx:subsec:gmtclas}

% While Theorem~\ref{thm:classificationlp} only gives an upper bound for the classification error, and thus lower bounding $\PhiCm$ is sufficient, we rem
While we only need bounds for $\PhiCN$ and $\PhiCm$ for the proof of Theorem~\ref{thm:classificationlp}, for completeness of the following Proposition~\ref{prop:CGMT_application_classification}, we also define
\begin{align}
       \max_{\substack{
            % \norm{w}_p^p \leq \loneboundclas \\ % CONVENTION 1
            \norm{w}_p^p \leq \loneboundclas \\ % CONVENTION 2
             \forall i:\:y_i \langle \xui, w \rangle \geq 1\\
             w \in \setSdelta}
        } & \frac{\langle w, \wgt \rangle}{\norm{w}_2} 
        =: \PhiCp 
\end{align}
and the function $\fnclas : \RR^{s}\times\RR_+ \to \RR_+$, 
    \begin{align}\label{eq:fnclas_def}
    \fnclas(\wone,\norm{\wtwo}_2) = \frac{1}{n} \sumin \possq{1-\xi_i\sgn(\langle  \xsui,\wgtp \rangle) \langle \xsui, \wone\rangle -\Ggaussian_i \norm{\wtwo}_2}, 
    \end{align}
     with  $\Ggaussian \sim \NNN(0,I_n)$. 
Using the same notation as for the regression setting (Proposition~\ref{prop:CGMT_applicationreg}), and by following a standard argument relying on the Lagrange multiplier, we can bring $\PhiCN,\PhiCp$ and $\PhiCm$ in a suitable form to apply the (C)GMT (see e.g. \cite{deng21} for a similar application), we have
\begin{proposition}
% \begin{proposition} 
\label{prop:CGMT_application_classification}
    % Denote $\loneboundclasp = \loneboundclas + \norm{w^*}_p^p$. % CONVENTION 1
    %Denote $\loneboundclasp =  \loneboundclas + 2\norm{w^*}_p^p$. % CONVENTION 2
    Let $p \geq 1$ be some constant and assume that $\wgt$ is s-sparse. 
   Further, let $\Hgaussian \sim \NNN(0,I_{d-s})$ and let $\delta>0$ be an arbitrary constant. Define the stochastic auxiliary optimization problems:\footnote{We define $\PhiCN,\PhiCm,\PhiCdn,\PhiCdm = \infty$ and $\PhiCp,\PhiCdp = -\infty$ if the corresponding optimization problems have no feasible solution.}
        \begin{align}
         \label{eq:phinreg}
            \PhiCdn &= \min_{(\wone,\wtwo)} \norm{\wone}_p^p+\norm{\wtwo}_p^p
            \subjto \frac{1}{n} \innerprod{\wtwo}{\Hgaussian}^2 \geq  \fnclas(\wone,\|\wtwo\|_2) \\
         \label{eq:phipreg}
            \PhiCdp &= \max_{(\wone,\wtwo)\in \setSdelta} \frac{\langle \wone, \wgtp \rangle }{\sqrt{\norm{\wone}_2^2+\norm{\wtwo}_2^2}} 
            \subjto \begin{cases}
                \frac{1}{n} \innerprod{\wtwo}{\Hgaussian}^2 \geq  \fnclas(\wone,\|\wtwo\|_2)\\
              \norm{\wone}_p^p+\norm{\wtwo}_p^p \leq \loneboundclas
            \end{cases}\\
        \label{eq:phimreg}
            \PhiCdm &= \min_{(\wone,\wtwo)\in \setSdelta} \frac{\langle \wone, \wgtp \rangle }{\sqrt{\norm{\wone}_2^2+\norm{\wtwo}_2^2}}  
            \subjto \begin{cases}
               \frac{1}{n} \innerprod{\wtwo}{\Hgaussian}^2 \geq  \fnclas(\wone,\|\wtwo\|_2) \\
               \norm{\wone}_p^p+\norm{\wtwo}_p^p  \leq \loneboundclas
            \end{cases}
        \end{align}
    where $\loneboundclas>0$ is a constant possibly depending on $\xi$ and $\Xs$. Then for any $t \in \RR$,  we have:
        \begin{align}
            \PP( \PhiCN > t \vert \xi, \Xs) &\leq 2\PP( \PhiCdn \geq t \vert \xi, \Xs)
             \\
            \PP( \PhiCp > t \vert \xi, \Xs) &\leq 2\PP( \PhiCdp \geq t \vert \xi, \Xs) 
             \\
            \PP( \PhiCm < t \vert \xi, \Xs) &\leq 2\PP( \PhiCdm \leq t \vert \xi, \Xs) ,
        \end{align}
        where the probabilities on the LHS and RHS are over the draws of $X''$ and of $\Ggaussian,\Hgaussian$, respectively.
\end{proposition}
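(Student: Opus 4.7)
My plan is to mirror the regression argument of Proposition~\ref{prop:CGMT_applicationreg}, but adapt the bilinear reformulation to handle the \emph{inequality} margin constraints $y_i \langle x_i, w\rangle \geq 1$ instead of the linear equality constraints $\langle x_i, w\rangle = y_i$. As in the regression case, I would split $w = (w', w'')$ and $x_i = (x'_i, x''_i)$ along the support of $w^*$, condition on $(\xi, X')$, and exploit the fact that $X''$ has i.i.d.\ Gaussian entries independent of everything else. Since multiplying a row of $X''$ by $y_i \in \{\pm 1\}$ preserves its Gaussian distribution, the matrix $D X''$ (with $D = \mathrm{diag}(y_i)$) has the same law as $X''$, which is what lets the inner product $\sum_i y_i \langle x''_i, w''\rangle$ play the role of a bilinear Gaussian form suitable for the (C)GMT of \cite{thrampoulidis2015regularized,gordon1988milman}.

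The first step is to rewrite each of $\Phi_{\mathcal{N}}^{\mathcal{C}}, \Phi_+^{\mathcal{C}}, \Phi_-^{\mathcal{C}}$ as a saddle point using a non-negative dual vector $u \in \mathbb{R}_+^n$ for the $n$ margin constraints, giving an inner term
\begin{equation*}
    \tfrac{1}{\sqrt{n}} u^\top (DX'') w'' \;+\; \tfrac{1}{\sqrt{n}}\sum_i u_i\bigl(1 - y_i \langle x'_i, w'\rangle\bigr)
\end{equation*}
together with the respective outer objective ($\|w'\|_p^p + \|w''\|_p^p$ for $\Phi_{\mathcal{N}}^{\mathcal{C}}$, or the ratio $\langle w', w'^*\rangle/\|w\|_2$ on $\mathcal{S}_\delta$ for $\Phi_\pm^{\mathcal{C}}$) and the $\ell_p$-ball constraint. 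For $\Phi_{\mathcal{N}}^{\mathcal{C}}$, the inner problem is convex in $(w', w'')$ and concave (linear) in $u$, so the convex version CGMT applies; the auxiliary problem replaces $u^\top (DX'') w''$ by $\|w''\|_2 \langle u, G\rangle + \|u\|_2 \langle w'', H\rangle$ for independent $G \sim \mathcal{N}(0, I_n)$ and $H \sim \mathcal{N}(0, I_{d-s})$. Maximizing the resulting quadratic in $u \geq 0$ in closed form produces the squared hinge $\bigl(1 - \xi_i \sgn(\langle x'_i, w'^*\rangle)\langle x'_i, w'\rangle - G_i \|w''\|_2\bigr)_+^2$, which is precisely $n\cdot f_n^{\mathcal{C}}(w', \|w''\|_2)$, and the $\|u\|_2\langle w'', H\rangle$ term produces the constraint $\tfrac{1}{n}\langle w'', H\rangle^2 \geq f_n^{\mathcal{C}}$ in the auxiliary feasibility problem $\phi_{\mathcal{N}}^{\mathcal{C}}$.

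For $\Phi_{\pm}^{\mathcal{C}}$, the outer objective $\langle w', w'^*\rangle/\|w\|_2$ is not jointly convex-concave with the Lagrangian term, so I would use the weaker GMT (rather than its convex refinement) exactly as in Lemma~7 of \cite{koehler2021uniform}: restricting to $\mathcal{S}_\delta$ makes the ratio continuous and bounded, and the one-sided comparison still gives $\mathbb{P}(\Phi_{+}^{\mathcal{C}} > t) \leq 2\mathbb{P}(\phi_{+}^{\mathcal{C}} \geq t)$ and the mirror inequality for $\Phi_{-}^{\mathcal{C}}$. After the same dual-elimination in $u$, the $\ell_p$-ball constraint and the margin constraint decouple into the two stated constraints of $\phi_\pm^{\mathcal{C}}$.

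The main obstacle is checking that the Lagrangian reformulation satisfies the hypotheses of (C)GMT: the constraint set must be compact (handled by intersecting with a large ball that is not binding with high probability, as in \cite{koehler2021uniform}), the non-negativity $u \geq 0$ must be incorporated so that the closed-form maximizer yields the $(\cdot)_+$ term, and the conditioning on $(\xi, X')$ must be argued carefully so that only the Gaussian randomness of $X''$ is compared against $(G, H)$. Everything else is routine bookkeeping parallel to the regression case; the hinge-squared structure of $f_n^{\mathcal{C}}$ in \eqref{eq:fnclas_def} is exactly what pops out of the dual maximization, which is a strong sanity check that the reformulation is the right one.
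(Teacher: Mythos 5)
Your proposal matches the paper's proof in all essential respects: Lagrangian reformulation with nonnegative multipliers $u$, using that $D_y X''$ has the same Gaussian law as $X''$, applying the convex CGMT for $\Phi_{\mathcal{N}}^{\mathcal{C}}$ and the weaker GMT for $\Phi_{\pm}^{\mathcal{C}}$ on the restricted set $\setSdelta$, and handling compactness via the truncation/limit argument of Lemma~4 and Lemma~7 in \cite{koehler2021uniform}. One minor terminological slip: the inner optimization over $u\geq 0$ in the auxiliary problem is linear (not quadratic) in $u$; the paper optimizes over the direction of $u$ at fixed $\lambda = \|u\|_2$ to get $\lambda\,\bigl\|\bigl(\onevec - D_y X' w' - D_y G\|w''\|_2\bigr)_+\bigr\|_2$, and the squaring that produces $\fnclas$ appears when the $\lambda$-maximization is turned into the feasibility constraint $\innerprod{\wtwo}{\Hgaussian} \geq \|(\cdots)_+\|_2$ and both sides are squared, but this is cosmetic and the final structure you write down is correct.
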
   

The proof of the proposition is presented in Appendix \ref{apx:subsec:cgmtproof}. \\

\subsection{Localization step - bounding \texorpdfstring{$\PhiCdn$}{phiN}}

\label{apx:subsec:localisationproofcla}
Similar to proof of Theorem~\ref{thm:regressionlpmain} in Appendix~\ref{apx:subsec:localisationproofreg}, we first derive an upper bound for $\PhiCdn$, where we again assume for simplicity that $\wgt, x_i \in \RR^{d+1}$ and thus $H \in \RR^d$ (see Remark \ref{rm:simplicification}). In particular, using that $\wgt = (1,0,\cdots,0)$ we can rewrite the optimization problem defining $\PhiCdn$ in Proposition~\ref{prop:CGMT_application_classification} as:
\begin{equation}
  \PhiCdn = \min_{(\wone,\wtwo)} \vert \wone\vert^p+\norm{\wtwo}_p^p
            \subjto \frac{1}{n} \innerprod{\wtwo}{\Hgaussian}^2 \geq  \fnclas(\wone,\|\wtwo\|_2),
    \end{equation}
with  $\fnclas(\wone,\norm{\wtwo}_2) = \frac{1}{n} \sumin \possq{1-\wone \xi_i \vert \xsui\vert  -\norm{\wtwo}_2 \Ggaussian_i }$. 

Before stating the proof, we first introduce some additional notation. We define $\nu := w'$ and $\eta = \norm{\wtwo}_2$, and define
\begin{align}
\label{eq:defineg1}
    \fnstar &:= \min_{\nu} \fnclas(\nu,0) ~~\mathrm{and}~~ \nubarn := \argmin_{\nu} \fnclas(\nu,0)~~\mathrm{and}\\
     \cfnone &:= \frac{\partial^2}{\partial\nu^2}\fnclas(\nubarn,0) = \frac{2}{n}\sum_{i=1}^n \idvec[ 1- \xi_i\nubarn \abs{\gausswone_i}] (\gausswone_i)^2 ~~\mathrm{and}~~ \cfntwo := \frac{2}{n}\sum_{i=1}^n \idvec[ 1- \xi_i\nubarn \abs{\gausswone_i}].
     \label{eq:defineg2}
\end{align}
Furthermore, we define $\fclas (\nu,\eta) := \EE_{Z_1,Z_2 \sim \NNN(0,1)} \EE_{\xi \sim \prob(\cdot\vert Z_1)} \left( 1- \xi \nu \vert Z_1 \vert - \eta Z_2\right)_+^2 $ and $\nubar := \arg\min \fclas(\nu, 0)$ and define the quantities ${\fstar := \fclas(\nubar, 0)}$, $\feestar := \frac{d^2}{d^2 \eta}\vert_{(\nu, \eta) = (\nubar, 0)} \fclas(\nu,\eta)$ and $\fnnstar := \frac{d^2}{d^2 \nu}\vert_{(\nu, \eta) = (\nubar, 0)} \fclas(\nu,\eta)$, which are all non-zero positive constants independent of $n,d$ and $p$ as they only depend on $\probsigma$. Throughout the proof, we implicitly make use of the fact that, with high probability over the draws of $X'$ and $\xi$, the following lemma holds.

\begin{lemma}
\label{lm:quantconcentration}
There exists universal constants $c_1,c_2>0$ such that with probability  $\geq 1-c_1 d^{-c_2}$ over the draws of $\gausswone$ and $\xi$, we have that
\begin{align}
    \max\left(\vert \nubar - \nubarn \vert, \vert \fstar - \fnstar\vert, \vert \cfnone - \fnnstar \vert, \vert \cfntwo -\feestar \vert \right) \lesssim \left(\frac{\log d}{n}\right)^{1/4}
\end{align}
\end{lemma}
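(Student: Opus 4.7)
The plan is to reduce all four bounds to (i) uniform concentration of a pair of empirical processes indexed by $\nu$ and (ii) the local strong convexity of $\fclas(\cdot, 0)$ at $\nubar$ inherited from Assumption A. The target rate $(\log d/n)^{1/4}$ is generous, so coarse arguments suffice at each step.

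First I would establish the uniform concentration
\begin{equation*}
\sup_{\nu \in [-R, R]} | \fnclas(\nu, 0) - \fclas(\nu, 0) | \lesssim \sqrt{\log d / n}
\end{equation*}
on any fixed compact interval $[-R, R]$ containing $\nubar$ in its interior. Pointwise, each summand $(1 - \nu \xi_i |x'_i|)_+^2$ is a sub-exponential function of the scalar Gaussian $x'_i$, so Bernstein's inequality gives this rate at a single $\nu$ with probability at least $1 - d^{-c}$. Conditioning on the high-probability event $\max_i |x'_i| \lesssim \sqrt{\log d}$ makes $\nu \mapsto \fnclas(\nu, 0)$ Lipschitz on $[-R, R]$ with a polylogarithmic constant, and a polynomial-size $\nu$-net then upgrades the pointwise bound to the uniform one via a union bound.

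For $|\nubarn - \nubar|$ I would run the standard $M$-estimator chain. Since Assumption A forces $\nubar > 0$ to be the unique minimizer of the smooth convex function $\fclas(\cdot, 0)$, local strong convexity at $\nubar$ yields $\fclas(\nu, 0) - \fstar \gtrsim (\nu - \nubar)^2$ in a neighborhood. Combined with the optimality inequality $\fnclas(\nubarn, 0) \leq \fnclas(\nubar, 0)$ and the uniform bound above,
\begin{equation*}
(\nubarn - \nubar)^2 \lesssim \fclas(\nubarn, 0) - \fstar \leq 2 \sup_{\nu} | \fnclas(\nu, 0) - \fclas(\nu, 0) | \lesssim \sqrt{\log d/n},
\end{equation*}
which gives $|\nubarn - \nubar| \lesssim (\log d/n)^{1/4}$. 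The bound on $|\fnstar - \fstar|$ follows immediately from the uniform concentration, at the stronger rate $\sqrt{\log d/n}$.

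For $\cfnone$ and $\cfntwo$, I would view each as $g_n(\nubarn)$ for an empirical process $g_n(\nu) := \frac{2}{n}\sum_i \idvec[1 - \nu \xi_i |x'_i| \geq 0] \, a_i$ with weights $a_i \in \{(x'_i)^2, 1\}$, and decompose
\begin{equation*}
g_n(\nubarn) - \EE g_n(\nubar) = (g_n(\nubarn) - g_n(\nubar)) + (g_n(\nubar) - \EE g_n(\nubar)).
\end{equation*}
Bernstein at the fixed point $\nubar$ controls the second piece by $\sqrt{\log d/n}$ (each summand is bounded once the indicator restricts $|x'_i| \leq 1/\nubar$). For the first piece, the indicator family $\{\idvec[\nu \xi_i |x'_i| \leq 1] : \nu \in [-R, R]\}$ has VC dimension $O(1)$, so a uniform empirical-process bound $|g_n(\nu) - \EE g_n(\nu)| \lesssim \sqrt{\log d/n}$ holds on $[-R, R]$; combined with $\nu \mapsto \EE g_n(\nu)$ being $O(1)$-Lipschitz at $\nubar$ via the boundedness of the Gaussian density of $|x'|$, and with $|\nubarn - \nubar| \lesssim (\log d/n)^{1/4}$, one computes $\EE g_n(\nubar) = \fnnstar$ (respectively $\feestar$) and obtains $|\cfnone - \fnnstar|, |\cfntwo - \feestar| \lesssim (\log d/n)^{1/4}$. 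The main obstacle is precisely this non-smooth dependence of $\cfnone, \cfntwo$ on $\nubarn$ through indicators; it is resolved by the small VC dimension of threshold families on a single Gaussian coordinate, which makes the uniform bound over the short window around $\nubar$ essentially free.
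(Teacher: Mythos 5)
Your blueprint matches the paper's: establish uniform convergence of $\fnclas(\cdot,0)$ to $\fclas(\cdot,0)$ on a compact interval around $\nubar$, combine it with strong convexity of $\fclas$ at $\nubar$ to control $|\nubarn-\nubar|$ and $|\fnstar-\fstar|$, and then control $\cfnone,\cfntwo$ via uniform convergence of the corresponding indicator processes together with Lipschitz continuity of the population functions $h_1,h_2$ at $\nubar$. Where you differ is the machinery for the uniform convergence step: the paper invokes Proposition~\ref{prop:unifConv} (an application of Adamczak's tail bound with Orlicz norms, which conveniently handles the sub-exponential envelope but carries an extra $\sqrt{\log n}$ factor), whereas you use pointwise Bernstein plus a covering net after conditioning on $\max_i|\gausswone_i|\lesssim\sqrt{\log d}$. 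Your route is more elementary and, with care, actually recovers the cleaner rate $\sqrt{\log d/n}$ that the lemma statement advertises, whereas Proposition~\ref{prop:unifConv} technically gives $\sqrt{\log n\log d/n}$.

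Two small points need tightening. First, your parenthetical that ``each summand is bounded once the indicator restricts $|\gausswone_i|\leq 1/\nubar$'' is false for $\cfnone$: when $\xi_i=-1$ and $\nu>0$, the indicator $\idvec[1-\xi_i\nu|\gausswone_i|\geq 0]$ is identically $1$ and the summand is $(\gausswone_i)^2$, which is unbounded. The Bernstein bound still holds since $(\gausswone_i)^2$ is sub-exponential, but the claim as written is wrong. Related, your VC-dimension argument for the uniform bound on $g_n$ over $\nu$ applies directly only to bounded function classes; for the class with weights $(\gausswone_i)^2$ you need to either condition on $\max_i|\gausswone_i|\lesssim\sqrt{\log d}$ (as you already do for the Lipschitz step -- just carry this through) or use a sub-exponential envelope argument. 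Second, you should explicitly justify that the global minimizer $\nubarn$ of $\fnclas(\cdot,0)$ lies in $[-R,R]$ with high probability before applying the localized strong-convexity chain; the paper handles this via the coercivity in Lemma~\ref{quadratic_bound_fn}, and you could note the same (a constant fraction of samples have $\xi_i=\pm1$ with $|\gausswone_i|$ bounded away from zero, so $\fnclas(\nu,0)\gtrsim\nu^2$). Neither issue is a fundamental obstruction; both are standard repairs.
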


\begin{proof}
Both $\fnclas$ and $\fclas$ are convex functions with unique global minima with high probability. In particular, because of Lemma~\ref{quadratic_bound_fn} we know that $\nubarn$ is attained in a bounded domain around zero with probability $\geq 1-c_1 d^{-c_2}$. Further, by Lemma~\ref{lemma_quadratic_plus_fstar_bound} we see that, with high probability, $\vert \fnstar - \fstar \vert \lesssim \frac{\log d}{\sqrt{n}}$ and $\vert \nubar - \nubarn \vert \lesssim \frac{\log^{1/2} d}{n^{1/4}}$. 

To control the quantities $\cfnone,\cfntwo$,
note that we can use exactly the same uniform convergence argument as for Proposition~\ref{prop:unifConv} to show that the functions $\nu \mapsto  \frac{2}{n}\sum_{i=1}^n \idvec[ 1- \xi_i\nu \abs{\gausswone_i}] (\gausswone_i)^2$ and  $\nu \mapsto  \frac{2}{n}\sum_{i=1}^n \idvec[ 1- \xi_i\nu \abs{\gausswone_i}]$ also converge uniformly for any bounded domain $\nu \in [\nubar - \delta, \nubar + \delta]$ with $\delta >0$. Thus, the proof then follows from  $\nubarn \to \nubar$  and the fact that 
$h_1(\nu) = \EE_{\gausswone,\xi} 
2\idvec[ 1- \xi \nu \abs{\gausswone}] (\gausswone)^2$ and $h_2(\nu) = \EE_{\gausswone,\xi} 
2\idvec[ 1- \xi\nu \abs{\gausswone}]$ are both Lipschitz continuous functions
in the domain $[\nubar - \delta, \nubar + \delta]$ with $h_1(\nubar) = \fnnstar$, $h_2(\nubar) = \feestar$.
\end{proof}

The goal of this section is to show the following upper bound on $\PhiCdn$:
% \begin{proposition}
% \label{prop:phiuppercla}
%     Let the data distribution be as described in Section~\ref{subsec:clasetting} and assume that the noise model $\probsigma$ is independent of $n,d$ and $p$.
% Under the same assumptions as in Theorem~\ref{thm:classificationlp} for $n,d,p$, there exists universal constants $c_1,c_2,c_3,c_4>0$ such that with probability at least $ 1- c_1 d^{-c_2}$ over the draws of $\Ggaussian,\Hgaussian,\Xs,\xi$, it holds that
% \begin{align}
% \label{eq:upperboundlpcla}
% \PhiCdn \leq&  \left(\frac{n \fnstar}{\dmoment{q}^2}\right)^{p/2}\left(1+ \frac{p}{2} \frac{\cfnone \deltanu_0^2}{2\fnstar}\left(1 + \OOOc^{1/2} \log^{3/2} d  + O\left(
% \abs{\deltanu_0} \log^{3/2} d +  \log d \rho\right)\right) + \OOOc +O\left(  \rho^2\right) \right)\\
% &+ \nubarn^{p} + p\nubarn^{p-1}\deltanu_0(1+ O(\deltanu_0)) =: \loneboundclas,
% \end{align}
% with $\OOOc = O( \frac{ n \exp(c_3 q)}{d q } )$,  $\rho = \frac{\log^{c_4} d}{\sqrt{n}}$ and $\deltanu_0 := -\frac{ 2\nubarn^{p-1} \fnstar \dmoment{q}^p}{\cfnone \left(n \fnstar\right)^{p/2}}$.
% \end{proposition}
\begin{proposition}
\label{prop:phiuppercla}
    Let the data distribution be as described in Section~\ref{subsec:clasetting} and assume that the noise model $\probsigma$ is independent of $n,d$ and $p$.
Under the same assumptions as in Theorem~\ref{thm:classificationlp} for $n,d,p$, there exists universal constants $c_1,c_2,\cdots,c_8>0$ such that with probability at least $ 1- c_1 d^{-c_2}$ over the draws of $\Ggaussian,\Hgaussian,\Xs,\xi$, it holds that
\begin{align}
\label{eq:upperboundlpcla}
\PhiCdn \leq&  \left(\frac{n \fnstar}{\dmoment{q}^2}\right)^{p/2}\left(1+ \frac{p}{2} \frac{\cfnone \deltanu_0^2}{2\fnstar}\left(1 + \OOOc^{1/2} \log^{3/2} d  + c_6 \left(
\abs{\deltanu_0} \log^{3/2} d +  \log d \rho\right)\right) + \OOOc +c_7   \rho^2 \right)\\
&+ \nubarn^{p} + p\nubarn^{p-1}\deltanu_0(1+ c_8\deltanu_0) =: \loneboundclas,
\end{align}
with $\OOOc = c_5 \frac{ n \exp(c_3 q)}{d q } $,  $\rho = \frac{\log^{c_4} d}{\sqrt{n}}$ and $\deltanu_0 := -\frac{ 2\nubarn^{p-1} \fnstar \dmoment{q}^p}{\cfnone \left(n \fnstar\right)^{p/2}}$.
\end{proposition}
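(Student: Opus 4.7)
\textbf{Plan for the proof of Proposition~\ref{prop:phiuppercla}.}
The plan is to mirror the localization argument used for Proposition~\ref{prop:phidnhighreg}: exhibit one feasible point for the program defining $\PhiCdn$ whose objective value matches the claimed bound. Specifically, I will try the ansatz
\begin{equation*}
\wone = \nubarn + \deltanu_0, \qquad \wtwo = b\,\wgrad, \qquad \wgrad := \partial\|\Hgaussian\|_q,
\end{equation*}
so that $\langle \wtwo, \Hgaussian\rangle = b\,\|\Hgaussian\|_q$ and $\|\wtwo\|_p = b$. With this ansatz the objective reduces to $|\nubarn+\deltanu_0|^p + b^p$, and the only constraint is
\begin{equation*}
\frac{b^2\|\Hgaussian\|_q^2}{n} \ \geq\ \fnclas\bigl(\nubarn+\deltanu_0,\ b\|\wgrad\|_2\bigr).
\end{equation*}

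The next step is to Taylor-expand $\fnclas$ around its minimizer $(\nubarn,0)$ in the $\nu$-direction. Because $(\cdot)_+^2$ is $C^1$ with piecewise-linear derivative, a second-order expansion is valid after a standard smoothing/DCT argument, giving
\begin{equation*}
\fnclas(\nubarn+\deltanu_0, \eta)\ \leq\ \fnstar + \tfrac{1}{2}\cfnone\,\deltanu_0^2\bigl(1+o(1)\bigr) + \tfrac{1}{2}\cfntwo\,\eta^2 + R,
\end{equation*}
where the remainder $R$ collects the higher-order terms and a cross term proportional to $\deltanu_0\,\eta \cdot \tfrac{1}{n}\sum_i \xi_i|Z_i|G_i$. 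The cross term is controlled by a concentration bound of order $\rho=\log^{c}d/\sqrt n$, producing the $\log d\cdot\rho$ factor in the statement; the $\log^{3/2}d$ factors come from a uniform bound over $\eta$ in the range imposed by $\loneboundclas$, combined with the Gaussian tail of $\wgrad_i$. I will use Lemma~\ref{lm:quantconcentration} to replace $\cfnone, \cfntwo, \fnstar, \nubarn$ by their population counterparts whenever needed.

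With this expansion in hand, I plug in the concentration estimates $\|\Hgaussian\|_q = \dmoment{q}(1+O(\epsilon))$ and $\|\wgrad\|_2^2 = \mutild(1+O(q\epsilon))$ from Lemma~\ref{lm:dualnormconc}, and note that the assumptions on $n,d,p$ force $\mutild\cdot n/\dmoment{q}^2 = O(\OOOc)\to 0$. Writing $A:=\dmoment{q}^2/n$, the feasibility constraint becomes
\begin{equation*}
b^2 A\bigl(1-O(\OOOc+\rho^2)\bigr)\ \geq\ \fnstar + \tfrac{1}{2}\cfnone\,\deltanu_0^2\bigl(1+O(\OOOc^{1/2}\log^{3/2}d+\rho\log d)\bigr),
\end{equation*}
which I solve for the smallest feasible $b^2$, then raise to the power $p/2$ using $(1+x)^{p/2}=1+\tfrac{p}{2}x+O(x^2)$ (all correction terms being $o(1)$). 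This yields
\begin{equation*}
b^p\ \leq\ \Bigl(\tfrac{n\fnstar}{\dmoment{q}^2}\Bigr)^{p/2}\!\Bigl(1+\tfrac{p}{2}\tfrac{\cfnone\,\deltanu_0^2}{2\fnstar}(1+\text{corrections})+\OOOc+c\rho^2\Bigr).
\end{equation*}
Finally, expanding $|\nubarn+\deltanu_0|^p = \nubarn^p + p\nubarn^{p-1}\deltanu_0 + O(\nubarn^{p-2}\deltanu_0^2)$ and summing gives exactly the RHS of~\eqref{eq:upperboundlpcla}. The specific choice of $\deltanu_0 = -2\nubarn^{p-1}\fnstar \dmoment{q}^p / [\cfnone(n\fnstar)^{p/2}]$ is the one that zeroes out the linear-in-$\deltanu_0$ perturbation between the $b^p$-term and the $\nubarn^p$-term, which is where this value is dictated by the optimization.

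\textbf{Main obstacle.} The delicate step is obtaining a tight Taylor expansion of $\fnclas$ with uniform control of the remainder along the curve $\eta=b\|\wgrad\|_2$. Because $(\cdot)_+^2$ has a kink and because $\eta$ itself scales with the unknown $b$, the second-order approximation must be justified via concentration of the empirical second derivatives $\cfnone,\cfntwo$ around their population versions (Lemma~\ref{lm:quantconcentration}) together with sub-Gaussian tail bounds for the random linear term $\tfrac{1}{n}\sum_i \xi_i|Z_i|G_i$. This is where the $\log^{3/2}d$ and $\rho\log d$ correction factors in~\eqref{eq:upperboundlpcla} enter, and care is required so that they remain subdominant under the regime of $(n,d,p)$ assumed in Theorem~\ref{thm:classificationlp}.
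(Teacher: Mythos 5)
Your plan mirrors the paper's localization argument closely: the ansatz $\wone=\nubarn+\deltanu_0$, $\wtwo=b\,\partial\|\Hgaussian\|_q$, the reduction of feasibility to a scalar inequality in $b$, the use of Lemma~\ref{lm:dualnormconc} and Lemma~\ref{lm:quantconcentration}, the Taylor expansion $(1+x)^{p/2}$, and the final expansion of $|\nubarn+\deltanu_0|^p$ are all the same, and your interpretation of $\deltanu_0$ as the value zeroing out the first-order term is consistent with where it reappears as the center of concentration in Proposition~\ref{prop:unifconvcla}.

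The gap is in the step you flag as the "main obstacle." You assert that a tight quadratic approximation of $\fnclas$ around $(\nubarn,0)$ with uniformly controlled remainder follows from "a standard smoothing/DCT argument," but that would not deliver the bound the paper needs. Since $(\cdot)_+^2$ has a kink, the second derivative of $\fnclas$ is an empirical average of indicators; the tightness of the approximation hinges on controlling how the active-set indicators $\idvec[1-\xi_i\nu|\gausswone_i|-\eta\gausswtwo_i]$ shift as $(\nu,\eta)$ moves away from $(\nubarn,0)$. The paper isolates exactly this in Lemma~\ref{lm:lpfn}, proved via the Dvoretzky--Kiefer--Wolfowitz inequality (to bound the change in the empirical count of active indices) together with six term-by-term sub-Gaussian concentration bounds; this is where the $(\rho+\sqrt{\log d}(|\deltanu|+\eta))^3$ remainder, and hence the $\OOOc^{1/2}\log^{3/2}d$ and $\rho\log d$ correction factors in~\eqref{eq:upperboundlpcla}, actually come from. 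A generic smoothing argument would only give the $O(1/\sqrt n)$-type remainder of Proposition~\ref{prop:unifConv}, which is far too coarse to separate the $\deltanu_0^2$-term from the noise floor. So while your outline is correct, the proof is incomplete without establishing Lemma~\ref{lm:lpfn} (or an equivalent sharp uniform local expansion), and the route you sketch for it would not suffice.
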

\begin{proof}
The proof essentially follows again from exactly the same argument as used in the proof of Proposition~\ref{prop:phidnhighreg}, where the goal is again to find one feasible point. We choose $\wtwo = b \wgrad = b \partial \|H\|_q$ and $\wone = \nubarn + \deltanu_0$. 
Conditioning on the event where the lower bound for $\fnclas$ in Equation~\eqref{eq:lpfn} from Lemma~\ref{lm:lpfn} holds, we obtain the following lower bound on $b$ (we choose again  $\wtwo = b \wgrad = b \partial \|H\|_q$):

\begin{align}
&\frac{1}{n} 
\innerprod{\wtwo}{\Hgaussian}^2 \geq \fnclas(\wone,\|\wtwo\|_2 ) = \fnclas(\nubarn + \deltanu_0,\|\wtwo\|_2 ) \\
\impliedby
&b^2 \frac{\norm{H}_q^2}{n} \geq  \fnstar+ ( \frac{\cfnone}{2} \deltanu_0^2+b^2 \frac{\cfntwo}{2} \|\wgrad\|_2^2)(1+c_1\rho) + c_2\rho b\|\wgrad\|_2 \\ &~~~~~+c_3(\rho + \sqrt{\log d}(\vert \deltanu_0 \vert + b\|\wgrad\|_2))^3
\\ 
\impliedby
&b^2 \geq \frac{ \fnstar + \frac{\cfnone}{2} \deltanu_0^2(1+c_1\rho) +  c_2\rho b\|\wgrad\|_2  + c_3(\rho + \sqrt{\log d}(\vert \deltanu_0\vert + b\|\wgrad\|_2))^3 }{\frac{\norm{H}_q^2}{n} - \frac{\cfntwo}{2} (1 + c_1 \rho ) \norm{\wgrad}_2^2 },\\ \label{eq:lpboundphinbclas} 
\end{align}

where we can again choose $\rho = \frac{\log^{c} d}{\sqrt{n}}$ with $c>1$ some universal constant. We can now apply the concentration inequalities for $\|H\|_q$ and $\norm{\wgrad}_2^2$ from Lemma~\ref{lm:dualnormconc} to show that 

\begin{align}
b^2 =  \frac{n \fnstar}{\dmoment{q}^2} \left(1+\frac{\cfnone \deltanu_0^2}{2\fnstar}\left(1 + \OOOc^{1/2} \log^{3/2} d + c_6\left(
\vert \deltanu_0\vert \log^{3/2} d + \log d \rho \right)  \right)  + \OOOc + c_7  \rho^2 \right)
%  + O\left(\rho^2 + \rho \sqrt{\frac{ \nu_0^2}{\sigmaxi^2} + 
% \OOOc}\right)
% \right)
\end{align}
satisfies Equation~\eqref{eq:lpboundphinbclas}. Unlike in the proof of Proposition~\ref{prop:phidnhighreg}, we used that 
\begin{align} \OOOc = O(\log^{-5} d)
 %=  O \left( \frac{ n \exp(c_5 q)}{d q } \right)=  ~\mathrm{and}~
 %\frac{\mutild  n}{\dmoment{q}^2} =  \Omega \left( \frac{ n \exp(-c_5 q)}{d q } \right) ~
 ~\mathrm{and}~
 \deltanu_0^2 = \Theta \left( \frac{q^p d^{2p-2}}{n^p} \right) = O(\log^{-5}d), \label{eq:facteq2clas}
\end{align}
which is again satisfied when applying Proposition~\ref{cor:hqbound2} and choosing  $\kappa_1,\cdots, \kappa_4 >0$ characterizing $n,d$ and $p$ in Theorem~\ref{thm:classificationlp} appropriately. Hence, we can conclude the proof.

\end{proof}

\subsection{Uniform convergence step}
\label{apx:subsec:clauniformproof}

We use the same notation as in Appendix~\ref{apx:subsec:localisationproofcla} and Appendix~\ref{apx:subsec:localisationproofreg}. 
Similarly to Appendix~\ref{apx:subsec:unfiformconvreg}, we can again relax the constraints in the optimization problem defining $\PhiCdm$ in Proposition~\ref{prop:CGMT_application_classification} by using H\"olders inequality $\langle w, \Hgaussian \rangle \leq \|w\|_p \|\Hgaussian\|_q $. Note that both problems again only depend on $\wtwo$ via $\norm{\wtwo}_2$ and $\norm{\wtwo}_p$. 
Define 
\begin{equation}
    \Gamma = \{ (\nu, \eta)\vert \exists b>0 \subjto \frac{1}{n} \norm{H}_q^2 b^2  \geq  \fnclas(\nu, \eta) ~~\mathrm{and}~~ \nu^p + b^p \leq M \}
\end{equation}. It is again straight forward to verify that 
\begin{align}
        \PhiCdm \geq \left[\min_{(\nu,\eta) \in \Gamma } \frac{\nu }{\sqrt{\nu^2+\eta^2}} 
                \right]\geq \left( 1 + \frac{\max_{(\nu,\eta) \in \Gamma}\eta^2}{\min_{(\nu,\eta) \in \Gamma}\nu^2}\right)^{-1/2},
    \end{align}
where the last inequality holds when  $\min_{\nu \in \Gamma}\nu >0$. 
The goal of this section is to prove the following proposition, from which the theorem then straightforwardly follows. 
\begin{proposition}
\label{prop:unifconvcla}
    Let the data distribution be as described in Section~\ref{subsec:clasetting} and assume that the noise model $\probsigma$ is independent of $n,d$ and $p$.
Under the same conditions as in Theorem~\ref{thm:classificationlp} for $n,d$ and $p$ and for the choice of $\loneboundclas$ as in Proposition~\ref{prop:phiuppercla},  there exists universal constants $c_1,c_2,c_3,c_4,c_5>0$ such that with probability at least $ 1- c_1 d^{-c_2}$ over the draws of $\Ggaussian,\Hgaussian,\Xs,\xi$, it holds that
\begin{align}
        \Gamma \subseteq \left\{ (\nu,\eta) \in \mathbb{R}\times \mathbb{R}_+\vert (\nu - \nubarn)^2 \lesssim  \OOOc +\deltanu_0^2 +  \rho^2  ~~\mathrm{and}   ~~
   \eta^2 \lesssim \OOOc + \deltanu_0^3 \log^{3/2} d  +\rho^2 \right\}
\end{align}
with $\OOOc = c_5 \frac{ n \exp(c_3 q)}{d q } $,  $\rho = \frac{\log^{c_4} d}{\sqrt{n}}$ and $\deltanu_0 := -\frac{ 2\nubarn^{p-1} \fnstar \dmoment{q}^p}{\cfnone \left(n \fnstar\right)^{p/2}}$.
\end{proposition}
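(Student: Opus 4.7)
I would follow the same two-step blueprint used for the analogous regression statement (Proposition~\ref{prop:highnoiselpreg}). Any $(\nu,\eta)\in\Gamma$ admits some $b>0$ with $\nu^p + b^p \le \loneboundclas$ and $\frac{1}{n}\norm{H}_q^2 b^2 \ge \fnclas(\nu,\eta)$. Eliminating $b$ via the first inequality and using the concentration of $\norm{H}_q$ from Lemma~\ref{lm:dualnormconc} converts the membership constraint into a single inequality relating $\fnclas(\nu,\eta)$ to $(\loneboundclas - \nu^p)^{2/p}$. The plan is then to localize $(\nu,\eta)$ near $(\nubarn,0)$ using the quadratic behavior of $\fnclas$ there.

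\paragraph{Step 1: crude vanishing bound.} First I would show that every $(\nu,\eta)\in\Gamma$ satisfies $(\nu-\nubarn)^2 + \eta^2 = o(1)$, which is needed to legitimately Taylor-expand. Using the upper bound on $\loneboundclas$ from Proposition~\ref{prop:phiuppercla} together with $\|H\|_q^2/n \lesssim \dmoment{q}^2/n$ (Lemma~\ref{lm:dualnormconc}), one obtains $\frac{1}{n}\|H\|_q^2 b^2 \le \fnstar(1+o(1))$. Combined with a uniform lower bound $\fnclas(\nu,\eta) \ge \fnstar + c(\nu-\nubarn)^2 + c\eta^2 - \tilde{\rho}$ (the analogue of Lemma~\ref{lm:lpfn}/Equation~\eqref{eq:regressionfn1} for the classification loss, which I would derive by uniform convergence over a bounded domain as in Lemma~\ref{lm:quantconcentration}), this forces $(\nu-\nubarn)^2+\eta^2 = O(\nunull^2+\OOOc+\rho^2)\to 0$ under the $n,d,p$ assumptions.

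\paragraph{Step 2: sharp bound via Taylor expansion and completing the square.} Write $\nu = \nubarn + \Delta$. Taylor expand $(\nubarn+\Delta)^p = \nubarn^p + p\nubarn^{p-1}\Delta + O(\Delta^2)$; this gives $b^p \le A(1+\mathsf{err}) + p\nubarn^{p-1}(\deltanu_0-\Delta) + O(\Delta^2)$ with $A := (n\fnstar/\dmoment{q}^2)^{p/2}$. Taking $(\cdot)^{2/p}$ to first order and multiplying by $\|H\|_q^2/n \le \dmoment{q}^2/n(1+O(\epsilon))$ yields
\begin{equation*}
\tfrac{1}{n}\|H\|_q^2 b^2 \le \fnstar + \cfnone\deltanu_0\Delta - \tfrac{\cfnone}{2}\deltanu_0^2 + \mathsf{err},
\end{equation*}
where the $-\tfrac{\cfnone}{2}\deltanu_0^2$ arises because the leading correction in $\loneboundclas$ is exactly $A\cdot\tfrac{p\cfnone \deltanu_0^2}{4\fnstar}$, and I used the defining identity $-2\nubarn^{p-1}\fnstar/A = \cfnone\deltanu_0$. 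Pairing this with the quadratic lower bound $\fnclas(\nu,\eta) \ge \fnstar + \tfrac{\cfnone}{2}\Delta^2 + \tfrac{\cfntwo}{2}\eta^2 - \mathsf{err}'$ and completing the square gives
\begin{equation*}
\tfrac{\cfnone}{2}(\Delta-\deltanu_0)^2 + \tfrac{\cfntwo}{2}\eta^2 \;\lesssim\; \OOOc + \rho^2 + \deltanu_0^3\log^{3/2} d,
\end{equation*}
where the $\deltanu_0^3\log^{3/2}d$ error originates from the multiplicative $(1+c_6|\deltanu_0|\log^{3/2}d)$ factor on the $\cfnone\deltanu_0^2$ term in Proposition~\ref{prop:phiuppercla}. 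The triangle inequality $\Delta^2 \le 2(\Delta-\deltanu_0)^2+2\deltanu_0^2$ delivers the claimed bound on $(\nu-\nubarn)^2$, while the $\eta^2$ bound is read off directly.

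\paragraph{Main obstacle.} The delicate part is the bookkeeping of error terms so that the $\deltanu_0^2$ contribution on the LHS cancels the $\deltanu_0^2$ contribution in $\loneboundclas$ exactly, leaving only the higher-order $\deltanu_0^3\log^{3/2}d$ residue in the $\eta$ bound. This requires keeping second-order corrections in the expansions of $(\nubarn+\Delta)^p$, $(1+x)^{2/p}$ and $\norm{H}_q^2/n$, verifying that all induced cross terms are dominated by $\OOOc + \rho^2 + \deltanu_0^3\log^{3/2}d$ under the $p \in (1+\kappa_2/\log\log d,2)$ regime, and establishing the uniform quadratic lower bound on $\fnclas$ near $(\nubarn,0)$ with the correct error rate $\rho$.
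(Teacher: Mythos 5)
Your overall strategy (localize, Taylor expand, complete the square against the quadratic expansion of $\fnclas$) matches the paper's, and your Step~2 computation — eliminating $b$, expanding $(\nubarn+\Delta)^p$, multiplying by the concentrated $\|H\|_q^2/n$, and completing the square to get $\tfrac{\cfnone}{2}(\Delta-\deltanu_0)^2 + \tfrac{\cfntwo}{2}\eta^2 \lesssim \OOOc + \rho^2 + \deltanu_0^3\log^{3/2}d$ — reproduces the paper's final step essentially exactly, including the identity $\deltanu_0 = -2\nubarn^{p-1}\fnstar/(\cfnone A)$ that makes the linear terms cancel. However, there is a genuine gap in your Step~1: you cannot obtain a crude bound of order $\OOOc + \deltanu_0^2 + \rho^2$ from a single uniform-convergence argument. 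The best a uniform bound of the form $\fnclas(\nu,\eta) \geq \fnstar + c(\nu-\nubarn)^2 + c\eta^2 - \tilde\rho$ can deliver is $(\nu-\nubarn)^2 + \eta^2 = O(\tilde\rho)$, and here $\tilde\rho$ is at best of order $\rho^{1/2}$, because the passage from the population quantities $(\fstar,\nubar)$ to the empirical $(\fnstar,\nubarn)$ incurs $|\nubar-\nubarn| \lesssim (\log d/n)^{1/4}$ (Lemma~\ref{lm:quantconcentration}) and $|\fstar-\fnstar| \lesssim \sqrt{\log d/n}$. The paper therefore needs \emph{two} intermediate bootstrapping stages before your Step~2 can be run: first establish $\nu^2 + \eta^2 = O(1)$ via the pure quadratic bound $\fnclas \geq \constnu\nu^2 + \consteta\eta^2$ (Lemma~\ref{quadratic_bound_fn}, valid globally), which is required before any bounded-domain uniform convergence applies at all; then use Lemma~\ref{lemma_quadratic_plus_fstar_bound} to get $(\nu-\nubarn)^2 + \eta^2 = O(\log^{-5}d)$; and then a first pass through Lemma~\ref{lm:lpfn} to get $(\nu-\nubarn)^2 \lor \eta^2 = \OOOc + O(\deltanu_0^2 + \rho^2)$.

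The reason the extra iteration is unavoidable is the cubic error term $(\rho + \sqrt{\log d}(|\deltanu| + \eta))^3$ in Lemma~\ref{lm:lpfn} (there is no such cubic term in the regression counterpart Equation~\eqref{eq:regressionfn1}, which is where your analogy breaks down). If you feed only the Step~2-level bound $\deltanu^2+\eta^2 = O(\log^{-5}d)$ into your sharp analysis, that cubic term is of order $\log^{3/2}d \cdot \log^{-15/2}d = \log^{-6}d$, which in the relevant regimes is \emph{not} dominated by $\OOOc + \deltanu_0^2 + \rho^2$ (both $\OOOc$ and $\deltanu_0^2$ can be much smaller than any fixed power of $\log^{-1}d$), so the completing-the-square inequality would produce a strictly weaker residual. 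Once the intermediate pass has pinned $|\deltanu| + \eta \lesssim \sqrt{\OOOc} + |\deltanu_0| + \rho$, the cubic error becomes $(\deltanu^2+\eta^2)\OOO_b$ with $\OOO_b = \log^{3/2}d(\sqrt{\OOOc} + O(\rho + \deltanu_0))$, and only then does the $\deltanu_0^3\log^{3/2}d$ residual emerge as the dominant piece. (Also a small slip: you write $\nunull$ in Step~1, the regression quantity; you mean $\deltanu_0$.)
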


\begin{proof}
The proof is very similar to the proof of the uniform convergence bound in Appendix~\ref{apx:subsec:unfiformconvreg}, and consists of iteratively bounding $\eta+ \nu$. The primary difference lies in the control of $\fnclas$, which requires more care.

\paragraph{Step 1: $\Gamma \subset \{ (\nu,\eta)\vert \nu^2+ \eta^2 = O(1) \}$} 
A first  bound is obtained from the relaxation $b^p \leq \loneboundclas$, which implies that ${\frac{1}{n} \norm{H}_q^2 \loneboundclas^{2/p}  \geq  \fnclas(\nu, \eta)}$.

Conditioning on the event where Equation~\eqref{eq:quadratic_bound_fn} in Lemma \ref{quadratic_bound_fn} holds, we have that $\fnclas(\nu, \eta) \geq c_1 \nu^2 + c_2 \eta^2$, and thus we can relax
\begin{equation}
    \frac{\loneboundclas^{2/p}\|\Hgaussian\|_q^2}{n} \geq  c_1 \nu^2 + c_2 \eta^2
\end{equation}
We obtain the desired upper bound from the concentration of $\|\Hgaussian\|_q^2$ from Lemma~\ref{lm:dualnormconc}  and when noting that  the dominating term in $\loneboundclas$ is $ \left(\frac{n \fnstar}{\dmoment{q}^2}\right)^{p/2}$:
\begin{align}
\label{eq:bound1lpeq}
        \nu^2 + \eta^2 = O(\fnstar) = O(1). 
\end{align}

\paragraph{Step 2:  $\Gamma \subset \{ (\nu,\eta) \vert (\nu - \nubarn)^2+ \eta^2 = O(\log^{-5}d) \}$ }
Define $\deltanu := \nu -  \nubarn$.
Conditioning on Equation~\eqref{eq:bound1lpeq}, we can now repeat the same argument but using the tighter lower bound for $\fnclas$ from Equation~\eqref{eq:quadratic_plus_fstar_bound} in Lemma~\ref{lemma_quadratic_plus_fstar_bound}, where we let $\epsilon$ in Lemma~\ref{lemma_quadratic_plus_fstar_bound} be $ \frac{\log^{c} d}{\sqrt{n}} =: \rho$ with universal constant $c>1$. Thus, we obtain the high probability upper bound:
\begin{align}
    \frac{\loneboundclas^{2/p}\|\Hgaussian\|_q^2}{n} \geq \fnclas(\nu,\eta) &\geq  \fstar + \tconstnu  (\nu - \nubar)^2+ \tconsteta \eta^2 + c_1\rho 
    \\ &=  \fstar + \tconstnu  (\deltanu^2 + 2 \deltanu(\nubarn - \nubar) +  (\nubarn - \nubar)^2) + \tconsteta \eta^2 + c_1\rho,\\
    &\geq \fnstar + \tconstnu  \deltanu^2 +  \tconsteta \eta^2 + c_2 (\rho^{1/2} + \deltanu \rho^{1/2}).
    \label{eq:lpclastep2}
\end{align}
In particular, we can take the Taylor series approximation of $\loneboundclas^{2/p}$ which gives us
\begin{align}
    \loneboundclas^{2/p} =  \frac{n \fnstar}{\dmoment{q}^2} \left(1 +\OOOc + O\left(\frac{\cfnone \deltanu_0^2}{2\fnstar} +  \rho^2 + \frac{\nubarn^{p}}{ \left(\frac{n \fnstar}{\dmoment{q}^2}\right)^{p/2}} \right)\right).
\end{align}
And hence, applying again  Lemma~\ref{lm:dualnormconc} to control $\norm{H}_q^2$ as in Appendix~\ref{apx:subsec:localisationproofreg}, the bound in Equation~\eqref{eq:lpclastep2} implies 
\begin{equation}
\label{eq:bound2lpeqclas}
  \tconstnu \deltanu^2 + \tconsteta \eta^2 = \OOOc + O \left(\left(\frac{\dmoment{q}^2}{n \fnstar}\right)^{p/2} + \rho^{1/2} \right) = O\left(\log^{-5}d\right),
\end{equation}
where in the last line we used Equation~\eqref{eq:facteq2clas}.

\paragraph{Step 3: $\Gamma \subset \{ (\nu,\eta) \vert (\nu - \nubarn)^2 \lor \eta^2 = \OOOc +O(\deltanu_0^2 +  \rho^2)$}
We are now able to derive a tighter bound 
conditioning on the event where Equation~\eqref{eq:bound2lpeqclas} holds and thus $\Gamma \in \{(\nu,\eta) \vert\ \vert \nu - \nubarn\vert^2 \lesssim \log^{-5}d  ~~\mathrm{and}~~ \eta^2 \lesssim \log^{-5}d\}$.  We can apply  Lemma~\ref{lm:lpfn}, which allows us, with high probability, to uniformly bound
\begin{equation}
   \left| \fnclas(\nu, \eta)  -  \fnstar -\deltanu^2 \frac{\cfnone}{2} - \eta^2 \frac{\cfntwo}{2} \right| \lesssim (\deltanu^2 + \eta^2) \log^{-1}d + \eta \rho + \rho^3
\end{equation}
where we choose again $\rho = \frac{\log^{c}d}{\sqrt{n}}$. 
Furthermore, we can relax the  second constraint in $\Gamma$ to
\begin{align}
    b^p &\leq \loneboundclas - (\deltanu+\nubarn)^p =  \left(\frac{n \fnstar}{\dmoment{q}^2}\right)^{p/2}\left(1+ \OOOc +c_1(\deltanu_0^2 +  \rho^2) + \frac{p\nubarn^{p-1}}{ \left(\frac{n \fnstar}{\dmoment{q}^2}\right)^{p/2}} c_2\left(\deltanu_0-\deltanu\right)\right)
    \\
    &=  \left(\frac{n \fnstar}{\dmoment{q}^2}\right)^{p/2}\left( 1+ \OOOc +c_1(\deltanu_0^2 +  \rho^2) +  c_3\left(\deltanu_0^2-\deltanu_0\deltanu\right)\right
    )\\
    &=  \left(\frac{n \fnstar}{\dmoment{q}^2}\right)^{p/2}\left(1+  \OOOc +c_4(\deltanu_0^2 +  \rho^2  + \deltanu_0\deltanu)\right)\label{eq:lpregtighter}
\end{align}
 with $\loneboundclas$ from Proposition~\ref{prop:phiuppercla} and where we used that by definition $\deltanu_0 = -\frac{ 2\nubarn^{p-1} \fnstar}{\cfnone \left(\frac{n \fnstar}{\dmoment{q}^2}\right)^{p/2}}$. 
In summary, we can again obtain an upper bound for $\eta^2$ and $\deltanu^2$ from $  \frac{1}{n} \norm{H}_q^2 b^2 \geq \fnclas(\nu,\eta)$ and when following the same argument as in Appendix~\ref{apx:subsec:localisationproofreg}. We have  
\begin{align}
     &\frac{1}{n} \norm{H}_q^2 b^2 \geq \fnclas(\nu,\eta)\\
     \implies ~~&\frac{\fnstar}{ \frac{n \fnstar}{\dmoment{q}^2}}(1+\epsilon)  \frac{n \fnstar}{\dmoment{q}^2}\left(1+  \OOOc + c_4(\deltanu_0^2 +  \rho^2  + \deltanu_0\deltanu)\right) \\
     &\geq   \fnstar + \deltanu^2 \frac{\cfnone}{2} + \eta^2 \frac{\cfntwo}{2}
     +c_5((\deltanu^2 + \eta^2) \log^{-1}d + \rho\eta + \rho^3)\\
     \implies~~&\OOOc + c_4(\deltanu_0^2 + \rho^2  + \deltanu_0\deltanu) \geq 
      \deltanu^2 \frac{\cfnone}{2} + \eta^2 \frac{\cfntwo}{2}
     +c_5((\deltanu^2 + \eta^2) \log^{-1}d + \rho\eta + \rho^3)\\
     \implies   ~~&\deltanu^2 \lor \eta^2 = \OOOc +O(\deltanu_0^2 +  \rho^2 ),
     \label{eq:lpclassstep3}
\end{align}
where we used in the second line that, with high probability, by Lemma~\ref{lm:dualnormconc}, $\frac{\norm{H}_q}{n} \leq \frac{\fnstar}{ \frac{n \fnstar}{\dmoment{q}^2}}(1 + \epsilon)$ with $\epsilon \lesssim \OOOc + \rho^2$.

% where we further implicitly used Lemma~\ref{lm:quantconcentration}. 

\paragraph{Step 4: Bound in Proposition~\ref{prop:unifconvcla}} We are now ready to prove the bounds in Proposition~\ref{prop:unifconvcla}.
We already know from the previous steps that $\deltanu \to 0$ and hence $\nu$ concentrates around $\nubarn$, which itself concentrates around $\nubar$ by Lemma~\ref{lm:quantconcentration}. However, the classification error in Theorem~\ref{thm:classificationlp} depends on the term $\frac{\eta^2}{\nu^2}$, which allows us to obtain a tighter bound when further bounding $\eta^2$. 

 Lemma~\ref{lm:lpfn} together with the previous bound on $\eta^2$ and $\deltanu^2$ implies that uniformly over all $\deltanu^2 \lor \eta^2 = \OOOc + O(\deltanu_0^2 +  \rho^2 )$, we have:
\begin{align}
    \left| \fnclas(\nu, \eta) -  \fnstar - \deltanu^2 \frac{\cfnone}{2} - \eta^2 \frac{\cfntwo}{2} \right| &\lesssim (\deltanu^2+\eta^2) \OOO_b + \rho^2  + \rho \eta,
\end{align}
where we define $\OOO_b =  c_1 \log^{3/2}d~(\sqrt{ \OOOc} +  O\left( \rho + \deltanu_0  \right))$. 

Instead of Equation~\eqref{eq:lpregtighter}, we can analogously obtain the tighter expression (using that $\deltanu = \OOO_b$)
\begin{align}
\label{eq:lptightcond2clas}
    b^p &\leq \loneboundclas - (\deltanu+\nubarn)^p \\&\leq  \left(\frac{n \fnstar}{\dmoment{q}^2}\right)^{p/2}\left(1+ \frac{p}{2} \frac{\cfnone\deltanu_0^2}{2\fnstar}(1+ \OOO_b)
    +\OOOc+ c_1 \rho^2 + \frac{p\nubarn^{p-1}}{ \left(\frac{n \fnstar}{\dmoment{q}^2}\right)^{p/2}}\left(\deltanu_0-\deltanu\right)(1+ \OOO_b)\right).
\end{align}

Therefore, applying the Taylor series approximation and Lemma~\ref{lm:dualnormconc} as in Appendix~\ref{apx:subsec:localisationproofreg}, we can upper bound

\begin{align}
\label{eq:lptightcond2clas2}
    \frac{1}{n} \norm{H}_q^2 b^2 &\leq   \frac{1}{n} \norm{H}_q^2  \frac{n \fnstar}{\dmoment{q}^2}\left(1+ \frac{\cfnone\deltanu_0^2}{2\fnstar} (1+\OOO_b) +
  \OOOc +   c_1 \rho^2 + 
    \frac{2\nubarn^{p-1}}{ \left(\frac{n \fnstar}{\dmoment{q}^2}\right)^{p/2}}\left(\deltanu_0-\deltanu\right)(1+ \OOO_b)\right) \\
&=
(1 + \epsilon) \fnstar\left(1+ \frac{\cfnone\deltanu_0^2}{2\fnstar} (1+\OOO_b) +
  \OOOc + c_1 \rho^2 + 
    \frac{2\nubarn^{p-1}}{ \left(\frac{n \fnstar}{\dmoment{q}^2}\right)^{p/2}}\left(\deltanu_0-\deltanu\right)(1+ \OOO_b)\right) \\
%&= \fnstar +  \frac{\cfnone\deltanu_0^2}{2} (1+\OOO_b) +
%  \OOOc + c_1\rho^2 + 
%    \cfnone(-\deltanu_0^2+\deltanu_0 \deltanu)(1+ \OOO_b) \\
&= \fnstar + \frac{\cfnone}{2} (2 \deltanu_0 \deltanu -\deltanu_0^2) (1+\OOO_b) +
   \OOOc + c_1 \rho^2 ,
   \label{eq:lpboundonb}
\end{align}
where we used the same argument as in the previous step.
As a result, we obtain the upper bound 
\begin{align}
%\label{eq:lptightcond1}
\label{eq:lptightcond2clas3}
&\frac{1}{n}b^2\norm{H}_q^2 \geq \fnclas(\nu,\eta) \\
\implies~~&
\fnstar + \frac{\cfnone}{2} (2 \deltanu_0 \deltanu -\deltanu_0^2) (1+\OOO_b) +
   \OOOc + c_1\rho^2
   \\&\geq   \fnstar + \deltanu^2 \frac{\cfnone}{2} + \eta^2 \frac{\cfntwo}{2}  + (\deltanu^2+\eta^2)  \OOO_b  +c_2\rho^2 + c_3\rho \eta\\
%   \implies ~~& 
%   \OOOc +  c_1\rho^2 \geq (\deltanu^2 + \deltanu_0^2 - 2\deltanu \deltanu_0) \frac{\cfnone}{2} (1+\OOO_b) +  \eta^2 \frac{\cfntwo}{2}  + (\deltanu^2+\eta^2)  \OOO_b  +c_2\rho^2 + c_3\rho \eta \\
%   \implies ~~& \deltanu_0^2\OOO_b +  \OOOc + c_1\rho^2 \geq 
%   \left[\frac{\cfntwo}{2}(\eta - c_2\rho))^2 +\frac{\cfnone}{2}(\deltanu - \deltanu_0)^2\right](1+  \OOO_b)\\
  \implies ~~& \deltanu_0^2\OOO_b + \OOOc + c_1\rho^2 \geq 
   \frac{\cfntwo}{2}(\eta - c_2\rho)^2 +\frac{\cfnone}{2}(\deltanu - \deltanu_0)^2.
  \end{align}

  Finally, we get the desired result from the fact that $\cfnone$ and $\cfntwo$ concentrate around $\fnnstar$ and $\feestar$ by Lemma~\ref{lm:quantconcentration}.  Hence we can conclude the proof.

\end{proof}

\subsection{Proof of Proposition \ref{prop:CGMT_application_classification}: Application of the (C)GMT for classification}
\label{apx:subsec:cgmtproof}

% \subsection{Proof of Proposition~\ref{prop:CGMT_application_classification}: CGMT}
% \label{apx:subsec:cgmtproof}
% \red{Stefan, can you take care of this one? not yet worked on this}
% Proof of this proposition is based on (Convex) Gaussian Minimax Theorem \cite{gordon1988milman,thrampoulidis2015regularized} which relates auxiliary optimization $\PhiCdn$, $\PhiCdp$ and $\PhiCdm$ with vector random variables with $\PhiCN$, $\PhiCp$ and $\PhiCm$ depending on random matrix $\matX$. 
The proof essentially follows exactly the same steps as in \cite{koehler2021uniform} except for a few simple modifications which we describe next. First we introduce a more general form of  the (C)GMT:

\begin{lemma}
\label{lemma:CGMT_variant}
Let $X'' \in \RR^{n\times {d-s}}$ be a matrix with i.i.d. $\NNN(0,1)$ entries and let $\Ggaussian \sim \NNN(0,I_n)$ and $H \sim \NNN(0,I_{d-s})$ be independent random vectors. Let $\compactw \subset \RR^{s}\times \RR^{d-s}$ and $\compactv \subset \RR^{n}$ be compact sets, and let $\psi: \compactw \times \compactv \to \RR$ be a continuous function. Then for the following two optimization problems:
    \begin{equation}
        \Phi = \min_{(\wone,\wtwo)\in \compactw}\max_{v\in \compactv} \innerprod{v}{X'' \wtwo} + \psifcn((\wone,\wtwo),v)
    \end{equation}
    \begin{equation}
        \phi = \min_{(\wone,\wtwo)\in \compactw}\max_{v \in \compactv} \norm{\wtwo}_2\innerprod{v}{G} + \norm{v}_2\innerprod{\wtwo}{H}+\psifcn((\wone,\wtwo),v)
    \end{equation}
    and any $t\in\RR$ holds that:
    \begin{equation}
        \prob(\Phi < t)\leq 2\prob(\phi \leq t) 
    \end{equation}
    
    If in addition $\psi$ is convex-concave function we also have for any $t\in\RR$:
    \begin{equation}
        \prob(\Phi > t)\leq 2\prob(\phi \geq t) 
    \end{equation}
    In both inequalities the probabilities on the LHS and RHS are over the draws of $X''$, and of $G$, $H$, respectively.
\end{lemma}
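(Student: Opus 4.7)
The plan is to apply Gordon's Gaussian comparison inequality directly to the two Gaussian processes
\[
X_{(w',w''),v} := v^T X'' w'', \qquad Y_{(w',w''),v} := \|w''\|_2 \, v^T G + \|v\|_2 \, w''^T H,
\]
indexed by $((w',w''),v)\in \compactw\times\compactv$, and then pass from the covariance comparison to the claimed tail bounds exactly as in the standard CGMT proof (see e.g.\ Thrampoulidis et al.\ 2015, or Lemma~4 of \cite{koehler2021uniform}). The non-Gaussian functional $\psi((w',w''),v)$ is independent of $X'',G,H$, so it enters both expressions only as a deterministic shift and plays no role in the Gaussian comparison. The only novelty is that the Gaussian interaction couples to the optimization variable $w=(w',w'')$ solely through its $w''$-block; the $w'$-block is an inert ``parameter'' from the Gaussian process point of view.

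The first step is to verify Gordon's covariance hypotheses. Using independence of the entries of $X''$ and $G\perp H$, a direct calculation gives
\begin{align*}
\EE\bigl[(X_{u_1,v_1}-X_{u_2,v_2})^2\bigr] &= \|v_1\|_2^2\|w_1''\|_2^2 - 2(v_1^T v_2)(w_1''{}^T w_2'') + \|v_2\|_2^2\|w_2''\|_2^2,\\
\EE\bigl[(Y_{u_1,v_1}-Y_{u_2,v_2})^2\bigr] &= \bigl\|\|w_1''\|_2 v_1-\|w_2''\|_2 v_2\bigr\|_2^2 + \bigl\|\|v_1\|_2 w_1''-\|v_2\|_2 w_2''\bigr\|_2^2.
\end{align*}
Neither expression depends on $w'$, so the $w'$-coordinate only contributes trivial equalities. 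When $u_1=u_2$ (hence $w_1''=w_2''$), the difference $\EE(Y)^2-\EE(X)^2$ reduces to $(\|v_1\|_2-\|v_2\|_2)^2\|w''\|_2^2\ge 0$, giving the ``same-row'' Gordon inequality; when $u_1\ne u_2$, subtracting and applying Cauchy--Schwarz $|w_1''{}^T w_2''|\le \|w_1''\|_2\|w_2''\|_2$ and $|v_1^T v_2|\le \|v_1\|_2\|v_2\|_2$ shows $\EE(Y)^2\ge \EE(X)^2$, which is the ``cross-row'' condition. These are precisely the standard CGMT checks with $w''$ playing the role of the full $w$, so no algebraic surprises arise.

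Given the covariance comparison, Gordon's inequality yields the one-sided bound $\prob(\Phi < t)\le 2\prob(\phi\le t)$ for arbitrary continuous $\psi$, by running the min--max Gaussian comparison on $X_{u,v}+\psi(u,v)$ versus $Y_{u,v}+\psi(u,v)$. When $\psi$ is convex-concave and $\compactw,\compactv$ are convex and compact, the reverse bound $\prob(\Phi > t)\le 2\prob(\phi\ge t)$ follows from the usual Sion-minimax / strong-duality trick of the CGMT: swap $\min\max\to\max\min$, apply the same Gordon comparison to the flipped order, and combine. The block structure $w=(w',w'')$ does not interfere, because Sion's theorem is insensitive to how the $\min$-variable is partitioned.

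The main obstacle one might worry about is whether degeneracy of the Gaussian processes along the $w'$-direction (they are constant in $w'$) spoils Gordon's hypotheses --- for instance by producing pairs $(u_1,u_2)$ with $u_1\ne u_2$ but $w_1''=w_2''$, where the ``cross-row'' inequality needs to hold. The calculation above shows this case reduces to the ``same-row'' case and yields the same non-negative slack $(\|v_1\|_2-\|v_2\|_2)^2\|w''\|_2^2$, so the hypotheses are satisfied with room to spare. In summary, the proof is the standard CGMT proof, verbatim, with $w''$ in place of $w$ in every Gaussian calculation and $w'$ absorbed into the parameter index of $\psi$.
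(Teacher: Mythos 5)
Your proposal takes a genuinely different route from the paper. The paper's own proof is a one-line citation: the first inequality is Theorem~10 of \cite{koehler2021uniform}, the second follows by combining Theorems~9 and~10 there (these already contain the GMT/CGMT with the $(w',w'')$ block structure and a $\psi$ term). You instead try to re-derive the comparison directly from Gordon's inequality. Your central observation --- that both Gaussian processes depend on $w=(w',w'')$ only through $w''$, so the $w'$-block is an inert parameter and the degeneracy it introduces is harmless (pairs with $u_1\neq u_2$ but $w_1''=w_2''$ reduce to the same-row case) --- is exactly the right point to make, and is the reason the lemma reduces to the standard CGMT.

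However, the covariance verification as written does not actually establish Gordon's hypotheses. The processes $X_{u,v}=v^T X'' w''$ and $Y_{u,v}=\|w''\|_2\,v^T G+\|v\|_2\,w''{}^T H$ have \emph{unequal} variances, $\EE X_{u,v}^2=\|v\|_2^2\|w''\|_2^2$ but $\EE Y_{u,v}^2=2\|v\|_2^2\|w''\|_2^2$, and Gordon's comparison theorem requires variance equality. The standard CGMT proof repairs this by augmenting the primary process with an independent scalar $g_0\sim\NNN(0,1)$, i.e.\ replacing $X_{u,v}$ by $\tilde X_{u,v}=v^T X''w''+g_0\|v\|_2\|w''\|_2$; this both equalizes variances and is the source of the factor $2$ (via conditioning on $\mathrm{sign}(g_0)$). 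This omission is not cosmetic: after adding $g_0$, the same-row increment difference you compute, $(\|v_1\|_2-\|v_2\|_2)^2\|w''\|_2^2$, is exactly cancelled and the correct same-row condition holds with \emph{equality}, whereas your stated direction $\EE(Y)^2\geq\EE(X)^2$ is the reverse of what Gordon requires for the same-row increment. For the cross-row case the relevant quantity becomes
\begin{equation}
\EE\bigl(Y_{u_1,v_1}-Y_{u_2,v_2}\bigr)^2-\EE\bigl(\tilde X_{u_1,v_1}-\tilde X_{u_2,v_2}\bigr)^2
= 2\bigl(v_1^Tv_2-\|v_1\|_2\|v_2\|_2\bigr)\bigl(w_1''{}^Tw_2''-\|w_1''\|_2\|w_2''\|_2\bigr)\ \geq\ 0,
\end{equation}
which is a strictly stronger inequality than the $\EE(Y)^2\geq\EE(X)^2$ you prove (the difference is exactly the $g_0$ contribution $\bigl(\|v_1\|_2\|w_1''\|_2-\|v_2\|_2\|w_2''\|_2\bigr)^2$). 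The fact that your same-row and cross-row checks go the \emph{same} way should itself have been a warning: Gordon's increment conditions go in opposite directions for same-row versus cross-row pairs. Your high-level argument (block structure is irrelevant; then run the standard CGMT proof) is correct, and deferring the details to Thrampoulidis et al.\ or Koehler et al.\ would indeed close the gap --- but the explicit Gordon check you supply should be redone with the $g_0$-augmented process. Finally, for the convex-concave part you correctly add the assumption that $\compactw,\compactv$ are convex, which the standard CGMT requires for the Sion/duality step; the lemma statement leaves this implicit.
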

% Similarly we can give a more general statement of the CGMT which includes additional variables affecting only the deterministic term in the problem. Namely, we have:
% \begin{lemma}
% \label{lemma:CGMT_variant}
% Let $\matX'' \in \RR^{n\times {d-s}}$ be a matrix with i.i.d. $\NNN(0,1)$ entries and let $\Ggaussian \sim \NNN(0,I_n)$ and $\Hgaussian \sim \NNN(0,I_{d-s})$ independent of $\matX$ and each other. Let $\compactw \subset \RR^{s}\times \RR^{d-s}$ and $\compactv \subset \RR^{n}$ be compact and convex sets, and let $\psi: \compactw \times \compactv \to \RR$ be continuous convex-concave function. Then for the following two optimization problems:
%     \begin{equation}
%         \Phi = \min_{(\wone,\wtwo)\in \compactw}\max_{v\in \compactv} \innerprod{v}{\matX'' \wtwo} + \psifcn((\wone,\wtwo),v)
%     \end{equation}
%     \begin{equation}
%         \phi = \min_{(\wone,\wtwo)\in \compactw}\max_{v \in \compactv} \norm{\wtwo}_2\innerprod{G}{v} + \norm{v}_2\innerprod{H}{\wtwo}+\psifcn((\wone,\wtwo),v)
%     \end{equation}
%     and any $t\in\RR$ holds that:
%     \begin{equation}
%         \prob(\Phi>t)\leq 2\prob(\phi\geq t) 
%     \end{equation}
%     where the probabilities on the LHS and RHS are over the draws of $\matX''$, and of $G$, $H$, respectively
% \end{lemma}
\begin{proof}
The first part of the lemma is equivalent to Theorem 10 in \cite{koehler2021uniform}. The proof of the second part follows from Theorem 9 in \cite{koehler2021uniform} and proof of Theorem 10 in \cite{koehler2021uniform}.
\end{proof}
% The proof of this theorem follows from Theorem 10 in \cite{koehler2021uniform} and proof of CGMT as discussed in Theorem 3 in \cite{thrampoulidis2015regularized}.

We first rewrite $\PhiCN$  using the Lagrange multipliers $\lmult\in\RR^n$ to be able to apply Lemma \ref{lemma:CGMT_variant}:
\begin{align}
    \PhiCN 
    % = \min_{w} \max_{\lmult\geq 0} \norm{w}_p^p + \sumin \lmult_i                 (1-y_i\innerprod{x_i}{w})
      &= \min_{w} \max_{\lmult\geq 0} \norm{w}_p^p + \innerprod{\lmult}{\onevec - \matDy\matX w}
% \end{align}
% Rewriting this optimization problem in terms of $\wone$ and $\wtwo$ we get:
% \begin{align}
%     \PhiCN = \min_{(\wone,\wtwo)} \max_{\lmult\geq 0} \norm{\wone}_p^p + \norm{\wtwo}_p^p + \transp{\lmult}\onevec - \transp{\lmult}\matDy\matX'' \wone- \transp{\lmult}\matDy\matX'' \wtwo
% \end{align}
% \begin{align}
    &= \min_{(\wone,\wtwo)} \max_{\lmult\geq 0} \norm{\wone}_p^p + \norm{\wtwo}_p^p + \innerprod{\lmult}{\onevec - \matDy\matXs \wone} - \innerprod{\matDy\lmult}{\matXsc \wtwo}
\end{align}
where $\matDy = \mathrm{diag}(y_1,y_2,\dots,y_n)$ and $\matXsc$ is the sub-matrix containing the  $d-s$ columns of $X$ from the complement of the support of $\wstar$.
We  note that $\matDy\matX \in \RR^{n\times d}$ has i.i.d. entries distributed according to the standard normal distribution and that the random matrices $-\matXsc$ and $\matXsc$ have the same distribution. Also, the function $\psi((\wone,\wtwo),\lmult) := \norm{\wone}_p^p + \norm{\wtwo}_p^p + \innerprod{\lmult}{ \onevec - \matDy \matXs \wone}$ is a continuous convex-concave function on the whole domain. 
%, we need to ensure that we optimize over compact sets, which we discuss after the following paragraph. Nonetheless, motivated by Lemma \ref{lemma:CGMT_variant} we can define: \fy{the last two sentences don't flow}
We further define
\begin{align}
    \PhiCdn &= \min_{(\wone,\wtwo)} \max_{\lmult\geq 0} \norm{\wone}_p^p + \norm{\wtwo}_p^p + \innerprod{\lmult}{\onevec - \matDy\matXs \wone} - \norm{\wtwo}_2 \innerprod{\matDy \lmult}{ \Ggaussian} - \norm{\matDy\lmult}_2 \innerprod{\wtwo}{\Hgaussian}\\
    % &= \min_{(\wone,\wtwo)} \max_{\lmult\geq 0} \norm{\wone}_p^p + \norm{\wtwo}_p^p + \transp{\lmult}(\onevec - \matDy\matXs \wone- \matDy \Ggaussian\norm{\wtwo}_2) - \norm{\lmult}_2\transp{\Hgaussian}\wtwo\\
    &= \min_{(\wone,\wtwo)} \max_{\lambda\geq 0} \norm{\wone}_p^p + \norm{\wtwo}_p^p - \lambda \left(\innerprod{\wtwo}{\Hgaussian}-\norm{\pos{\onevec - \matDy\matXs \wone- \matDy \Ggaussian\norm{\wtwo}_2}}_2 \right) \\
    % & = \min_{(\wone,\wtwo)} \max_{\lambda\geq 0} \norm{\wone}_p^p + \norm{\wtwo}_p^p - \lambda(\innerprod{\wtwo}{\Hgaussian}-\norm{\pos{\onevec - \matDy\matXs \wone- \matDy \Ggaussian\norm{\wtwo}_2}}_2\\
    & = \min_{(\wone,\wtwo)} \norm{\wone}_p^p + \norm{\wtwo}_p^p
    \subjto \innerprod{\wtwo}{\Hgaussian}\geq \norm{\pos{\onevec - \matDy\matXs \wone- \matDy \Ggaussian\norm{\wtwo}_2}}_2
\end{align}
where in the second equality we set $\lambda:= \norm{\lmult}_2$. Since $\norm{\wtwo}_p^p$ and $\norm{\wtwo}_2$ do not depend on the signs of the entries of $\wtwo$, any minimizer $w = (\wone, \wtwo)$ of $\PhiCdn$ satisfies $\innerprod{\wtwo}{H}\geq 0$. Hence squaring the last inequality and scaling with $\frac{1}{n}$, we note that the RHS is given by
    \begin{align}
    \frac{1}{n}\norm{\pos{\onevec - \matDy\matXs \wone- \matDy \Ggaussian\norm{\wtwo}_2}}_2^2 = \frac{1}{n} \sumin \possq{1-\xi_i\sgn(\langle \xsui, \wgtp \rangle) \langle \xsui, \wone \rangle -\Ggaussian_i \norm{\wtwo}_2}, 
    \end{align}
which is exactly the function $\fnclas(\wone, \norm{\wtwo}_2)$, as defined in Equation \eqref{eq:fnclas_def}. Hence, we obtain the desired expression for $\PhiCdn$ in Proposition~\ref{prop:CGMT_application_classification}.

In order to complete the proof of the proposition, we need to discuss
compactness of the feasible sets in the optimization problem 
so that we can apply Lemma~\ref{lemma:CGMT_variant}.
% Let's now define truncated optimization problem:
% \begin{align}
%         \PhiCN^r(t) = \min_{\norm{\wone}+\norm{\wtwo}\leq t} \max_{\lmult\geq 0,\norm{\lmult}\leq r} \norm{\wone}_p^p + \norm{\wtwo}_p^p + \transp{\lmult}\onevec - \transp{\lmult}\matDy\matXs \wone- \transp{\lmult}\matDy \matXsc \wtwo
% \end{align}
%We now discuss the problem of the compactness of the underlying optimization sets.
For this purpose, we define the following truncated optimization problems $\PhiCN^r(t)$ and $\PhiCdn^r(t)$ for some $r,t \geq 0$:
\begin{align}
    \PhiCN^r(t) &:= \min_{\norm{w}_p^p\leq t} \max_{\substack{\norm{v}\leq r\\ v\geq 0}} \norm{w}_p^p + \innerprod{v}{1-D_y X w}
    \\
    \PhiCdn^r(t) &:= \min_{\norm{\wone}_p^p+\norm{\wtwo}_p^p \leq t} \max_{0\leq \lambda\leq n r} \norm{\wone}_p^p + \norm{\wtwo}_p^p - \lambda \left(  \frac{1}{n}\innerprod{\wtwo}{\Hgaussian}^2 - \fnclas(\wone,\|\wtwo\|_2) \right).
\end{align}
By definition it then follow that %\fy{uh would be useful to have half sentence of explanation?}
\begin{equation}
        \PP ( \PhiCN > t | \xi, X') \leq \inf_{r\geq 0} \PP (\PhiCN^r(t) > t | \xi,X').
\end{equation}
Furthermore, by making use of the simple (linear) dependency on $\lambda$ in the optimization objective in the definition of $\PhiCdn$,  a standard limit argument as in the proof of Lemma 7 in \cite{koehler2021uniform} shows that:
\begin{align}
    %\PP ( \PhiCN > t | \xi, X') \leq \inf_{r\geq 0} \PP (\PhiCN^r(t) > t | \xi,X')\\
    \PP (\PhiCdn \geq t | \xi, X') \geq \inf_{r\geq 0} \PP (\PhiCdn^r(t) \geq t | \xi, X')
\end{align}
Finally, the proof follows when noting that we can apply Lemma~\ref{lemma:CGMT_variant} directly to $\PhiCN^r(t)$ and $\PhiCdn^r(t)$ for any $r,t\geq 0$, which gives us $\prob(\PhiCN^r>t|\xi,\Xs) \leq 2\prob(\PhiCdn^r\geq t|\xi,\Xs)$. Combining the previous two statements completes the proof for $\PhiCN$.

% \begin{align}
%     \PhiCdn^r(t) &= \min_{\norm{\wone}+\norm{\wtwo}\leq t} \max_{\lmult\geq 0,\norm{\lmult}\leq r} \norm{\wone}_p^p + \norm{\wtwo}_p^p + \transp{\lmult}\onevec - \transp{\lmult}\matDy\matXs \wone- \transp{\lmult} \matDy \Ggaussian\norm{\wtwo}_2 - \norm{\transp{\lmult}\matDy}_2\transp{\Hgaussian}\wtwo
% \end{align}

% Combining previous results we have that:
% \begin{align}
%     \prob(\PhiCN^r(t)>t)\leq 2\prob(\PhiCdn^r(t)\geq t) .....\lim_{r\to\infty}\prob(\PhiCdn^r(t)\geq t)\leq \prob(\PhiCdn\geq t)
% \end{align}

The proof for $\PhiCp$ and $\PhiCm$ uses the same steps as discussed above. We only detail the proof for $\PhiCm$ here, as the the proof for $\PhiCp$ follows from the exact same reasoning.
Let $\BBB_p(M) = \{w\in\RRR^d: \norm{w}_p^p \leq M \}$ be an $l_p$-ball of radius $M$ and note that we optimize over $(\wone,\wtwo)\in \compactw$ where $\compactw = \setSdelta \cap \BBB_p(\loneboundclas)$ is a compact set for $\delta>0$ sufficiently small. Furthermore, define the function $\psi$ by $\psi((\wone,\wtwo),v) := \frac{\innerprod{\wone}{\wgtp}}{\sqrt{\norm{\wone}_2^2+\norm{\wtwo}_2^2}} + \innerprod{v}{\onevec - \matDy \matXs \wone}$, which is a continuous function on $\compactw$. Similarly as above, we can overcome the issue with the compactness of the set $\compactv$ by using a truncation argument as proposed in Lemma 4 in \cite{koehler2021uniform}. In particular, we define
\begin{align}
    \PhiCm^r&:= \min_{\substack{w\in \setSdelta\\ \norm{w}_p^p \leq \loneboundclas}} \max_{\substack{\norm{v}\leq r\\ v\geq 0}} \frac{\langle w, \wgt \rangle}{\norm{w}_2}  + \innerprod{v}{1-D_y X w},
    \\
    \PhiCdm^r &:= \min_{\substack{(\wone,\wtwo)\in \setSdelta\\ \norm{\wone}_p^p+\norm{\wtwo}_p^p \leq \loneboundclas}} \max_{0 \leq \lambda\leq n r} \frac{\langle \wone, \wgtp \rangle }{\sqrt{\norm{\wone}_2^2+\norm{\wtwo}_2^2}} - \lambda \left( \frac{1}{n}\innerprod{\wtwo}{\Hgaussian}^2 - \fnclas(\wone,\|\wtwo\|_2) \right)
\end{align}
for which we have
\begin{align}
    \PP(\PhiCm < t| \xi,X') &\leq \inf_{r\geq 0} \PP(\PhiCm^r < t | \xi,X')\\
    \text{ and } \: \PP(\PhiCdm \leq t|\xi,X') &\geq \inf_{r\geq 0} \PP(\PhiCdm^r \leq t| \xi,X').
\end{align}
We note that the first statement again follows from the definition of $\PhiCm$,  while the second statement follows from a limit argument as in Lemma 4 in \cite{koehler2021uniform}. Finally, we conclude the proof by applying the first part of Lemma~\ref{lemma:CGMT_variant} to $\PhiCm^r$ and $\PhiCdm^r$. 
%above mentioned inequalities we conclude the proof.
% \begin{remark}
% Say that this can be generalized?
% \end{remark}

%\documentclass[../main.tex]{subfiles}
%\begin{document}

\section{Proof of Proposition~\ref{prop:uniformlowerbound}: Uniform lower bound for interpolating classifiers }
\label{apx:uniformlowerbound}

% We now derive a uniform lower bound for sparse interpolating classifiers with non zero margin, which following a similar strategy as the proof for the uniform lower bound for interpolating estimators in the regression setting of order $\Omega(\frac{n}{d})$ presented in \cite{wang2021tight} (and also earlier presented using a different proof technique in \cite{muthukumar_2020}). More detailed, we show
% \begin{proposition}
% \label{prop:uniformlowerbound}
% Assume that the observation noise satisfies assumption (A1) independent of $d,n$.  Then, for any non-zero ground truth $\wgt$ (not necessarily sparse) the risk of any interpolator $\what$ satisfying $\forall i: \yui \langle \xui, \what \rangle \geq 0$ is lower bounded by
% \begin{equation}
%     \Risk(\what) = \Theta\left(\sqrt{\frac{n}{d}}\right)
% \end{equation}
% with probability $\geq 1-\exp(-c n)$ with universal constant $c>0$; some positive universal constants independent of $d,n$ and $\what$.
% \end{proposition}
% \begin{proof}

% \begin{proposition}
% \label{prop:uniformlowerbound}
% Assume that the observation noise satisfies assumption (A1) independent of $d,n$.  Then, for any non-zero ground truth $\wgt$ (not necessarily sparse) the risk of any interpolator $\what$ satisfying $\forall i: \langle \yui \xui, \what \rangle \geq 0$ is lower bounded by
% \begin{equation}
%     \Risk(\what) = \Theta\left(\sqrt{\frac{n}{d}}\right)
% \end{equation}
% with probability $\geq 1-\exp(-c n)$ for some positive universal constants independent of $d,n$ and $\what$.
% \end{proposition}

First, note that we can assume without loss of generality that $\wgt$ is normalized, i.e. $\norm{\wgt}_2^2 = 1$. Furthermore, because the classification error is invariant under rotations of the input features, we can assume without loss of generality that $\wgt = (1,0,\cdots,0)$. This trick only works for rotational invariant distributions, including the Gaussian distribution,  where the marginal distribution for every sample rotated by a orthogonal matrix is again i.i.d.\ Gaussian. More precisely, let $\tilde{X} = OX$ with $O$ an orthogonal matrix and $X \sim \NNN(0,I_d)$, then $\tilde{X} \sim \NNN(0,I_d)$ as well. 

We now distinguish between the two cases where $\langle \hat{w}, \wgt \rangle \leq 0$ and $\langle \hat{w}, \wgt \rangle > 0$. Clearly, in the first case we have that $\RiskC(\what) \geq 2$, and hence, we only need to bound the second case. Furthermore, because the risk is invariant under rescalings of the vector $\what$, we can assume without loss of generality that $\langle \what, \wgt \rangle = 1$.

For any $B >0$ define
\begin{align} 
    \Phip^{(B)} &= \max_w \frac{\innerprod{w}{\wgt}}{\norm{w}_2}
    \subjto \begin{cases}
        \min_i y_i\innerprod{x_i}{w}\geq 0\\
        \langle \wgt, w \rangle = 1 \\
        \norm{w}_2^2 \leq B,
    \end{cases}
\end{align}
where we remark that we could also choose for this proof any other norm or compact set to bound $w$ instead of the $\ell_2$-norm. 

We can now apply again the GMT (with slight trivial modifications) from Proposition~\ref{prop:CGMT_application_classification} to show that for any $t\in \mathbb{R}$, we have
\begin{equation}
                \PP( \Phip^{(B)} >t ) \leq 2\PP( \phidp^{(B)} \geq t ) 
            \label{eq:probeq3} ,
\end{equation}
with 
\begin{equation}
              \phidp^{(B)} = \max_{(\nu,\wtwo)} \frac{\nu}{\sqrt{\nu^2 +\norm{\wtwo}_2^2}}  
            \subjto \begin{cases}
               \frac{1}{n} \innerprod{\wtwo}{\Ggaussian}^2 \geq  \tilde{\fn}(\nu,\|\wtwo\|_2) \\
                 \nu^2  +\norm{\wtwo}_2^2  \leq B\\
                        \nu = 1 
            \end{cases}
\end{equation}
and $\tilde{\fn}(\nu,\eta) = \frac{1}{n} \sumin \possq{-\xi_i\sgn(\langle \wgt, \xsui \rangle) \langle \xsui, \nu\rangle -\Ggaussian_i y}$. 
Note that since we only assume that $\min_i \yui \langle \xui, \what\rangle \geq 0$ and not $\geq 1$, the constant $1$ factor in $\fnclas$ from Proposition~\ref{prop:CGMT_application_classification} in the term $(.)_+$ disappears in in $\tilde{\fn}$ . 

We can now again lower bound $\tilde{\fn}$, where we a straight forward modification of Lemma~\ref{quadratic_bound_fn} shows that, with probability at least $1-\exp(-cn)$, uniformly for all $\nu,\eta$,  we have
\begin{equation}
    \tilde{\fn}(\nu,\eta) \geq \constnu \nu^2  + \consteta \eta^2,
\end{equation} 
with $\constnu,\consteta>0$ some universal constants. Further, using Cauchy-Schwarz, we can upper bound 
$\frac{1}{n}\langle \wtwo, \Hgaussian\rangle^2 \leq \frac{1}{n}\|\Hgaussian\|_2^2 \|\wtwo\|_2^2 \leq \frac{2d \|\wtwo\|_2^2}{n}  $
 where the last inequality holds with probability at least $1- \exp(-c d)$ using standard concentration arguments. Hence, in summary we have the upper bound
 \begin{equation}
     \frac{2d \|\wtwo\|_2^2}{n} \geq \constnu \nu^2 \implies \frac{\|\wtwo\|_2^2}{\nu^2} \geq \Omega\left(\frac{n}{d}\right)
 \end{equation}
 
 Therefore, we have that for all $B>0$, $\phidp^{(B)} \leq 1 - \Omega\left(\frac{n}{d}\right)$ with universal constants independent of $B$ and thus the proof is complete.

\section{Technical lemmas: Bounds for $\fnclas$}
\label{apx:technicallemmas}

% \onecolumn

Throughout this section, for simplicity of the notation, we abbreviate $\{\xui'\}_{i=1}^n$ as $\gausswone$ and $G$ from Proposition~\ref{prop:CGMT_application_classification} as $\gausswtwo$. Thus, we have $\fnclas (\nu, \eta) := \frac{1}{n} \sumin \possq{ 1 - \xi_i \abs{\gausswone_i} \nu - \eta \gausswtwo_i }$ and  
\begin{equation}
\label{eq:defoff}
    \fclas(\nu,\eta) :=  \EE \fnclas(\nu,\eta) = \EE_{Z_1,Z_2 \sim \NNN(0,1)} \EE_{\xi \sim \probsigma(\cdot; Z_1)} \left( 1- \xi \nu \vert Z_1 \vert - \eta Z_2\right)_+^2.  
\end{equation}

\subsection{Lower bounding \texorpdfstring{$\fnclas$}{fn} by a quadratic form}
\label{apx:subsec:quadratic_bound_fn}
  We show the following lemma.
\begin{lemma} \label{quadratic_bound_fn}
    There exist universal positive constants $\constnu, \consteta$ only depending on $\probsigma$ and $c$ such that for any $\nu,\eta$ we have that: 
        \begin{align}
        \label{eq:quadratic_bound_fn}
            \fnclas(\nu,\eta) \geq \constnu \nu^2  + \consteta \eta^2
        \end{align}
    with probability at least $1 - \exp \left( -cn \right)$ over the draws of $\gausswone,\gausswtwo,\xi$.
\end{lemma}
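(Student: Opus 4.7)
The proof establishes the quadratic lower bound in two stages: first in expectation, then uniformly with high probability via concentration.

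\textbf{Pointwise in expectation.} Using the elementary inequality $(1-x)_+^2 \geq x^2\,\idvec[x\leq 0]$ with $x = \xi|Z_1|\nu + Z_2\eta$, one gets
\begin{equation*}
    \fclas(\nu,\eta) \;\geq\; \EE\bigl[(\xi|Z_1|\nu + Z_2\eta)^2\,\idvec[\xi|Z_1|\nu + Z_2\eta \leq 0]\bigr].
\end{equation*}
Conditioning on $(\xi,Z_1)$ and exploiting the Gaussian symmetry of $Z_2$, the right-hand side becomes a positive semi-definite quadratic form in $(\nu,\eta)$. Assumption~A prevents $\xi$ from being a deterministic function of $Z_1$ (and guarantees the finite minimizer $\bar\nu > 0$), which ensures this form is strictly positive definite with coefficients depending only on $\probsigma$. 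This yields $\fclas(\nu,\eta)\geq 2\constnu\,\nu^2 + 2\consteta\,\eta^2$ for some $\constnu,\consteta>0$.

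\textbf{Uniform high-probability bound.} I would split $\mathbb{R}^2$ into three zones by radius $r:=\sqrt{\nu^2+\eta^2}$. For small $r\leq r_0$, the near-origin lower bound $\fclas \geq 1 - O(r^2)$ together with a covering-net plus Bernstein-type concentration (summands are bounded when $r$ is small) gives $\fnclas \geq 1/2 \geq \constnu\nu^2+\consteta\eta^2$ with probability $\geq 1-e^{-cn}$. On the compact annulus $r_0 \leq r \leq R$, $\fclas$ is bounded below by a positive constant that dominates $\constnu\nu^2+\consteta\eta^2$; a covering and Bernstein argument on sub-exponential summands transfers this to $\fnclas$. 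For large $r > R$, I parametrize $(\nu,\eta) = s(u_1,u_2)$ with $(u_1,u_2)$ on the unit circle and $s>R$, and again apply $(1-x)_+^2 \geq x^2\,\idvec[x \leq 0]$ to obtain
\begin{equation*}
    \fnclas(\nu,\eta) \;\geq\; s^2\cdot\frac{1}{n}\sumin(\xi_i|Z_{1,i}|u_1+Z_{2,i}u_2)^2\,\idvec[\xi_i|Z_{1,i}|u_1+Z_{2,i}u_2\leq 0].
\end{equation*}
A Paley--Zygmund-type small-ball argument, uniform over the compact unit circle, lower-bounds the RHS by a positive constant times $s^2=\nu^2+\eta^2$ with probability $\geq 1-e^{-cn}$.

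\textbf{Main obstacle.} The trickiest step is establishing the small-ball lower bound uniformly over the unit circle. This is where Assumption~A enters essentially: it ensures that in every direction $(u_1,u_2)$ the random variable $\xi|Z_1|u_1 + Z_2u_2$ has a nontrivial negative tail, so the indicator $\idvec[\cdot\leq 0]$ contributes a constant lower bound to the uniform small-ball probability. Once this property is in hand, stitching the three radius zones together through standard covering and Bernstein concentration completes the proof.
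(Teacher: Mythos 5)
Your proposal is viable but takes a genuinely different and more involved route than the paper. You split $\mathbb{R}^2$ into three radius regimes, transfer the expectation bound on $\fclas$ to $\fnclas$ via covering nets and Bernstein concentration near the origin and on a compact annulus, and invoke a Paley--Zygmund small-ball argument uniform over the unit circle at large radius. The paper's proof is considerably more elementary and avoids all of these tools. For $\nu,\eta\geq 0$ (the other sign patterns are symmetric) it fixes a single favorable event $A_i := \{\xi_i = -1,\ \gausswone_i \in [z_1,z_2],\ \gausswtwo_i < -c_2\}$ whose probability is a positive constant $c$; here Assumption~A (which you also correctly identify as essential) together with piecewise continuity of $\probsigma$ supply an interval $[z_1,z_2]$ bounded away from zero on which $\probsigma(\xi=-1;z)$ is bounded below. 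The key point is that $A_i$ does \emph{not} depend on $(\nu,\eta)$: on $A_i$, the $i$-th summand is $(1 + |\gausswone_i|\nu - \gausswtwo_i\eta)^2 \geq (1+\tilde z\nu + c_2\eta)^2 \geq \tilde z^2\nu^2 + c_2^2\eta^2$ simultaneously for \emph{all} $\nu,\eta\geq 0$, so uniformity over $(\nu,\eta)$ is automatic, and the lemma collapses to a single Chernoff bound showing that the fraction $\#\{i:A_i\}/n$ is at least $c/2$ with probability $1-e^{-cn}$. What this buys relative to your approach is that the additive $+1$ inside the square already dominates the quadratic terms, eliminating the need for any radius case analysis, covering-number bookkeeping, or uniform small-ball estimate. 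One further caveat on your first stage: after applying $(1-x)_+^2\geq x^2\,\idvec[x\leq 0]$, the resulting expression $\EE\bigl[(\xi|Z_1|\nu + Z_2\eta)^2\,\idvec[\xi|Z_1|\nu + Z_2\eta\leq 0]\bigr]$ is not literally a quadratic form in $(\nu,\eta)$ because the indicator still carries a $(\nu,\eta)$-dependence; extracting a uniform positive-definite lower bound from it requires an additional small-ball step that the paper's event-based construction sidesteps entirely.
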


\begin{proof}
 We can assume that $\nu ,\eta \geq 0$ as the other cases follow from exactly the same argument. 
 First we show an auxiliary statement which we use later in the proof. Namely, we claim that there exists some positive constant $c_1$ such that for all $z \in [z_1,z_2]$,  $\probsigma \left( \xi = -1; z \right) > c_1$ for some $z_1, z_2 \in \RR$ and $z_1\neq z_2$. Let's prove this statement by contradiction and assume that there exists no $z \in [z_1,z_2]$ which satisfy the previous equation. Then, for almost any $z \sim N(0,1)$, we have $\probsigma (\xi; z) = +1$ and hence the minimum of the function $\fclas(\nu, \eta) = \EE \fnclas (\nu, \eta)$ is obtained for $\nu = \infty$. However, this is in contradiction with the assumption on $\probsigma$ in Section \ref{subsec:clasetting}. Hence there exists some $z$ for which $\prob \left( \xi = -1;  z \right) > c_1$. By the assumption on $\probsigma$ in Section \ref{subsec:clasetting} we assume piece-wise continuity of $z \to \probsigma(\xi=-1;z)$ and hence there exists some interval $[z-\delta, z+\delta] =: [z_1,z_2]$ in which the given probability is bounded away from zero. 

We can assume without loss of generality that this interval does not contain zero, since in that case we can always define a new interval of the form $[\epsilon, z_2]$ or $[z_1, -\epsilon]$ for $\epsilon>0$ small enough, which does not contain zero. Let's define $\tilde{z} = \max \{ \abs{z_1}, \abs{z_2} \}$.

 We can now bound $\fnclas(\nu,\eta)$ as follows:
\begin{align}
    \fnclas(\nu,\eta) 
    & =
    \frac{1}{n} \sumin \possq{ 1 - \xi_i \abs{\gausswone_i} \nu - \eta \gausswtwo_i }
    \geq
    \frac{1}{n}\sumin \idvec \left[ \xi_i = -1, \gausswone \in [z_1,z_2], \gausswtwo < - c_2 \right] \possq{ 1 - \xi_i \abs{\gausswone_i} \nu - \eta \gausswtwo_i }
    \\
    & \geq \left( 1 + \tilde{z} \abs{\nu} + c_2 \eta \right)^2 \frac{1}{n}\sumin \idvec \left[ \xi_i = -1, \gausswone_i \in [z_1,z_2], \gausswtwo_i < -c_2 \right]
\end{align}
for arbitrary positive constant $c_2$.
From section \ref{apx:subsec:gmtclas} we have that $\gausswtwo$ is independent of $\xi$ and $\gausswone$ and hence:
\begin{align}
    \prob \left( \xi = \sign (\nu), \gausswone \in [z_1,z_2], \gausswtwo < - c_2 \right) 
    &=
    \prob \left( \xi = \sign (\nu) \vert \gausswone \in [z_1,z_2]\right) \prob \left( \gausswone \in [z_1,z_2] \right) \prob \left( \gausswtwo < - c_2 \right)
    \\
    &\geq
    c_1 \left( \Phic(z_1) - \Phic(z_2) \right) \Phic(c_2) 
    \geq
    c
\end{align}
for some positive constant $c$. Now using concentration of i.i.d. Bernoulli random variables we obtain:
\begin{align}
    \fnclas (\nu,\eta) \geq \left( 1 + \tilde{z}\abs{\nu} + c_2 \eta \right)^2 \frac{c}{2} 
    \geq
    \frac{\tilde{z}^2 c_3}{2} \nu^2 + \frac{c_2^2 c_3}{2} \eta^2
    % \prob \left( \xi = \sign (\nu), \gausswone \in [z_1,z_2], \gausswtwo > \kappa_2 \right) 
\end{align}
with universal constant $c_3>0$ and with probability at least $ 1 - \exp \left( - n D (c || \frac{c}{2} ) \right) \geq 1 - \exp \left( - n c\right)$.
\end{proof}

\subsection{Lower bounding \texorpdfstring{$\fnclas$}{fn} by a quadratic form with constant}
\label{apx:subsec:quadratic_plus_fstar_bound}
We use the notation as described in Appendix~\ref{apx:subsec:localisationproofcla} and the beginning of Appendix~\ref{apx:technicallemmas}.
 We show the following lemma. 
\begin{lemma} \label{lemma_quadratic_plus_fstar_bound}
    Let $\boundnuone, \boundetaone>0$ be two positive constants. Then, there exist positive constants $\tconstnu,\tconsteta>0$ and $c_1, c_2,c_3 >0$ only depending on $\probsigma$, such that for any $\epsilon \geq  c_1 \sqrt{\frac{\log(n)}{n}}$ and any $\nu \leq \boundnuone, \eta \leq \boundetaone$ we have that: 
        \begin{align}
        \label{eq:quadratic_plus_fstar_bound}
            \fnclas(\nu,\eta) \geq \fstar + \tconstnu \left( \deltanu \right)^2  + \tconsteta \eta^2 - \epsilon
        \end{align}
    with probability at least $1 - c_2 \exp \left( - \frac{c_3 n \epsilon^2}{\log(n)}  \right)$ over the draws of $\gausswone,\gausswtwo,\xi$.
\end{lemma}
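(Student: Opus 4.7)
The plan is to combine a deterministic quadratic lower bound on the population version $\fclas(\nu,\eta) = \EE \fnclas(\nu,\eta)$ with a uniform concentration bound for $\fnclas - \fclas$ over the compact box $K := [-B_1, B_1] \times [0, B_2]$, and then take the difference.

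\textbf{Step 1 (deterministic quadratic lower bound on $\fclas$).} First I would verify that $(\nubar, 0)$ is the \emph{global} minimizer of $\fclas$ and not just the minimizer along $\eta = 0$. Convexity of $\fclas$ is inherited from the convexity of each hinge-square $(1 - \xi\nu|Z_1| - \eta Z_2)_+^2$ in $(\nu,\eta)$. The partial $\partial_\nu\fclas(\nubar, 0) = 0$ holds by definition of $\nubar$, and $\partial_\eta \fclas(\nubar, 0) = -2\,\EE[Z_2(1-\xi\nubar|Z_1|)_+]$ vanishes since $Z_2$ is independent of $(Z_1,\xi)$ and mean-zero. Combined with convexity, $(\nubar, 0)$ is the global minimum with value $\fstar$. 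Since $\fnnstar, \feestar > 0$, Taylor's theorem gives a local quadratic lower bound $\fclas(\nu,\eta) \geq \fstar + \tfrac{\fnnstar}{4}(\nu - \nubar)^2 + \tfrac{\feestar}{4}\eta^2$ on some neighborhood $U$ of $(\nubar, 0)$. Outside $U$ but inside $K$, continuity and compactness give a uniform gap $\inf_{K\setminus U}[\fclas - \fstar] \geq c > 0$, while $\tconstnu(\nu-\nubar)^2 + \tconsteta \eta^2$ is bounded on $K$. Choosing $\tconstnu,\tconsteta$ small enough then extends the bound globally:
\begin{equation}
  \fclas(\nu,\eta) \;\geq\; \fstar + \tconstnu(\nu - \nubar)^2 + \tconsteta\,\eta^2 \quad \text{on } K.
\end{equation}

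\textbf{Step 2 (uniform concentration).} Condition on the high-probability event $\mathcal{E}_T = \{\max_i |\gausswone_i|, |\gausswtwo_i| \leq T\}$ with $T \asymp \sqrt{\log n}$, which holds with probability $\geq 1 - 4n e^{-T^2/2}$. On $\mathcal{E}_T$, each summand $(1-\xi_i|\gausswone_i|\nu - \eta\gausswtwo_i)_+^2$ is bounded by $(1 + B_1 T + B_2 T)^2 = O(T^2)$, and $(\nu,\eta)\mapsto \fnclas(\nu,\eta)$ is Lipschitz with constant $L = O(T^2)$ on $K$. Build a $\delta$-net $\mathcal{N}_\delta$ of $K$ with $\delta = c\epsilon/L$ and cardinality $|\mathcal{N}_\delta| = O((B_1 B_2/\delta)^2)$. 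Hoeffding's inequality applied at each net point, together with a union bound, gives
\begin{equation}
  \sup_{(\nu,\eta)\in \mathcal{N}_\delta} |\fnclas(\nu,\eta) - \fclas(\nu,\eta)| \leq \epsilon/2
\end{equation}
with probability at least $1 - 2|\mathcal{N}_\delta|\exp(-c' n\epsilon^2/T^4)$. The Lipschitz bound then promotes this to a uniform estimate over all of $K$, yielding
\begin{equation}
  \sup_{(\nu,\eta)\in K} |\fnclas(\nu,\eta) - \fclas(\nu,\eta)| \leq \epsilon
\end{equation}
with probability at least $1 - c_2\exp(-c_3 n\epsilon^2/\log n)$, provided $\epsilon \gtrsim \sqrt{\log(n)/n}$ so the $\log|\mathcal{N}_\delta|$ term is absorbed.

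\textbf{Conclusion and main obstacle.} Combining Steps 1 and 2 gives the claimed inequality. The main technical obstacle is Step 1: the purely local Taylor argument only yields the quadratic lower bound on a neighborhood $U$ of $(\nubar, 0)$, and one must carefully patch this together with the behavior of $\fclas$ on $K \setminus U$ using convexity and compactness to pick uniform constants $\tconstnu,\tconsteta$ that work over the whole box. The concentration step is standard once the Gaussian truncation is in place; the $\log(n)$ factor in the exponent reflects the $T^2 \asymp \log n$ Lipschitz scaling.
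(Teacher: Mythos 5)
Your overall strategy matches the paper's: reduce to the population functional $\fclas$, establish a deterministic quadratic lower bound at the minimizer $(\nubar,0)$, and combine with a uniform concentration estimate for $\fnclas - \fclas$ over the compact box. Your Step~1 is a valid, slightly more elementary variant of the paper's argument (which invokes $\nabla^2 \fclas \succ 0$ on compact sets directly rather than patching a local Taylor bound with a compactness gap); both give the same conclusion.

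Step~2, however, has two genuine gaps. First, Hoeffding on summands truncated to magnitude $O(T^2)$ with $T \asymp \sqrt{\log n}$ gives an exponent of $n\epsilon^2/T^4 \asymp n\epsilon^2/\log^2 n$, not the claimed $n\epsilon^2/\log n$ --- your final display silently drops a factor of $\log n$. This is not cosmetic: Hoeffding is variance-blind and uses the square of the range. A variance-aware inequality (Bernstein) is needed, since each summand has variance $O(1)$ even though its truncated range is $O(\log n)$. Second, and more fundamentally, the truncation event $\mathcal{E}_T$ fails with probability $\Theta(n\, e^{-T^2/2}) = \Theta(n^{1-C^2/2})$, which is \emph{polynomial} in $n$ for any fixed constant $C$. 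The lemma is applied in the paper (e.g.\ in Step~2 of Proposition~\ref{prop:unifconvcla} with $\epsilon = \rho = \log^c d / \sqrt{n}$, $c > 1$) in a regime where $\exp(-c_3 n\epsilon^2/\log n)$ is \emph{super-polynomially} small in $n$; there the polynomial truncation loss dominates and cannot be absorbed into the stated bound. The paper avoids both issues by using Adamczak's tail inequality (Theorem~\ref{thm:unifconvmain}, specialized in Proposition~\ref{prop:unifConv}), which handles the sub-exponential envelope of the squared hinge losses natively --- decoupling the Gaussian-tail term (controlled by the variance $\sigma^2_{\GB} = O(1/n)$) from the sub-exponential term (controlled by the Orlicz norm $\psi_{\GB} = O(\log n / n)$) --- without any truncation of the Gaussian inputs.
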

%\fy{what is $\zeta$?, couldn't find the def in Appendix F so have to redefine at least?}
\begin{proof}
First note that from the uniform convergence result in Proposition \ref{prop:unifConv} we have that $\fclas(\nu,\eta) \geq \fnclas(\nu,\eta) - \epsilon$, with $f$ from Equation~\eqref{eq:defoff}, with probability at least $1 - c_2 \exp \left( - \frac{c_3 n\epsilon^2}{\log(n)}  \right)$. Thus, it is sufficient to study $\fclas$.  Clearly, by the convexity of $\fclas$ we have that $\fclas \geq \fstar$ with $\fstar = \fclas(\nubar, 0)$ where we use the simple fact that $(\nubar, 0)$ is the global minimizer of $\fclas$, which follows from the  assumption on $\probsigma$ in Section \ref{subsec:clasetting}. 
Furthermore, it is not difficult to check that for for any $\nu,\eta$, $\nabla^2 \fclas(\nu,\eta) \succ 0$ and therefore, $\fclas$ is strictly convex on every compact set. Hence, the proof follows. 
% Hence, $f$ attains its minimum $\fstar$ on the compact set given by bounded sets of $\nu \leq \boundnuone$ and $\eta \leq \boundetaone$. As stated in assumption (A1), $\fstar$ is attained at some point $(\nubar,0)$ with $\nubar>0$. Hence we have $f(\nu,\eta)-\fstar \geq 0$.Hence Lemma \ref{quadratic_bound_fn} implies that $f(\nu,\eta) + \epsilon \geq \constnu \nu^2 + \consteta \eta^2$. Now define:
% \begin{align}
%     \tconstnu = \min_{\nu,\eta} \frac{f(\nu,\eta)-\fstar + \epsilon}{2\deltanu^2}
%     \quad
%     \andtxt
%     \quad
%     \tconsteta = \min_{\nu,\eta} \frac{f(\nu,\eta)-\fstar + \epsilon}{2 \eta^2}
% \end{align}
% which are positive and finite constants since $f(\nu, \eta)+\epsilon>\fstar$ and $f$ grows at least quadratically in $\nu$ and $\eta$, respectively. It is simple to check that these constants indeed satisfy inequality above.
\end{proof}

\subsection{Tighter bound for \texorpdfstring{$\fnclas$}{fn} around \texorpdfstring{$(\nubarn, 0)$}{(nu,0)}}
\label{apx:subsec:prooffnlp}

The bound in Lemma~\ref{lemma_quadratic_plus_fstar_bound} contains a term $\epsilon$
which is at least of order $\frac{1}{\sqrt{n}}$. This term arises from the uniform convergence of $\fnclas$ to $\fclas$ in Proposition~\ref{prop:unifConv} and cannot be avoided. Instead, we show that $\fnclas$ uniformly converges much faster to a different expression. 
%where we do not rely on uniform convergence to $f$. 
%to   without relying 

\begin{lemma}
\label{lm:lpfn}
 With probability at least $1-c_1d^{-c_2}$ over the draws of $\gausswone,\gausswtwo,\xi$, we have that 
\begin{equation}
\label{eq:lpfn}
    \sup_{ \eta\geq0, \deltanu} \left\| \fnclas(\nu, \eta) -  \fnstar - \deltanu^2 \frac{\cfnone}{2} - \eta^2 \frac{\cfntwo}{2} \right\| \lesssim (\deltanu^2 + \eta^2) \rho +   \rho \eta +(\rho + \sqrt{\log(d)}(\vert\deltanu\vert + \eta))^3,
\end{equation}
with  $\rho = \frac{\sqrt{\log(d)}}{\sqrt{n}}$.
\end{lemma}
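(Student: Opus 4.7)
Write $s_i^{\star} = 1 - \xi_i |\gausswone_i| \nubarn$ and $\delta_i = -\xi_i |\gausswone_i| \deltanu - \eta\gausswtwo_i$, so that $\fnclas(\nu,\eta) = \frac{1}{n}\sum_i g(s_i^{\star}+\delta_i)$ for the piecewise-quadratic $g(t) = t_+^2$. The function $g$ is $C^1$ with $g'(t) = 2 t_+$, and admits the \emph{exact} second-order expansion
\begin{equation*}
    g(s^{\star}+\delta) = g(s^{\star}) + 2 (s^{\star})_+ \delta + \idvec[s^{\star}>0]\,\delta^2 + R(s^{\star},\delta),
\end{equation*}
where a case analysis on the signs of $s^{\star}$ and $s^{\star}+\delta$ yields $|R(s^{\star},\delta)| \leq \delta^2 \idvec[|s^{\star}|\leq|\delta|]$. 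Averaging over $i$ decomposes $\fnclas(\nu,\eta)-\fnstar$ into a first-order term, a pure quadratic term, and a remainder $\frac{1}{n}\sum_i R_i$, which I will control separately.

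\textbf{First- and second-order terms.} The $\deltanu$-component of the first-order term equals $-\deltanu\cdot\partial_\nu \fnclas(\nubarn,0)$, which vanishes by first-order optimality of $\nubarn$. The $\eta$-component is $-\frac{2\eta}{n}\sum_i (s_i^{\star})_+\gausswtwo_i$; conditionally on $(\gausswone,\xi)$ it is a centered Gaussian with variance $\frac{4\eta^2}{n^2}\sum_i (s_i^{\star})_+^2 \leq \frac{4\eta^2 \fnstar}{n}$, so standard Gaussian tails give $|\cdot| \lesssim \rho\eta$ with probability at least $1-d^{-c}$. Expanding $\delta_i^2$ produces (i) the exact term $\deltanu^2 \cfnone/2$; (ii) the term $\eta^2\cdot\frac{1}{n}\sum_i \idvec[s_i^{\star}>0]\gausswtwo_i^2$, which by independence of $\gausswtwo$ from $(\xi,\gausswone)$ and Bernstein concentrates to $\eta^2 \cfntwo/2$ with error $\lesssim \rho\eta^2$; (iii) the cross term $\frac{2\deltanu\eta}{n}\sum_i \idvec[s_i^{\star}>0]\xi_i|\gausswone_i|\gausswtwo_i$, again a conditionally Gaussian sum of variance $O(1/n)$, which is $\lesssim \rho|\deltanu\eta|\lesssim \rho(\deltanu^2+\eta^2)$ with high probability. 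Crucially, all three concentration events involve sums that are \emph{independent} of $(\deltanu,\eta)$ (only a scalar prefactor depends on them), so uniformity over $(\deltanu,\eta)$ is automatic.

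\textbf{Remainder term (main obstacle).} Since $\max_i(|\gausswone_i|\vee|\gausswtwo_i|)\lesssim \sqrt{\log n}$ with high probability, we have $|\delta_i|\leq \Delta:=C\sqrt{\log d}\,(|\deltanu|+\eta)$, so
\begin{equation*}
    \frac{1}{n}\sum_{i=1}^n |R_i| \leq \Delta^2\cdot \frac{\#\{i:\,|s_i^{\star}|\leq\Delta\}}{n}.
\end{equation*}
The distribution of $s_i^{\star}$ has a bounded Lebesgue density (inherited from the folded-Gaussian density of $|\gausswone_i|$, after conditioning on $\xi_i$), so $\PP(|s_i^{\star}|\leq\Delta)\lesssim \Delta$. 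The DKW inequality (or equivalently a VC bound over the class of intervals) yields, uniformly in $\Delta\geq 0$,
\begin{equation*}
    \frac{\#\{i:\,|s_i^{\star}|\leq\Delta\}}{n} \lesssim \Delta + \rho
\end{equation*}
with probability at least $1-c_1 d^{-c_2}$. Therefore $\frac{1}{n}\sum_i |R_i| \lesssim \Delta^3+\rho\Delta^2 \lesssim (\rho+\sqrt{\log d}(|\deltanu|+\eta))^3$, matching the cubic term on the right-hand side. Combining this with the estimates from the previous paragraph gives the claimed inequality. The main difficulty is Step 3: obtaining the uniform empirical-measure bound on $\{i:|s_i^{\star}|\leq\Delta\}$ simultaneously over all $\Delta$, which I resolve by the density bound together with DKW; every other step is a clean two-line calculation using the exact Taylor identity for $g$.
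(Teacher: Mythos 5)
Your proof is correct and follows essentially the same route as the paper's: both expand $\fnclas(\nu,\eta)$ around $(\nubarn,0)$, use exact stationarity of $\nubarn$ to kill the first-order $\deltanu$ term, observe that $\deltanu^2\cfnone/2$ appears exactly, control the $\eta$- and cross-terms via conditional Gaussianity of $\gausswtwo$, and bound the indicator-mismatch contribution through DKW applied to the empirical law of $\xi_i|\gausswone_i|$ together with $\gamma_m\lesssim\sqrt{\log d}$. Your packaging of the mismatch into a single remainder $R(s^\star,\delta)$ via the exact Taylor identity for $t\mapsto t_+^2$ is a slightly cleaner bookkeeping of the same step the paper performs by swapping the indicator first and bounding the swap separately.
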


\begin{proof}
We write
\begin{align}
    \fnclas(\nu,\eta) =& \frac{1}{n} \sum_{i=1}^n \idvec[ 1- \xi_i\nu \abs{\gausswone_i} - \eta \gausswtwo_i  ]( \underbrace{ ( 1- \xi_i\nubarn \abs{\gausswone_i} )^2}_{=:T_{1,i}} +  \underbrace{\xi_i^2 \deltanu^2 \abs{ \gausswone_i }^2}_{=:T_{2,i}}+  \underbrace{\eta^2 (\gausswtwo_i)^2 }_{=:T_{3,i}} 
    \\
    &  \underbrace{-2 \xi_i \deltanu \abs{ \gausswone_i } (1- \xi_i\nubarn \abs{ \gausswone_i } )}_{T_{4,i}} \underbrace{ -  2 \eta \gausswtwo_i (1- \xi_i\nubarn \abs{ \gausswone_i })}_{=:T_{5,i}}  + \underbrace{ 2\xi_i\deltanu \abs{ \gausswone_i } \eta \gausswtwo_i }_{T_{6,i}} \vphantom{\sum} )
    \\
    =& \frac{1}{n}\sum_{i=1}^n \idvec[ 1- \xi_i\nu \abs{ \gausswone_i } - \eta \gausswtwo_i ] \left( T_{1,i} + T_{2,i} + T_{3,i} + T_{4,i} + T_{5,i} + T_{6,i} \right)
\end{align}

We now separately apply concentration inequalities to the different terms. We first need to control the difference of the term $\idvec[ 1- \xi_i\nu \abs{ \gausswone_i } - \eta \gausswtwo_i]$ to the term $\idvec[ 1- \xi_i\nubarn \abs{ \gausswone_i }]$. Denote by $F_n(x) = \frac{1}{n}\sum_{i=1}^n \idvec[\xi_i\abs{ \gausswone_i } \leq x ]$ and by $F(x) = \prob \left( \xi\abs{ \gausswone } \leq x \right) $. By  the Dvoretzky-Kiefer-Wolfowitz inequality (see \cite{massart}), we have that, with probability at least $ 1 - \frac{2}{d} $,
\begin{equation}
    \sup_{ x \in \RR } |F_n(x) - F(x) | \leq  \sqrt{ \frac{\log(d)}{2n} } =: \rho 
\end{equation}
for any $ n \geq 3 $. Furthermore, note that with probability $\geq 1-\frac{1}{d}$, we have that $\max_i \{ \abs{\gausswtwo_i}, \abs{\gausswone_i} \} = \gamma_m \lesssim \sqrt{\log(d)}$ (see e.g., \cite{boucheron_2012}).  Therefore, we can upper bound 
\begin{align}
    \frac{1}{n} \sum_{i=1}^n &\left( \idvec[ 1- \xi_i\nu \abs{\gausswone_i} - \eta \gausswtwo_i] - \idvec[ 1- \xi_i\nubarn \abs{\gausswone_i} ] \right)\\
    &\leq F_n \left( \frac{1}{\nubarn} ( 1 + \gamma_m (\abs{\deltanu} + \eta) ) \right) - F_n \left( \frac{1}{\nubarn} ( 1 - \gamma_m (\abs{\deltanu} + \eta) ) \right) 
    \\
    & \leq 2\rho + F \left( \frac{1}{\nubarn} (1+ \gamma_m (\abs{\deltanu} + \eta)) \right) - F \left( \frac{1}{\nubarn} (1- \gamma_m (\abs{\deltanu} + \eta)) \right)  \lesssim \rho + \gamma_m (|\deltanu| + \eta)
\end{align}
where we condition on the event where $\nubarn$ concentrates around the constant $\nubar$ in Lemma~\ref{lm:quantconcentration}.
Therefore, we get
\begin{align}
    &\left|\fnclas(\nu,\eta) - \frac{1}{n}\sum_{i=1}^n \idvec[ 1- \xi_i\nubarn \abs{\gausswone_i}] \left( T_{1,i} + T_{2,i} + T_{3,i} + T_{4,i} + T_{5,i} + T_{6,i} \right) \right| 
    \\
    =& \left|\frac{1}{n}\sum_{i=1}^n \left(\idvec[ 1- \xi_i\nu |\gausswone_i| - \eta \gausswtwo_i] - \idvec[ 1- \xi_i\nubarn |\gausswone_i|]  \right) \left( T_{1,i} + T_{2,i} + T_{3,i} + T_{4,i} + T_{5,i} + T_{6,i} \right) \right|
    \\ 
    \lesssim & (\rho + \gamma_m(|\deltanu|  + \eta))^3 
\end{align}

Thus, we only need to study $\frac{1}{n} \sum_{i=1}^n \idvec[ 1- \xi_i\nubarn \abs{\gausswone_i}] \left( T_{1,i} + T_{2,i} + T_{3,i} + T_{4,i} + T_{5,i} + T_{6,i} \right)$.
We now separately bound the different terms involving $T_{j,i}$ for every $j$ which all hold with probability at least $1- c_1 d^{-c_2}$ with universal constants $c_1,c_2>0$.
\begin{enumerate}
    \item[$T_{1,i}$] First, note that by definition, $ \frac{1}{n}\sum_{i=1}^n \idvec[ 1- \xi_i\nubarn \abs{\gausswone_i}] T_{1,i} = \fnstar$.
    \item[$T_{2,i}$] Recall that $ \frac{\cfnone}{2} = \frac{1}{n}\sum_{i=1}^n \idvec[ 1- \xi_i\nubarn \abs{\gausswone_i}] (\gausswone_i)^2$.
        % \begin{align}
        %     \left| \frac{1}{n} \sum_{i=1}^n \idvec[ 1- \xi_i\nubarn \abs{\gausswone_i}] \xi_i^2 \deltanu^2 \abs{\gausswone_i}^2 - \deltanu^2 \cfnone  \right| 
        %     \leq 
        %     \deltanu^2 \left| \frac{1}{n}\sum_{i=1}^{k(n)} \gausswone_i^2 - 1  \right| \lesssim
        % \end{align}
    \item[$T_{3,i}$] Recall that  $ \frac{\cfntwo}{2} = \frac{1}{n}\sum_{i=1}^n \idvec[ 1- \xi_i\nubarn \abs{\gausswone_i}] $.
        \begin{equation}
            \left| \frac{1}{n} \sum_{i=1}^n \idvec[ 1- \xi_i\nubarn \abs{\gausswone_i}] \eta^2 (\gausswtwo_i)^2 - \eta^2 \frac{\cfntwo}{2}  \right| 
            \leq 
            \eta^2 \left| \frac{1}{n}\sum_{i=1}^{k} ((\gausswtwo_i)^2 - 1) \right| 
           \lesssim \eta^2 \rho 
        \end{equation}
    with probability at least $ 1 - \frac{1}{d} $ over the draws of $\gausswtwo_i$. We used that $ \sum_{i=1}^n \idvec[ 1- \xi_i\nubarn \abs{\gausswone_i}] = k \leq n$. We apply the same reasoning in the following concentration inequalities.
    \item[$T_{4,i}$] Note that by definition of $\nubarn$ we have that
        \begin{equation}
            \frac{d}{d\nu}|_{\nubarn} f_n(\nu,0)  = \frac{1}{n} \sum_{i=1}^n \idvec[ 1- \xi_i\nubarn \abs{\gausswone_i}] 2(1- \xi_i\nubarn \abs{\gausswone_i})\xi_i \abs{\gausswone_i} = 0
        \end{equation}
    \item[$T_{5,i}$] We can bound the term involving $T_{5,i}$ by
        \begin{align}
            \left| \frac{1}{n} \sum_{i=1}^n \idvec[ 1- \xi_i\nubarn \abs{\gausswone_i}] 2\eta \gausswtwo_i \left( 1 - \xi_i\nubarn \abs{\gausswone_i} \right)  \right| 
            \leq
            2\eta \left| \frac{1}{n} \sum_{i=1}^n ( 1 + \nubarn \abs{\gausswone_i}) \gausswtwo_i 
            \right|
            \lesssim
            \eta \rho
        \end{align}
    \item[$T_{6,i}$]  We can bound the term involving $T_{6,i}$ by
        \begin{align}
            \left| \frac{1}{n} \sum_{i=1}^n \idvec[ 1- \xi_i\nubarn \abs{\gausswone_i}] 2 \xi_i \deltanu \abs{\gausswone_i} \eta \gausswtwo_i \right| 
            \leq
            2 \eta \abs{\deltanu}  \left| \frac{1}{n} \sum_{i=1}^n \gausswone_i \gausswtwo_i \right| \lesssim \eta \abs{\deltanu} \rho
        \end{align}
    % In both of previous points we use that:
    % \begin{align}
    %     \left| \frac{1}{n} \sum_{i=1}^n \gausswone_i \gausswtwo_i \right| 
    %     =
    %     \norm{\gausswone} \left| \frac{1}{n} \left( \frac{\gausswone}{\norm{\gausswone}}\right)^T \gausswtwo \right| 
    %     =
    %     \frac{\norm{\gausswone}}{\sqrt{n}} \left| \tilde{\gausswtwo} \right|
    % \end{align}
    % where $\tilde{\gausswtwo} \sim \NNN (0,\frac{1}{n})$. Then we have that $\prob \left( \norm{\gausswone}\geq \sqrt{2n} \right) \leq \exp(-n/16)$ and $ \prob \left( \abs{\tilde{\gausswtwo}} \geq \sqrt{\frac{2\log(d)}{n}} \right) \leq \frac{2}{d}$. Hence we have that:
    % \begin{align}
    %     \left| \frac{1}{n} \sum_{i=1}^n \gausswone_i \gausswtwo_i \right| \lesssim \rho
    % \end{align}
    % with probability at least $ 1 - \exp(-n/16) - \frac{2}{d} $.
\end{enumerate}

\end{proof}

\subsection{Uniform convergence of $\fnclas$ to $\fclas$}
\label{apx:subsec:unif_conv}
We use the notation as described in the beginning of Appendix~\ref{apx:technicallemmas} and denote with $\Gausswone,\Gausswtwo \sim \Normal(0,1)$ independent Gaussian random variables and let $\xirv \vert \Gausswone \sim \probsigma(.;\Gausswone)$. 

Let $P$ be the distribution of $(\Gausswone,\Gausswtwo,\xirv)$ and let $P_n = \frac{1}{n} \sumin \delta_{\gausswone_i,\gausswtwo_i,\xi_i}$. Define $\gnueta(Z,\xi) = \possq{1-\xirv \abs{\Gausswone}\nu - \eta\Gausswtwo}$ with $Z = (\Gausswone, \Gausswtwo)$. Note that using this notation, we have $P\gnueta = \EE \gnueta(Z,\xi) = \fclas (\nu,\eta)$, with $f$ from Equation~\eqref{eq:defoff}, and $P_n \gnueta = \fnclas(\nu,\eta)$. Let $\GB = \left\{ \gnueta\ |\ \nu \leq \boundnuone, \eta \leq \boundetaone \right\}$ with positive constants $\boundetaone$ and $\boundnuone$. Moreover define:
\begin{align}
    \norm{ P_n - P }_{\GB} := \sup_{\gnueta \in \GB} \abs{(P_n - P)\gnueta}
\end{align}
We then obtain from an application of Theorem 4 in \cite{adamczak2008tail} the following uniform convergence result
\begin{proposition}
\label{prop:unifConv}
There exist positive universal constants $c_1,c_2,c_3>0$ such that
    % \begin{align}
    %     \prob \left( \norm{ P_n - P }_{\GB} \leq t \right) \geq 1 - \exp \left( - \frac{ n t^2 }{2 ( 1 + \delta ) c_{\sigma} \left( \boundnuone^4 + \boundetaone^4 \right) } \right) - 3\exp \left( -\frac{ n t }{C c_\psi \log n} \right)
    % \end{align}
    \begin{align}
        \prob \left( \norm{ P_n - P }_{\GB} \leq \frac{c_1}{\sqrt{n}} + \epsilon \right) 
        \geq
        1 - c_2\exp \left( -c_3 \frac{ n \epsilon^2 }{\log n} \right)
    \end{align}
\end{proposition}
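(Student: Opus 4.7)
The plan is to verify the hypotheses of Theorem~4 in \cite{adamczak2008tail}, a Talagrand-type deviation inequality for empirical processes indexed by function classes whose envelope lies in the Orlicz class $L^{\psi_1}$.

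\textbf{Envelope.} For every $\gnueta \in \GB$, the elementary inequalities $(a)_+\leq|a|$ and $|a+b+c|\leq|a|+|b|+|c|$ yield, uniformly in $\nu\leq\boundnuone$ and $\eta\leq\boundetaone$,
\begin{equation*}
|\gnueta(Z,\xirv)|\leq F(Z):=(1+\boundnuone|\Gausswone|+\boundetaone|\Gausswtwo|)^2.
\end{equation*}
As a square of a sub-Gaussian, $F\in L^{\psi_1}$ with $\|F\|_{\psi_1}$ bounded by a constant depending only on $\boundnuone,\boundetaone$. In particular, the weak variance $\sigma^2:=\sup_{g\in\GB} Pg^2\leq PF^2$ is an absolute constant.

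\textbf{Complexity.} The class $\GB$ is indexed by the compact parameter set $[0,\boundnuone]\times[0,\boundetaone]$, and the map $(\nu,\eta)\mapsto\gnueta(Z,\xirv)$ is Lipschitz with a random Lipschitz constant polynomial in $|\Gausswone|,|\Gausswtwo|$. This gives polynomial $L^2(P_n)$-covering numbers for $\GB$, so a standard chaining argument (Dudley's entropy integral, or a finite-dimensional parametric bound) yields
\begin{equation*}
\EE\|P_n-P\|_{\GB}\lesssim \frac{1}{\sqrt{n}},
\end{equation*}
which accounts for the $c_1/\sqrt{n}$ term in the statement.

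\textbf{Deviation.} Applying Theorem~4 of \cite{adamczak2008tail} with $\alpha=1$ gives, for every $t>0$,
\begin{equation*}
\prob\!\left(\|P_n-P\|_{\GB}\geq K(\EE\|P_n-P\|_{\GB}+t)\right)\leq \exp\!\left(-c\min\!\left(\tfrac{nt^2}{\sigma^2},\tfrac{nt}{(\log n)\|F\|_{\psi_1}}\right)\right).
\end{equation*}
Setting $t=\epsilon$ and observing that, since $\sigma^2$ and $\|F\|_{\psi_1}$ are absolute constants and $\epsilon$ lies in a bounded range, $\min(n\epsilon^2/\sigma^2,n\epsilon/((\log n)\|F\|_{\psi_1}))\gtrsim n\epsilon^2/\log n$, the right-hand side is dominated by $\exp(-c_3 n\epsilon^2/\log n)$, giving the claimed bound after combining with the expectation estimate.

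\textbf{Main obstacle.} Since $\GB$ is a two-parameter family with a polynomial envelope in Gaussian inputs, the proof reduces to bookkeeping. The only conceptually delicate point, and the source of the $\log n$ factor in the exponent, is that the envelope $F$ is only sub-exponential (being the square of a sub-Gaussian): the truncation level $\|\max_{i\leq n} F(Z_i)\|_\infty$ grows like $\log n$, and this enters Adamczak's Bernstein-type bound through the Orlicz norm of the envelope, preventing a sharper sub-Gaussian rate.
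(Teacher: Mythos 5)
Your proposal is correct and follows essentially the same route as the paper: both apply Theorem~4 of \cite{adamczak2008tail} after bounding the envelope/Orlicz quantity $\psi_{\GB}$ by $O(\log n / n)$, the weak variance by $O(1/n)$, and the expected supremum $\EE\|P_n-P\|_{\GB}$ by $O(1/\sqrt n)$ via a Rademacher/chaining estimate. The paper is slightly more explicit in computing $\psi_{\GB}$ (splitting on whether $\max_i |z_i|$ exceeds $\sqrt{2\log n}$) and in bounding the Rademacher complexity by contraction, but the overall argument and the source of the $\log n$ factor are the same as in your sketch.
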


\begin{proof}

The proof follows from an application of the tail-bound result from Theorem 4 in \cite{adamczak2008tail}. More precisely, 

\begin{theorem}[Corollary of Theorem 4 in \cite{adamczak2008tail}]
\label{thm:unifconvmain}
For any $0<t<1$, $\delta>0$, $\alpha \in (0,1] $ there exists a constant $C = C ( \alpha, t, \delta )$ such that
\begin{align}
\label{eq:tailuniformconv}
    \prob \left( \norm{ P_n - P }_{\GB} \geq (1+t) \EE \norm{ P_n - P }_{\GB} + \epsilon \right)
    \leq
    \exp & \left( -\frac{\epsilon^2}{2(1+\delta)\sigma_{\GB}^2} \right) + 3 \exp \left( - \left( \frac{\epsilon}{C \psi_{\GB}} \right)^{\alpha} \right)
\end{align}
with variance term 
\begin{equation}
    \sigma_{\GB}^2 :=  \sup_{ \gnueta \in \GB } \frac{1}{n}\left( \EE \left[ \gnueta^2 \right] - \left( \EE \left[ \gnueta \right] \right)^2 \right),
\end{equation}
and $\psi_{\GB}$ defined as:
\begin{align}
    \psi_{\GB} = \orlicznorm{\max_{1\leq i\leq n} \sup_{\gnueta \in \GB} \frac{1}{n} \left\vert\gnueta(z_i,\xi_i) - \EE [\gnueta] \right\vert}
\end{align}
where $\orlicznorm{\cdot}$ denotes Orlicz norm. 
\end{theorem}

We choose $\alpha = 1$ and explicitly show that condition from Theorem 4 in \cite{adamczak2008tail} requiring finite Orlicz norms is indeed satisfied for this choice of $\alpha$.
We separate the proof into two steps, where in a first step we bound the $ \psi_{\GB}$ and then apply Theorem~\ref{thm:unifconvmain}. 

\paragraph{Step 1: Bounding $\psi_{\GB}$}

By the definition of Orlicz norms, $\psi_{\GB}$ is given by:
\begin{align}\label{eq_psi_GB_def}
    &\psi_{\GB} 
     =
    \orlicznorm{ \max_{1 \leq i \leq n} \sup_{\nu,\eta} \frac{1}{n} \abs{ \possq{ 1 - \xi_i \abs{\gausswone_i} \nu - \gausswtwo_i \eta } - \EE \left[ \possq{ 1 - \xi \abs{\gausswone} \nu - \gausswtwo \eta } \right] } } 
    \\ 
    & = \inf \left\{ \lambda>0:\ \EE \left[ \exp \left( \frac{1}{\lambda} \max_i \sup_{\nu,\eta} \frac{1}{n}
    \abs{ \possq{ 1 - \xi_i \abs{\gausswone_i} \nu - \gausswtwo_i \eta} - \EE \left[ \possq{ 1 - \xi \abs{\gausswone} \nu - \gausswtwo \eta } \right] } \right) - 1 \right] \leq 1\right\}
\end{align}
Applying the triangle inequality and using that $\nu,\eta$ are bounded by constants, we can bound the expectation from the expression above as:
\begin{align}
    & \EE \left[ \exp \left( \frac{1}{\lambda} \max_i \sup_{\nu,\eta} \frac{1}{n}
    \abs{ \possq{ 1 - \xi_i \abs{\gausswone_i} \nu - \gausswtwo_i \eta} - \EE \left[ \possq{ 1 - \xi \abs{\gausswone} \nu - \gausswtwo \eta } \right] } \right) \right]
    \\
    & \leq 
    % \EE \left[ \exp \left(  \frac{1}{n\lambda} \max_i \sup_{\nu,\eta} 
    % \possq{ 1 - \xi_i \abs{\gausswone_i} \nu -  \gausswtwo_i \eta} + \frac{1}{n\lambda}\sup_{\nu,\eta} \EE \left[ \possq{ 1 - \xi \abs{\gausswone} \nu - \gausswtwo \eta } \right]  \right) \right]
    % \\
    % & = 
    \EE \left[ \exp \left(  \frac{1}{n\lambda} \max_i \sup_{\nu,\eta} 
    \possq{ 1 - \xi_i \abs{\gausswone_i} \nu -  \gausswtwo_i \eta} \right) \right]
     \exp \left( \frac{1}{n\lambda}\sup_{\nu,\eta} \EE \left[ \possq{ 1 - \xi \abs{\gausswone} \nu - \gausswtwo \eta } \right] \right) 
    \\
    & \leq
    \EE \left[ \exp \left(  \frac{B}{n\lambda}\zmax^2 \right) \right] \exp \left( \frac{B}{n\lambda} \right)  
\end{align}
with $B := 3\left( 1 + \boundnuone^2 + \boundetaone^2 \right)$ and $z_{(1)} = \max_{1\leq i\leq 2n} z_i = \max \left\{ \abs{\gausswone_1},\abs{\gausswtwo_1},\dots,\abs{\gausswone_n},\abs{\gausswtwo_n} \right\}$. Now we split the expectation from the above inequality into two terms:
\begin{align}
   \tilde{T}_1 = \EE \left[\onevec[\zmax< \sqrt{2 \log n}] \exp \left(  \frac{B}{n\lambda}\zmax^2 \right) \right]
%   &= \int_{-\sqrt{2\log n}}^{\sqrt{2\log n}}\cdots\int_{-\sqrt{2\log n}}^{\sqrt{2\log n}} \exp \left( \frac{B}{n\lambda}  \zmax^2 \right) \prod_{i=1}^{2n} \frac{ \exp \left( -\frac{1}{2} z_i^2 \right) }{\sqrt{2\pi}}dz_i
   \leq
   \exp \left( \frac{2B\log n}{n\lambda} \right)
\end{align}
and
\begin{align}
    \tilde{T}_2 
    &=
    \EE \left[\onevec[\zmax\geq \sqrt{2 \log n}] \exp \left(  \frac{B}{n\lambda}\zmax^2 \right) \right]
    =
    4n \int_{\sqrt{2 \log n}}^{\infty} \int_{-z_1}^{z_1} \cdots \int_{-z_1}^{z_1} \exp \left( \frac{B}{n\lambda}  z_1^2 \right) \prod_{i=1}^{2n} \frac{\exp(-\frac{1}{2}z_i^2)}{\sqrt{2\pi}} dz_i
    \\
    &\leq
    4n \int_{\sqrt{2 \log n}}^{\infty} \frac{1}{\sqrt{2\pi}} \exp \left( -\frac{1}{2} z_1^2 \right) \exp \left( \frac{B}{n\lambda} z_1^2 \right) dz_1 
    \lesssim
    \frac{n^{\frac{2B}{n\lambda}}}{\sqrt{\log n}(1-\frac{2B}{n\lambda})}
\end{align}
where we assumed that $\lambda > \frac{2B}{n}$. Now choosing $\lambda = c_{\lambda} \frac{\log n}{n}$ with a positive constant $c_{\lambda}$ sufficiently large, we find that the condition in Equality \eqref{eq_psi_GB_def} is satisfied for this $\lambda$, which implies that $\psi_{\GB} \leq c_{\lambda}\frac{\log n}{n}$.
\paragraph{Step 2: Proof of the statement}
To apply Theorem~\ref{thm:unifconvmain}, we also need to bound the variance $\sigma_{\GB}^2$ and use that $\EE \norm{P_n-P}_{\GB} \leq 2 \rademacher(\GB)$  where $ \rademacher(\GB)$ is the Rademacher complexity given by $\rademacher(\GB) = \EE \left[ \sup_{\gnueta\in \GB} \left|\frac{1}{n}\sumin \epsilon_i g_{\eta,v}(z_i,\xi_i) \right| \right]$. 
The bound on the variance follows from a straightforward calculation:
\begin{align}
    \sigma_{\GB}^2
   % & =  \sup_{ \gnueta \in \GB } \frac{1}{n}\left( \EE \left[ \gnueta^2 \right] - \left( \EE \left[ \gnueta \right] \right)^2 \right)
    \leq 
    \frac{1}{n} \sup_{ \gnueta \in \GB } \EE \left[ \gnueta^2 \right]
    % \leq
    % \frac{1}{n} \EE \left[ 1 + \boundnuone \nu + \boundetaone \eta \right]^4 
    \leq
    \frac{1}{n} c_{\sigma_{\GB}} \left( 1 + \boundnuone^4 + \boundetaone^4 \right)
\end{align}
which holds for some positive universal constant $c_{\sigma_{\GB}}>0$. 

Next, note that we can upper bound the Rademacher complexity using the triangle inequality and the fact that $(\cdot)_+$ is $1$-Lipschitz:
\begin{align}
     \rademacher(\GB) &\leq 
    \EE \left[\sup_{\nu\leq \boundnuone,\eta\leq \boundetaone}\left| \frac{1}{n}\sumin \epsilon_i (1-\xi_i \abs{\gausswone_i}\nu - \eta\gausswtwo_i)^2\right| \right] 
    \\ 
    &\lesssim \EE \left[ \sup_{\nu\leq \boundnuone,\eta\leq \boundetaone}\left| \frac{1}{n}\sumin \epsilon_i\right| \right] +  \EE \left[ \sup_{\nu\leq \boundnuone,\eta\leq \boundetaone}\left| \frac{1}{n}\sumin \epsilon_i (\xi_i \abs{\gausswone_i}\nu)^2\right| \right] \\
    &+  \EE \left[ \sup_{\nu\leq \boundnuone,\eta\leq \boundetaone}\left| \frac{1}{n}\sumin \epsilon_i ( \eta\gausswtwo_i)^2\right| \right] 
    \lesssim \frac{1}{\sqrt{n}}
\end{align}
where the last line follows from applying the standard concentration results. Thus, we obtain:
    \begin{align}
        % \prob \left( \norm{ P_n - P }_{\GB} \geq \kappa_1 \sqrt{\frac{\log(n)}{n}} + t \right) 
        \prob \left( \norm{ P_n - P }_{\GB} \geq 2(1+t) \mathcal{R}_{\GB} + \epsilon \right) 
        \leq
        \exp \left( - c_2 n \epsilon^2 \right) + 3\exp \left( -c_3 \frac{ n \epsilon }{\log n} \right)
    \end{align}
with $c_2^{-1} = 2(1+\delta)c_{\sigma_{\GB}} B^2$ and $c_3^{-1} = C c_{\lambda}$, which concludes the proof. 

%Recalling result from Lemma \ref{lm:rademacher} concludes the proof.
\end{proof}

\section{Proof of Lemma~\ref{lm:dualnormconc}}
\label{apx:proofdualnormconc}
\label{apx:concentrationdual}

First note that by taking subgradients we directly obtain $\langle \wgrad, H\rangle = \norm{H}_q$ and 
$\norm{\wgrad}_2^2 = \left(\frac{\norm{H}_{2q/p}}{\norm{H}_q}\right)^{2q/p}$. 

Recall the definitions $\dmoment{q} = \EE \|H\|_{q}$ and
$\mutild := \left( \frac{\dmoment{2q/p}}{\dmoment{q}} \right)^{2q/p}$. We use the following well-known concentration result on $\ell_q$-norms to control these quantities.
 \begin{theorem}[Corollary of Theorem 1 \cite{paouris_2017}]
 \label{cor:hqbound1}
For all sufficiently large $d$ and any $2<q <c_1\log(d)$ and $\epsilon \in (0,1)$, we have: 
\begin{equation}
\label{eq:paouris}
    \prob(|\|H\|_q - \dmoment{q} | \geq \epsilon \EE \|H\|_q ) \leq c_2 \exp(-c_3 \min(q^2 2^{-q} \epsilon^2 d, (\epsilon d )^{2/q}))
\end{equation}
where $c_1\in (0,1),c_2,c_3>0$ are universal constants.
\end{theorem}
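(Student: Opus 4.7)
The plan is to deduce \Cref{cor:hqbound1} as a direct specialization of Theorem~1 of \cite{paouris_2017}, which establishes a sharp two-regime Bernstein-type concentration inequality for norms of standard Gaussian vectors. Applied to $f(H) = \|H\|_q$ with $H \sim \NNN(0, I_d)$, their theorem bounds $\prob(|f(H) - \EE f(H)| \geq t)$ in terms of (i) a variance-type parameter controlling the subgaussian regime for small $t$, and (ii) a Lipschitz/tail parameter controlling the subexponential regime for larger $t$ driven by a small number of atypically large coordinates. The remaining task is to compute both quantities explicitly for the $\ell_q$-norm in the range $2 < q < c_1\log d$ and identify the resulting exponent with \eqref{eq:paouris}.

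For the first parameter, I would start from the moment identity $\EE|H_i|^q = 2^{q/2}\Gamma((q+1)/2)/\sqrt{\pi}$ together with Stirling's formula, which yields $\dmoment{q} = \EE\|H\|_q \asymp \sqrt{q}\, d^{1/q}$ uniformly for $2 < q \leq c_1 \log d$ (after passing from $(\EE\|H\|_q^q)^{1/q}$ to $\EE\|H\|_q$ via the concentration of $\|H\|_q^q$). The sharp variance bound underlying the Paouris--Valettas--Zinn framework then produces the superconcentration estimate $\mathrm{Var}(\|H\|_q)/\dmoment{q}^2 \lesssim 2^q/(q^2 d)$, which is much tighter than the $1/\dmoment{q}^2 \asymp 1/(q d^{2/q})$ one would obtain from Borell--TIS alone together with the unit Lipschitz constant of $\|\cdot\|_q$ with respect to $\|\cdot\|_2$ (valid for $q\geq 2$). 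Substituting $t = \epsilon\dmoment{q}$ into the subgaussian regime of the PVZ inequality then produces the first term $q^2 2^{-q} \epsilon^2 d$ of the exponent in \eqref{eq:paouris}.

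The second term $\exp(-c(\epsilon d)^{2/q})$ emerges from the large-deviation regime of PVZ's inequality, which one can equivalently derive from Markov's inequality applied to a suitable high moment of $\sum_i|H_i|^q = \|H\|_q^q$ using the i.i.d.\ structure of the coordinates; intuitively this regime reflects the contribution of a handful of entries of size $\gtrsim (\epsilon d)^{1/q}$ to the deviation of $\|H\|_q$ from its mean. Combining both bounds via a union bound then places the $\min$ inside the exponent of \eqref{eq:paouris}.

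The main obstacle is not any single ingredient but rather the careful bookkeeping of Stirling constants and the explicit $q$-dependence needed to match the abstract PVZ exponent to the advertised form. In particular, one must trace the interplay between $\dmoment{q}$ and the variance estimate through the chain $t^2/\mathrm{Var} = \epsilon^2\dmoment{q}^2/\mathrm{Var}$ so that the final subgaussian exponent is exactly $q^2 2^{-q} \epsilon^2 d$ rather than a version off by a polynomial factor in $q$, and one must also verify that the two regimes glue correctly as $q$ approaches $c_1\log d$, where $2^q$, $(\epsilon d)^{2/q}$, and $d$ are all competing orders. Once these bookkeeping steps are in place, \eqref{eq:paouris} follows by direct substitution into Theorem~1 of \cite{paouris_2017}.
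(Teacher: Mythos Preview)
Your proposal is correct and aligns with the paper's treatment: the paper does not give its own proof of this statement but simply cites it as a corollary of Theorem~1 in \cite{paouris_2017}, and your sketch correctly fills in the specialization (identifying the variance-regime and tail-regime exponents for $\|H\|_q$). There is nothing further to compare, as the paper offers no additional argument beyond the citation.
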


The proof of the lemma follows by showing that the choice of $\epsilon$ leads to sufficiently small terms in the exponent on the RHS of Equation~\eqref{eq:paouris}. More precisely, we show that there  exists a universal constant $c>0$, such that for all $n,d$ sufficiently large with $d \geq n$ and  $2\leq  q \leq c \log\log(d)$, we have that
\begin{equation}
    \min(q^2 2^{-q} \epsilon ^2 d, (\epsilon  d )^{2/q}) \geq \log(d), \label{eq:proofconctempeq}
\end{equation}
\label{claim_hq_proof}
and similarly for $2q/p$ instead of $q$. 

Equation~\eqref{eq:dualnormconc1} then follows directly from the claim and Theorem~\ref{cor:hqbound1}. To see Equation~\eqref{eq:dualnormconc2}, we have that with probability $\geq 1-c_1 d^{-c_2}$,
\begin{align}
     \norm{H}_{2q/p}^{2q/p} = \dmoment{2q/p}^{2q/p} \left(1+ O(\epsilon )\right)^{2q/p}
     = \dmoment{2q/p}^{2q/p} \left(1+  O( q\epsilon ))\right),
\end{align}
where we have used in the last equality that $q\lesssim \log\log(d)$ and that $\epsilon  = O(\log^{-1}(d))$ (by assumption on $d,n$ and $\epsilon$). Furthermore, the proof follows from the fact that a similar argument also applies to $\norm{H}_{q}^{2q/p}$. 

Thus, the only thing left to show is Equation~\eqref{eq:proofconctempeq}. 
We separately check the different conditions in Theorem~\ref{cor:hqbound1}. First, note that it is sufficient to only consider the case where $q = c \log\log(d)$. We separate the proof into two cases where   $n^2 \geq d \log^{c_2}(d)$  and $n^2 \leq d \log^{c_2}(d)$ with constant $c_2>0$. 
\paragraph{Case  $n^2 \geq d \log^{c_2}(d) \implies \epsilon  =\frac{n}{d}$} 
%For the first case we note that the  RHS in Equation~\eqref{eq:proofconctempeq} monotonically increases with $\epsilon$, and thus it is sufficient if we prove the result for $\epsilon  =\frac{n}{d}$.
Looking only at the first term on the RHS in Equation~\eqref{eq:proofconctempeq}, we have 
\begin{align}
    \log(q^2 2^{-q} \epsilon ^2 d) = 2\log(q) - q\log(2) + 2\log(n) - \log(d) \geq \log\log(d),
\end{align}
Furthermore, looking only at the second term on the RHS in  Equation~\eqref{eq:proofconctempeq}, we have 
\begin{align}
   \frac{2}{q} \log(\epsilon d)=  \frac{2}{q}\log(n) \geq \log\log(d),
\end{align}
which holds also under weaker assumptions on $n,d$. Hence the proof for the first case is complete. 

\paragraph{Case $n^2 \leq d \log^{c_2}(d) \implies \epsilon  =\frac{\log^{c_1}(d)}{n}$.} 
From straight forward calculation, we note that there exists a universal constant  $c_1>0$ only depending on $c_2$ such that 
\begin{align}
    \log(q^2 2^{-q} \epsilon ^2 d) = 2\log(q) - q\log(2) + \log(d) - 2\log(n) + 2c_1\log\log(d) \geq \log\log(d),
\end{align}
for any $n$ such that $n^2 \leq d \log^{c_2}(d)$. 
Furthermore, under the same assumption on $d,n$, we have
\begin{align}
   \frac{2}{q} \log(\epsilon d)=  \frac{2}{q} (\log(d)-\log(n) +c_1 \log\log(d)) \geq \log\log(d).
\end{align}

Finally the expectations $\mutild, \dmoment{q}$ are directly obtained using the following well-known result on the expectations of the $\ell_q$ norms of Gaussian vectors
\begin{proposition}[Proposition 2.4 \cite{paouris_2017}, \cite{schechtman_89}]
\label{cor:hqbound2}
For all $q \leq \logjone{d}$, we have that $\dmoment{q} \asymp \sqrt{q} d^{1/q}$.
\end{proposition}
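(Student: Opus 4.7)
The plan is to establish $\EE \|H\|_q \asymp \sqrt{q}\, d^{1/q}$ for a standard Gaussian $H \in \RR^d$ via a direct moment computation for the upper bound and a concentration/extremal argument for the lower bound. Let $\mu_q := \EE|Z|^q$ for $Z \sim \NNN(0,1)$. From the explicit formula $\mu_q = (2^{q/2}/\sqrt{\pi})\,\Gamma((q+1)/2)$ and Stirling's approximation, I would first record two elementary scalar facts: $\mu_q^{1/q} \asymp \sqrt{q}$ uniformly in $q \geq 2$, and $\mu_{2q}/\mu_q^2 \asymp 2^{q}$. These are the only Gaussian moment facts the rest of the proof will use.

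The upper bound is immediate from Jensen's inequality applied to the concave map $t \mapsto t^{1/q}$:
\begin{equation*}
\EE \|H\|_q \;=\; \EE \Bigl(\sum_{i=1}^d |H_i|^q\Bigr)^{1/q} \;\leq\; \Bigl(d\mu_q\Bigr)^{1/q} \;\asymp\; \sqrt{q}\, d^{1/q}.
\end{equation*}

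For the lower bound I would split into two regimes. In the sub-exponential regime $q \leq c_0 \log d$ with a small constant $c_0 < \log 2$, write $\|H\|_q = d^{1/q} \bigl(\tfrac{1}{d}\sum_i |H_i|^q\bigr)^{1/q}$ and apply Chebyshev to the average $A := \tfrac{1}{d}\sum_i |H_i|^q$. Using independence and $\mu_{2q}/\mu_q^2 \asymp 2^q$, one gets $\mathrm{Var}(A)/(\EE A)^2 \lesssim 2^q/d$, which is a small constant here, so $A \geq \tfrac{1}{2}\mu_q$ with probability at least $\tfrac{1}{2}$. Taking expectations yields $\EE \|H\|_q \gtrsim d^{1/q}\mu_q^{1/q} \asymp \sqrt{q}\, d^{1/q}$. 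In the complementary regime $c_0 \log d \leq q \leq \log d$, the factor $d^{1/q}$ is bounded between absolute constants, so the target bound reduces to $\sqrt{q} \asymp \sqrt{\log d}$, which follows from $\|H\|_q \geq \|H\|_\infty$ and the classical $\EE \|H\|_\infty \asymp \sqrt{\log d}$.

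The main obstacle is gluing the two regimes cleanly: Chebyshev-based concentration of $A$ is tight only up to $q \approx \log_2 d$ because $\mu_{2q}/\mu_q^2$ grows like $2^q$, and it is precisely this transition which explains why $q \leq \log d$ is the stated upper limit. Beyond $\log d$, the extremal coordinates of $H$ begin to dominate $\|H\|_q^q$ and the scaling $\sqrt{q}\, d^{1/q}$ no longer captures the typical size, so a different analysis would be required outside this range; the argument of Schechtman~\cite{schechtman_89} reproduced in~\cite{paouris_2017} essentially follows this dichotomy.
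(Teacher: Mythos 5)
The paper does not prove this proposition; it is imported verbatim as a citation of Proposition~2.4 in Paouris et al.~\cite{paouris_2017} (who in turn trace it to Schechtman~\cite{schechtman_89}), so there is no in-paper proof to compare against. Your argument is a correct, self-contained proof and follows the standard approach: Jensen for the upper bound, and for the lower bound a two-regime split between the bulk (Chebyshev on $A=\tfrac1d\sum_i|H_i|^q$) and the near-$\ell_\infty$ range (compare with $\EE\|H\|_\infty\asymp\sqrt{\log d}$). The scalar facts you invoke check out: $\mu_q^{1/q}\asymp\sqrt{q}$ uniformly, and indeed $\mu_{2q}/\mu_q^2 = \Theta(2^q)$ (the asymptotic ratio tends to $2^q/\sqrt{2}$ by the Gamma duplication formula / Stirling), so the variance control $\mathrm{Var}(A)/(\EE A)^2\lesssim 2^q/d$ is right.

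Two small remarks, neither a gap. First, the cutoff $c_0<\log 2$ is more conservative than needed; the requirement is $2^{c_0\log d}=d^{c_0\log 2}\ll d$, i.e.\ $c_0<1/\log 2\approx 1.44$, and any $c_0\in(0,1)$ both satisfies this and ensures the two regimes cover $(0,\log d]$. Second, your Chebyshev step implicitly needs $d$ large enough that $2^q/d$ is below an absolute constant across the whole regime $q\le c_0\log d$; since the proposition's $\asymp$ hides universal constants this is harmless but worth stating. Overall, a clean and correct reconstruction of the cited fact.
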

This concludes the proof.

% \end{proof}

%%%%%%%%%%%%%%%%%%%%%%%%%%%%%%%%%%%%%%%%%%%%%%%%%%%%%%%%%%%%%%%%%%%%%%%%%%%%%%%
%%%%%%%%%%%%%%%%%%%%%%%%%%%%%%%%%%%%%%%%%%%%%%%%%%%%%%%%%%%%%%%%%%%%%%%%%%%%%%%

\end{document}

% This document was modified from the file originally made available by
% Pat Langley and Andrea Danyluk for ICML-2K. This version was created
% by Iain Murray in 2018, and modified by Alexandre Bouchard in
% 2019 and 2021 and by Csaba Szepesvari, Gang Niu and Sivan Sabato in 2022. 
% Previous contributors include Dan Roy, Lise Getoor and Tobias
% Scheffer, which was slightly modified from the 2010 version by
% Thorsten Joachims & Johannes Fuernkranz, slightly modified from the
% 2009 version by Kiri Wagstaff and Sam Roweis's 2008 version, which is
% slightly modified from Prasad Tadepalli's 2007 version which is a
% lightly changed version of the previous year's version by Andrew
% Moore, which was in turn edited from those of Kristian Kersting and
% Codrina Lauth. Alex Smola contributed to the algorithmic style files.